\DeclareMathAlphabet{\mathsfit}{\encodingdefault}{\sfdefault}{m}{sl}
\SetMathAlphabet{\mathsfit}{bold}{\encodingdefault}{\sfdefault}{bx}{n}
\def\gB{{\mathcal{B}}}
\def\gD{{\mathcal{D}}}
\def\gL{{\mathcal{L}}}
\def\gR{{\mathcal{R}}}
\def\gS{{\mathcal{S}}}
\def\gT{{\mathcal{T}}}
\DeclareMathOperator*{\argmin}{arg\,min}
\newtheorem{theorem}{Theorem}
\newtheorem{proposition}{Proposition} 
\newtheorem{lemma}{Lemma} 
\newtheorem{assum}{Assumption} 
\newtheorem{definition}{Definition}
\icmltitlerunning{Bilevel Optimization: Convergence Analysis and Enhanced Design}
\begin{document}

\twocolumn[
\icmltitle{Bilevel Optimization: Convergence Analysis and Enhanced Design}




\begin{icmlauthorlist}
\icmlauthor{Kaiyi Ji}{to}
\icmlauthor{Junjie Yang}{to}
\icmlauthor{Yingbin Liang}{to}
\end{icmlauthorlist}

\icmlaffiliation{to}{Department of Electrical and Computer Engineering, The Ohio State University}

\icmlcorrespondingauthor{Kaiyi Ji}{ji.367@osu.edu}

\icmlkeywords{Machine Learning, ICML}

\vskip 0.3in
]



\printAffiliationsAndNotice{}  

\begin{abstract}
Bilevel optimization has arisen as a powerful tool for many machine learning problems such as meta-learning, hyperparameter optimization, and reinforcement learning. In this paper, we investigate the nonconvex-strongly-convex bilevel optimization problem. For deterministic bilevel optimization, we provide a comprehensive convergence rate analysis for two popular algorithms respectively based on approximate implicit differentiation (AID) and iterative differentiation (ITD). For the AID-based method, we orderwisely improve the previous convergence rate analysis due to a more practical parameter selection as well as a warm start strategy, and for the ITD-based method we establish the first theoretical convergence rate. Our analysis also provides a quantitative comparison between ITD and AID based approaches. For stochastic bilevel optimization, we propose a novel algorithm named stocBiO, which features a sample-efficient hypergradient estimator using efficient Jacobian- and Hessian-vector product computations. We provide the convergence rate guarantee for stocBiO, and show that stocBiO outperforms the best known computational complexities orderwisely with respect to the condition number $\kappa$ and the target accuracy $\epsilon$. We further validate our theoretical results and demonstrate the efficiency of bilevel optimization  algorithms by the experiments on meta-learning and hyperparameter optimization. 
\end{abstract}

\section{Introduction} 
Bilevel optimization has  received significant attention recently and become an influential framework in various machine learning applications including meta-learning~\citep{franceschi2018bilevel,bertinetto2018meta,rajeswaran2019meta,ji2020convergence}, hyperparameter optimization~\citep{franceschi2018bilevel,shaban2019truncated,feurer2019hyperparameter}, reinforcement learning~\citep{konda2000actor,hong2020two}, and signal processing~\citep{kunapuli2008classification,flamary2014learning}. A general bilevel optimization takes the following formulation. 
\begin{align}\label{objective_deter}
&\min_{x\in\mathbb{R}^{p}} \Phi(x):=f(x, y^*(x)) \nonumber
\\&\;\;\mbox{s.t.} \quad y^*(x)= \argmin_{y\in\mathbb{R}^{q}} g(x,y),
\end{align}
where the upper- and inner-level functions $f$ and $g$ are both jointly continuously differentiable. The goal of~\cref{objective_deter} is to minimize the objective function $\Phi(x)$ with respect to (w.r.t.)~$x$, where $y^*(x)$ is obtained by solving the lower-level minimization problem. In this paper, we focus on the setting where the lower-level function $g$ is strongly convex w.r.t.~$y$, and the upper-level objective function $\Phi(x)$ is nonconvex. Such geometrics commonly exist in many applications such as meta-learning and hyperparameter optimization, where $g$ corresponds to an empirical loss  with a strongly-convex regularizer and $x$ are parameters of neural networks.

A broad collection of algorithms have been proposed to solve bilevel optimization problems. For example, \citealt{hansen1992new,shi2005extended,moore2010bilevel} reformulated the bilevel problem in~\cref{objective_deter} into a single-level constrained problem based on the optimality conditions of the lower-level problem. However, such type of methods often involve  a large number of constraints, and  are hard to implement in machine learning applications. Recently, more efficient gradient-based bilevel optimization algorithms have been proposed, which can be generally categorized into the approximate implicit differentiation (AID) based approach~\citep{domke2012generic,pedregosa2016hyperparameter,gould2016differentiating,liao2018reviving,ghadimi2018approximation,grazzi2020iteration,lorraine2020optimizing} and the iterative differentiation (ITD) based approach~\citep{domke2012generic,maclaurin2015gradient,franceschi2017forward,franceschi2018bilevel,shaban2019truncated,grazzi2020iteration}. However, most of these studies have focused on the asymptotic convergence analysis, and the nonasymptotic convergence rate analysis (that characterizes how fast an algorithm converges) has not been well explored except a few attempts recently. \citealt{ghadimi2018approximation} provided the convergence rate analysis for the ITD-based approach. \citealt{grazzi2020iteration} provided the iteration complexity for the hypergradient computation via ITD and AID, but did not characterize the  convergence rate for the entire execution of algorithms. 


\begin{table*}[!t]
 \centering
 \caption{Comparison of bilevel deterministic optimization algorithms.}
 \vspace{0.1cm}
 \begin{threeparttable}
  \begin{tabular}{|c|c|c|c|c|c|}
   \hline
Algorithm & Gc($f,\epsilon$) & Gc($g,\epsilon$) & JV($g,\epsilon$) &  HV($g,\epsilon$)    \\\hline\hline
AID-BiO \citep{ghadimi2018approximation}  & $\mathcal{O}(\kappa^4\epsilon^{-1})$ & $\mathcal{O}(\kappa^5\epsilon^{-5/4})$ &  $\mathcal{O}\left(\kappa^4\epsilon^{-1}\right)$& $\mathcal{\widetilde O}\left(\kappa^{4.5}\epsilon^{-1}\right)$ \\ \hline
   \cellcolor{blue!15}{AID-BiO (this paper)} & \cellcolor{blue!15}{$\mathcal{O}(\kappa^3\epsilon^{-1})$} & \cellcolor{blue!15}{$\mathcal{ O}(\kappa^4\epsilon^{-1})$ } & \cellcolor{blue!15}{$\mathcal{O}\left(\kappa^{3}\epsilon^{-1}\right)$} & \cellcolor{blue!15}{ $\mathcal{ O}\left(\kappa^{3.5}\epsilon^{-1}\right)$} \\ \hline
      \cellcolor{blue!15}{ITD-BiO (this paper)} & \cellcolor{blue!15}{$\mathcal{O}(\kappa^3\epsilon^{-1})$} & \cellcolor{blue!15}{$\mathcal{\widetilde O}(\kappa^4\epsilon^{-1})$ } & \cellcolor{blue!15}{$\mathcal{\widetilde O}\left(\kappa^4\epsilon^{-1}\right)$} & \cellcolor{blue!15}{ $\mathcal{\widetilde O}\left(\kappa^4\epsilon^{-1}\right)$} \\ \hline
  \end{tabular}\label{tab:determinstic}
   \begin{tablenotes}
  \item $\mbox{\normalfont Gc}(f,\epsilon)$ and $\mbox{\normalfont Gc}(g,\epsilon)$: 
 number of gradient evaluations w.r.t. $f$ and $g$. \;\;$\kappa:$ condition number.
 \item $\mbox{\normalfont JV}(g,\epsilon)$: number of Jacobian-vector products $\nabla_x\nabla_y g(x,y)v$. Notation $\mathcal{\widetilde O}$: omit $\log\frac{1}{\epsilon}$ terms.
  \item  $\mbox{\normalfont HV}(g,\epsilon)$: number of Hessian-vector products $\nabla_y^2g(x,y) v$.
   \end{tablenotes}
 \end{threeparttable}
  \vspace{-0.1cm}
\end{table*}
\begin{table*}[!t]
\vspace{-0.25cm}
 \centering
 \caption{Comparison of bilevel stochastic optimization algorithms.}
 \vspace{0.1cm}
 \begin{threeparttable}
  \begin{tabular}{|c|c|c|c|c|c|}
   \hline
Algorithm & Gc($F,\epsilon$) & Gc($G,\epsilon$) & JV($G,\epsilon$) &  HV($G,\epsilon$)  
\\\hline\hline
TTSA \citep{hong2020two}  &$ \mathcal{O}(\text{\scriptsize poly}(\kappa)\epsilon^{-\frac{5}{2}})$\tnote{*} & $\mathcal{O}(\text{\scriptsize poly}(\kappa)\epsilon^{-\frac{5}{2}})$& $\mathcal{O}(\text{\scriptsize poly}(\kappa)\epsilon^{-\frac{5}{2}})$&$\mathcal{O}(\text{\scriptsize poly}(\kappa)\epsilon^{-\frac{5}{2}})$
\\ \hline
BSA \citep{ghadimi2018approximation}  & $\mathcal{O}(\kappa^6\epsilon^{-2})$ & $\mathcal{O}(\kappa^9\epsilon^{-3})$ &  $\mathcal{O}\left(\kappa^6\epsilon^{-2}\right)$& $\mathcal{\widetilde O}\left(\kappa^6\epsilon^{-2}\right)$
\\ \hline
   \cellcolor{blue!15}{stocBiO (this paper)} & \cellcolor{blue!15}{$\mathcal{O}(\kappa^5\epsilon^{-2})$} & \cellcolor{blue!15}{$\mathcal{O}(\kappa^9\epsilon^{-2})$ } & \cellcolor{blue!15}{$\mathcal{ O}\left(\kappa^5\epsilon^{-2}\right)$} & \cellcolor{blue!15}{$\mathcal{\widetilde O}\left(\kappa^6\epsilon^{-2}\right)$} \\ \hline 
  \end{tabular}\label{tab:stochastic}
   \begin{tablenotes}
  \item[*] We use $\text{poly}(\kappa)$ because \citealt{hong2020two} does not provide the explicit dependence on $\kappa$.
 \end{tablenotes}
 \end{threeparttable}
\end{table*}

\begin{list}{$\bullet$}{\topsep=0.2ex \leftmargin=0.15in \rightmargin=0.2in \itemsep =0.2in}
\item Thus, the first focus of this paper is to develop a {\em comprehensive and sharper} theory, which covers a broader class of bilevel optimizers via ITD and AID techniques, and more importantly, improves existing  analysis with a more practical parameter selection and orderwisely lower computational complexity. 
%
\end{list}


The {\em stochastic} bilevel optimization often occurs    
in  applications where fresh data are sampled for algorithm iterations (e.g., in reinforcement learning~\citep{hong2020two}) or the sample size of training data is large (e.g., hyperparameter optimization~\citep{franceschi2018bilevel}, Stackelberg game~\citep{roth2016watch}). Typically, the  objective function is given by 
\begin{align}\label{objective}
&\min_{x\in\mathbb{R}^{p}} \Phi(x)=f(x, y^*(x))= 
\begin{cases}
\frac{1}{n}{\sum_{i=1}^nF(x,y^*(x);\xi_i) } \\
\mathbb{E}_{\xi} \left[F(x,y^*(x);\xi)\right] 
\end{cases}\nonumber\\
& \;\mbox{s.t.} \;y^*(x)= \argmin_{y\in\mathbb{R}^q} g(x,y)=
\begin{cases}
\frac{1}{m}{\sum_{i=1}^{m} G(x,y;\zeta_i)} \\
\mathbb{E}_{\zeta} \left[G(x,y;\zeta)\right]
\end{cases}
\end{align}
where $f(x,y)$ and $g(x,y)$ take either the expectation form w.r.t. the random variables $\xi$ and $\zeta$ or the finite-sum form over given 
data $\gD_{n,m}=\{\xi_i,\zeta_j, i=1,...,n;j=1,...,m\}$ often with large sizes $n$ and $m$. During the optimization process, data batch is sampled via the distributions of $\xi$ and $\zeta$ or from the set $\gD_{n,m}$. For such a stochastic setting, \citealt{ghadimi2018approximation} proposed a bilevel stochastic approximation (BSA) method via single-sample gradient and Hessian estimates.  Based on such a method, \citealt{hong2020two} further proposed a two-timescale stochastic approximation (TTSA) algorithm, and showed that TTSA achieves a better trade-off between the complexities of inner- and outer-loop optimization stages than BSA. 
\begin{list}{$\bullet$}{\topsep=0.2ex \leftmargin=0.15in \rightmargin=0.2in \itemsep =0.2in}
\item The second focus of this paper is to design a more sample-efficient algorithm for bilevel  stochastic optimization, which achieves lower computational complexity by orders of magnitude than BSA and TTSA.
\end{list}

\subsection{Main Contributions}
Our main contributions lie in developing shaper theory and provably faster algorithms for nonconvex-strongly-convex bilevel deterministic and stochastic optimization problems, respectively. 
Our  analysis  involves several new developments, which can be of independent interest.  


We first provide a unified convergence rate and complexity analysis for both ITD and  AID based bilevel optimizers, which we call as ITD-BiO and AID-BiO. Compared to existing analysis in~\citealt{ghadimi2018approximation} for AID-BiO that requires a continuously increasing number of inner-loop steps to achieve the guarantee, our analysis allows a constant number of inner-loop steps as often used in practice. In addition, we introduce a warm start initialization for the inner-loop updates and the outer-loop hypergradient estimation,  which allows us to backpropagate the tracking errors to previous loops, and yields an improved computational complexity. As shown in \Cref{tab:determinstic}, the gradient complexities Gc($f,\epsilon$), Gc($g,\epsilon$), and Jacobian- and Hessian-vector product complexities JV($g,\epsilon$) and HV($g,\epsilon$) of AID-BiO to attain an $\epsilon$-accurate stationary point improve those of~\citealt{ghadimi2018approximation} by the order of $\kappa$, $\kappa\epsilon^{-1/4}$, $\kappa$, and $\kappa$, respectively, where $\kappa$ is the condition number. In addition,  our analysis shows that AID-BiO requires less computations of Jacobian- and Hessian-vector products than ITD-BiO by an order of $\kappa$ and $\kappa^{1/2}$, which implies that AID can be more computationally and memory efficient than ITD. 
 
We then propose a stochastic bilevel optimizer (stocBiO) to solve the stochastic bilevel optimization problem in~\cref{objective}. Our algorithm features a {\em mini-batch} hypergradient estimation via implicit differentiation,  where the core design involves 
 a sample-efficient hypergradient estimator via the Neumann series. 
As shown in \Cref{tab:stochastic}, the gradient complexities of our proposed algorithm w.r.t.~$F$  and $G$  improve upon those of BSA~\citep{ghadimi2018approximation} by an order of $\kappa$ and $\epsilon^{-1}$, respectively. In addition, the Jacobian-vector product complexity JV($G,\epsilon$) of our algorithm improves that of BSA by an order of $\kappa$. In terms of the target accuracy $\epsilon$, our computational complexities  improve those of TTSA~\citep{hong2020two} by an order of $\epsilon^{-1/2}$.


Our results further provide the theoretical complexity guarantee for ITD-BiO, AID-BiO and stocBiO in meta-learning and hyperparameter optimization. The experiments validate our theoretical results for deterministic bilevel optimization, and demonstrate the superior efficiency  of stocBiO for stochastic bilevel optimization.  

\subsection{Related Work}
{\bf Bilevel optimization approaches}: Bilevel optimization was first introduced by~\citealt{bracken1973mathematical}. Since then, a number of bilevel optimization algorithms have been proposed, which include but not limited to constraint-based methods~\citep{shi2005extended,moore2010bilevel} and gradient-based methods~\citep{domke2012generic,pedregosa2016hyperparameter,gould2016differentiating,maclaurin2015gradient,franceschi2018bilevel,ghadimi2018approximation,liao2018reviving,shaban2019truncated,hong2020two,liu2020generic,li2020improved,grazzi2020iteration,lorraine2020optimizing,ji2021lower}. Among them, \citealt{ghadimi2018approximation,hong2020two} provided the complexity analysis for their proposed methods for the nonconvex-strongly-convex bilevel optimization problem.
For such a problem, this paper develops a general and enhanced convergence rate analysis for gradient-based bilevel optimizers for the deterministic setting, and proposes a novel algorithm for the stochastic setting  with order-level lower computational complexity than the existing results. 

Other types of loss geometries have also been studied. For example, \citealt{liu2020generic,li2020improved} assumed that the lower- and upper-level functions $g(x,\cdot)$ and $f(x,\cdot)$ are convex and strongly convex, and provided an asymptotic analysis for their methods. \citealt{ghadimi2018approximation,hong2020two}  studied the setting where $\Phi(\cdot)$ is strongly convex or convex, and  $g(x,\cdot)$ is strongly convex. 

\noindent {\bf Bilevel optimization in meta-learning}: Bilevel optimization framework has been successfully applied to meta-learning recently~\citep{snell2017prototypical,franceschi2018bilevel,rajeswaran2019meta,zugner2019adversarial,ji2020convergence,ji2020multi}. For example,
 \citealt{snell2017prototypical}  proposed a bilevel optimization procedure for  meta-learning  to learn a common embedding model  for all tasks. 
\citealt{rajeswaran2019meta} reformulated the model-agnostic meta-learning (MAML)~\citep{finn2017model} as bilevel optimization, and proposed iMAML via implicit gradient. Our work provides a theoretical guarantee for two popular types of bilevel optimizer, i.e., AID-BiO and ITD-BiO, for meta-learning. 

\noindent {\bf Bilevel optimization in hyperparameter optimization}: 
Hyperparameter optimization has become increasingly important as a powerful tool in the automatic machine learning (autoML)~\citep{okuno2018hyperparameter,yu2020hyper}. Recently, various bilevel optimization algorithms have been proposed for hyperparameter optimization, which include implicit differentiation based methods~\citep{pedregosa2016hyperparameter}, dynamical system based methods via reverse or forward gradient computation~\citep{franceschi2017forward,franceschi2018bilevel,shaban2019truncated}, etc.  
Our work demonstrates superior efficiency of the proposed stocBiO algorithm in hyperparameter optimization.

\section{Algorithms}\label{sec:alg}
In this section, we describe two popular types of {\em deterministic} bilevel optimization algorithms, and  
propose a new algorithm for {\em stochastic} bilevel optimization. 
\subsection{Algorithms for Deterministic Bilevel Optimization}

As shown in~\Cref{alg:main_deter}, we describe two popular types of deterministic bilevel optimizers respectively based on AID and ITD (referred to as AID-BiO and ITD-BiO) for solving the problem~\cref{objective_deter}.  

Both AID-BiO and ITD-BiO update in a nested-loop manner. In the inner loop, both of them run $D$ steps of gradient decent (GD) to find an approximation point $y_k^D$ close to $y^*(x_k)$. 
Note that we choose the initialization $y_{k}^0$ of each inner loop as the output $y_{k-1}^D$ of the preceding inner loop rather than a random start.  Such a {\em warm start} allows us to backpropagate the tracking error $\|y_k^D-y^*(x_k)\|$ to previous loops, and yields an improved computational complexity.

  \begin{figure*}[t]
	\centering  
	\includegraphics[width=110mm]{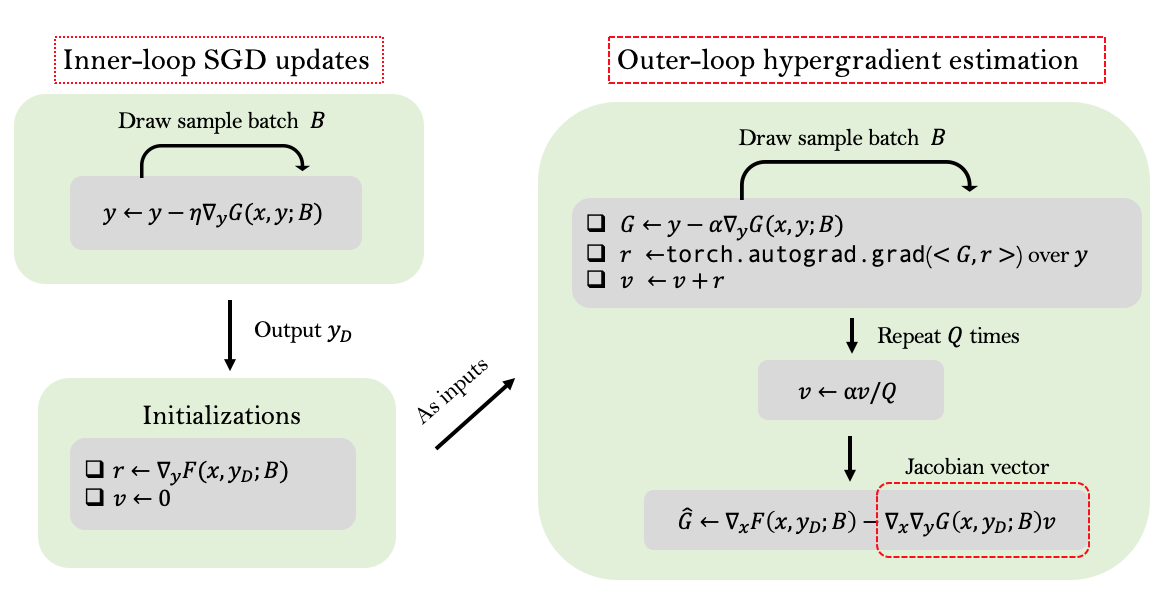}
	\vspace{-0.2cm}
	\caption{Illustration of hyperparameter estimation in our proposed stocBiO algorithm. Note that the  hyperparameter estimation (lines 9-10 in \Cref{alg:main}) involves only computations of automatic differentiation over scalar $<G_j(y),r_i>$ w.r.t.~$y$. In addition, our implementation applies the function {\em torch.autograd.grad} in PyTorch, which automatically determines the size of Jacobians. 
More details can be found in our code via {\small\url{https://github.com/JunjieYang97/StocBio_hp}}.
}\label{fig:stocBiO_illustrate}
\end{figure*}

At the outer loop,  AID-BiO first solves $v_k^N$ from a linear system $\nabla_y^2 g(x_k,y_k^D) v = 
\nabla_y f(x_k,y^D_k)$\footnote{Solving this linear system is equivalent to solving a quadratic programming $\min_v\frac{1}{2} v^T\nabla_y^2g(x_k,y_k^D) v-v^T\nabla_y f(x_k,y^D_k).$ } using $N$ steps of conjugate-gradient (CG) starting from $v_k^0$ (where we also adopt a warm start with $v_k^0=v_{k-1}^N$), and then constructs \begin{align}\label{hyper-aid}
\widehat\nabla \Phi(x_k)= \nabla_x f(x_k,y_k^T) -\nabla_x \nabla_y g(x_k,y_k^T)v_k^N
\end{align}
as an estimate of the true hypergradient $\nabla \Phi(x_k)$, whose form is given by the following proposition. 
\begin{proposition}\label{prop:grad}
Hypergradient $\nabla \Phi(x_k)$ takes the forms of 
\begin{small}
\begin{align}\label{trueG}
\nabla \Phi(x_k) =&  \nabla_x f(x_k,y^*(x_k)) -\nabla_x \nabla_y g(x_k,y^*(x_k)) v_k^*, 
\end{align}
\end{small}
\hspace{-0.09cm}where $v_k^*$ is the solution of the following linear system $$ \nabla_y^2 g(x_k,y^*(x_k))v=
\nabla_y f(x_k,y^*(x_k)).$$
\end{proposition}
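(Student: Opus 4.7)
The plan is a standard implicit differentiation argument; the only real care required is tracking the role of the dimensions and the convention for mixed partial derivatives.

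First I would apply the chain rule directly to $\Phi(x_k) = f(x_k, y^*(x_k))$. Writing $J(x) := \partial y^*(x)/\partial x$ for the Jacobian of $y^*$, this gives
\begin{equation*}
\nabla \Phi(x_k) = \nabla_x f(x_k, y^*(x_k)) + J(x_k)^\top \nabla_y f(x_k, y^*(x_k)).
\end{equation*}
So the task reduces to computing $J(x_k)$ and recognizing the resulting product as the claimed expression.

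Next I would use the implicit function theorem, which applies because $g(x,\cdot)$ is strongly convex in $y$, so the Hessian $\nabla_y^2 g(x, y^*(x))$ is positive definite and in particular invertible. Since $y^*(x)$ is the unique minimizer of $g(x,\cdot)$, it satisfies the stationarity condition
\begin{equation*}
\nabla_y g(x, y^*(x)) = 0 \quad \text{for all } x.
\end{equation*}
Differentiating both sides in $x$ yields $\nabla_x \nabla_y g(x, y^*(x)) + \nabla_y^2 g(x, y^*(x))\, J(x) = 0$, so
\begin{equation*}
J(x) = -[\nabla_y^2 g(x, y^*(x))]^{-1}\, \nabla_x \nabla_y g(x, y^*(x))^\top,
\end{equation*}
where I have fixed the convention so that $\nabla_x \nabla_y g \in \mathbb{R}^{p \times q}$ in accordance with the shape used in the proposition.

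Finally I would substitute $J(x_k)$ back into the chain-rule expression. Defining $v_k^* := [\nabla_y^2 g(x_k, y^*(x_k))]^{-1}\nabla_y f(x_k, y^*(x_k))$, which is precisely the solution of the linear system stated in the proposition (well-posed by strong convexity), we obtain
\begin{equation*}
\nabla \Phi(x_k) = \nabla_x f(x_k, y^*(x_k)) - \nabla_x \nabla_y g(x_k, y^*(x_k))\, v_k^*,
\end{equation*}
as claimed. There is no real obstacle here beyond bookkeeping of the matrix shapes and invoking strong convexity to justify invertibility of $\nabla_y^2 g$; the result is a textbook implicit-differentiation identity, and the novelty of the paper lies not in this proposition but in how $v_k^*$ is subsequently approximated (via CG in AID-BiO and via the Neumann series in stocBiO).
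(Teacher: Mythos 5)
Your proposal is correct and follows essentially the same route as the paper: chain rule on $f(x_k,y^*(x_k))$, implicit differentiation of the stationarity condition $\nabla_y g(x_k,y^*(x_k))=0$ to obtain the Jacobian of $y^*$, and identification of $v_k^*$ as the solution of the stated linear system. The only cosmetic difference is that you explicitly invert $\nabla_y^2 g$ to write the Jacobian before substituting, whereas the paper multiplies the implicit-differentiation identity by $v_k^*$ on the right to avoid writing the inverse; the content is identical.
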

As shown in~\citealt{domke2012generic,grazzi2020iteration}, the construction of~\cref{hyper-aid} involves only Hessian-vector products in solving $v_N$ via CG and Jacobian-vector product $\nabla_x \nabla_y g(x_k,y_k^D)v_k^N$, which can be efficiently computed and stored via existing automatic differentiation packages.  

\begin{algorithm}[t]
\small 
	\caption{Bilevel algorithms  via AID or ITD}    
	\label{alg:main_deter}
	\begin{algorithmic}[1]
		\STATE {\bfseries Input:}  $K,D,N$, stepsizes $\alpha, \beta $, initializations $x_0, y_0,v_0$.
		\FOR{$k=0,1,2,...,K$}
		\STATE{Set $y_k^0 = y_{k-1}^{D} \mbox{ if }\; k> 0$ and $y_0$ otherwise  }
		\FOR{$t=1,....,D$}
		\vspace{0.05cm}
		\STATE{Update $y_k^t = y_k^{t-1}-\alpha \nabla_y g(x_k,y_k^{t-1}) $}
		\vspace{0.05cm}
		\ENDFOR
                  \STATE{Hypergradient estimation via 
\vspace{0.1cm}
                  \\\hspace{0cm} {\bf AID}: 1) set $v_k^0 = v_{k-1}^{N} \mbox{ if }\; k> 0$ and $v_0$ otherwise
              \vspace{0.1cm}    \\\hspace{0.8cm}2) solve $v_k^N$ from $\nabla_y^2 g(x_k,y_k^D) v = 
\nabla_y f(x_k,y^D_k)$ 
 \vspace{0.1cm} \\ \hspace{1.05cm} via $N$ steps of CG starting from $v_k^0$
 \vspace{0.1cm} \\\hspace{0.8cm}3) get Jacobian-vector product {\small$\nabla_x \nabla_y g(x_k,y_k^D)v_k^N$} 
 \vspace{0.1cm} \\ \hspace{1.05cm} via automatic differentiation
 \vspace{0.1cm} \\\hspace{0.8cm}4) {\small$\widehat\nabla \Phi(x_k)= \nabla_x f(x_k,y_k^D) -\nabla_x \nabla_y g(x_k,y_k^D)v_k^N$} 
             \vspace{0.0cm}      \\ \hspace{0cm} {\bf  ITD}: compute $\widehat\nabla \Phi(x_k)=\frac{\partial f(x_k,y^D_k)}{\partial x_k}$ via backpropagation
                    }
                 \STATE{Update $x_{k+1}=x_k- \beta \widehat\nabla \Phi(x_k)$}
		\ENDFOR
	\end{algorithmic}
	\end{algorithm}

As a comparison, the outer loop of ITD-BiO computes the gradient $\frac{\partial f(x_k,y^D_k(x_k))}{\partial x_k}$  as an approximation of the hyper-gradient $\nabla \Phi(x_k)=\frac{\partial f(x_k,y^*(x_k))}{\partial x_k}$ via backpropagation, where we write  $y^D_k(x_k)$ because the output $y_k^D$ of the inner loop has a dependence on $x_k$ through the inner-loop iterative GD updates. The explicit form of the estimate $\frac{\partial f(x_k,y^D_k(x_k))}{\partial x_k}$ is given by the following proposition via the chain rule. For notation simplification, let $\prod_{j=D}^{D-1} (\cdot)= I$.
   \begin{proposition}\label{deter:gdform}
 $\frac{\partial f(x_k,y^D_k(x_k))}{\partial x_k}$ takes the analytical form of:
\begin{small}
\begin{align*}
\frac{\partial f(x_k,y^D_k)}{\partial x_k}= &\nabla_x f(x_k,y_k^D) -\alpha\sum_{t=0}^{D-1}\nabla_x\nabla_y g(x_k,y_k^{t})
\\&\times\prod_{j=t+1}^{D-1}(I-\alpha  \nabla^2_y g(x_k,y_k^{j}))\nabla_y f(x_k,y_k^D).
\end{align*}
\end{small}
\end{proposition}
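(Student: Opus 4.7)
The statement is a purely algebraic identity: nothing more than the chain rule applied to the composition induced by the $D$ inner-loop gradient-descent steps. No strong convexity, smoothness, or any other analytic assumption is needed, so the proof is essentially bookkeeping.

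First I would introduce the Jacobian $J_t := \partial y_k^t / \partial x_k \in \R^{q\times p}$, with the understanding that, for the purpose of this one-step backpropagation formula, the inner-loop initialization $y_k^0$ is treated as independent of $x_k$, so $J_0 = 0$. Differentiating the inner update $y_k^t = y_k^{t-1} - \alpha\,\nabla_y g(x_k, y_k^{t-1})$ with respect to $x_k$ via the chain rule gives the linear recursion
\[
J_t \;=\; \bigl(I - \alpha\,\nabla_y^2 g(x_k, y_k^{t-1})\bigr) J_{t-1} \;-\; \alpha\bigl(\nabla_x \nabla_y g(x_k, y_k^{t-1})\bigr)^{\!T},
\]
where I use the convention (consistent with the hypergradient formula in Proposition~1) that $\nabla_x \nabla_y g$ is a $p\times q$ matrix.

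Second, I would unroll this affine recursion starting from $J_0 = 0$ to obtain the closed form
\[
J_D \;=\; -\alpha \sum_{t=0}^{D-1} \Bigl(\prod_{j=D-1}^{t+1}\bigl(I - \alpha\,\nabla_y^2 g(x_k, y_k^{j})\bigr)\Bigr)\bigl(\nabla_x\nabla_y g(x_k, y_k^{t})\bigr)^{\!T},
\]
where the product is left-multiplication in decreasing order of $j$ and reduces to the identity when $t = D-1$ (matching the convention $\prod_{j=D}^{D-1}(\cdot)=I$ stated in the excerpt).

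Third, I would apply the chain rule one more time at the outer level:
\[
\frac{\partial f(x_k, y_k^D(x_k))}{\partial x_k} \;=\; \nabla_x f(x_k, y_k^D) \;+\; J_D^{\,T}\,\nabla_y f(x_k, y_k^D).
\]
Substituting the closed form for $J_D$ and using the symmetry of each Hessian $\nabla_y^2 g(x_k, y_k^j)$ — so that transposing the matrix product merely reverses the order of multiplication while leaving each factor unchanged, turning $\prod_{j=D-1}^{t+1}$ into $\prod_{j=t+1}^{D-1}$ — yields exactly the expression in the proposition.

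The only real delicacy is notational: getting the Jacobian conventions straight (rows vs.\ columns for $\nabla_x\nabla_y g$), keeping track of which side of $J_{t-1}$ each factor sits on in the recursion, and handling the empty-product corner case at $t=D-1$. Once these are pinned down, the derivation is a one-page algebraic exercise with no genuine obstacle.
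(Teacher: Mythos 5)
Your proposal is correct and follows essentially the same route as the paper's proof: apply the chain rule at the outer level, differentiate the inner GD update to get an affine recursion for the Jacobian, and unroll it from $\partial y_k^0/\partial x_k=0$. The only difference is cosmetic — the paper works directly with the $p\times q$ matrix $\frac{\partial y_k^t}{\partial x_k}$ (so the factors $(I-\alpha\nabla_y^2 g)$ multiply on the right and no transposition step is needed), whereas you use the $q\times p$ convention and recover the stated form via Hessian symmetry at the end.
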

\vspace{-0.4cm}
 \Cref{deter:gdform} shows that the differentiation involves the computations of second-order derivatives such as Hessian $ \nabla^2_y g(\cdot,\cdot)$. Since efficient Hessian-free methods  have been successfully deployed in the existing automatic differentiation tools, computing these second-order derivatives reduces to more efficient computations of Jacobian- and Hessian-vector products.

\subsection{Algorithm for Stochastic Bilevel Optimization}
We  propose a new stochastic bilevel optimizer (stocBiO) in~\Cref{alg:main} to solve the problem~\cref{objective}. 
It has a double-loop structure similar to \Cref{alg:main_deter}, but  runs $D$ steps of stochastic gradient decent (SGD) at the inner loop to obtain an approximated solution  $y_k^D$.  
Based on the output $y_k^D$ of the inner loop, stocBiO first computes a gradient {\small $\nabla_y F(x_k,y_k^D;\gD_F)$} over a sample batch $\gD_F$, and then computes a vector $v_Q$ as an estimated solution  of the linear system {\small $ \nabla_y^2 g(x_k,y^*(x_k))v=
\nabla_y f(x_k,y^*(x_k))$} via \Cref{alg:hessianEst}. Here, $v_Q$  takes a form of 
\begin{align}\label{ours:est}
v_Q =& \eta \sum_{q=-1}^{Q-1}\prod_{j=Q-q}^Q (I - \eta \nabla_y^2G(x_k,y_k^D;\gB_j)) v_0,
\end{align}
where {\small$v_0 = \nabla_y F(x_k,y_k^D;\gD_F)$}, $\{\gB_j,j=1,...,Q\}$ are mutually-independent sample sets, $Q$ and $\eta$ are constants, and we let $\prod_{Q+1}^Q (\cdot)= I$ for notational simplification.  
Our construction of $v_Q$, i.e., \Cref{alg:hessianEst}, is motived by the Neumann series $\sum_{i=0}^\infty U^k=(I-U)^{-1}$, and involves only Hessian-vector products rather than Hessians, and hence is computationally and memory efficient.  {\bf This procedure is illustrated in \Cref{fig:stocBiO_illustrate}. }

Then, we construct 
\begin{small}
 \begin{align}\label{estG}
 \widehat \nabla \Phi(x_k) =&  \nabla_x F(x_k,y_k^D;\gD_F)-\nabla_x \nabla_y G(x_k,y_k^D;\gD_G)v_Q
 \end{align}
 \end{small}
\hspace{-0.12cm}as an estimate of hypergradient $\nabla \Phi(x_k)$. 
Compared to the deterministic case, it is more challenging to design a sample-efficient Hypergradient estimator in the stochastic case. For example, instead of choosing the same batch sizes for all $\gB_j,j=1,...,Q$  in~\cref{ours:est}, 
 our analysis captures the different impact of components $\nabla_y^2G(x_k,y_k^D;\gB_j),j=1,...,Q$ on  the hypergradient estimation variance, and inspires an adaptive and more efficient choice  by setting $|\gB_{Q-j}|$ to   decay  exponentially with  $j$ from $0$ to $Q-1$. By doing so, we  achieve an improved  complexity. 

\begin{algorithm}[t]
	\caption{Stochastic bilevel optimizer (stocBiO)}  
	\small
	\label{alg:main}
	\begin{algorithmic}[1]
		\STATE {\bfseries Input:} $K,D,Q$, stepsizes $\alpha$ and $\beta$, initializations $x_0$ and $y_0$.
		\FOR{$k=0,1,2,...,K$}
		\STATE{Set $y_k^0 = y_{k-1}^{D} \mbox{ if }\; k> 0$ and $y_0$ otherwise 
		}
		\FOR{$t=1,....,D$}
		\STATE{Draw a sample batch $\gS_{t-1}$}  
		\vspace{0.05cm}
		\STATE{Update $y_k^t = y_k^{t-1}-\alpha \nabla_y G(x_k,y_k^{t-1}; \gS_{t-1}) $}
		\ENDFOR
                 \STATE{Draw sample batches $\gD_F,\gD_H$ and $\gD_G$ }
                 \STATE{Compute gradient {\small$v_0=\nabla_y F(x_k,y_k^D;\gD_F)$}}
                  \STATE{Construct estimate $v_Q$ via \Cref{alg:hessianEst} given $v_0$}
                  \STATE{Compute {\small$\nabla_x \nabla_y G(x_k,y_k^D;\gD_G) v_Q $}}

                  \STATE{ Compute gradient estimate $  \widehat \nabla \Phi(x_k) $ via~\cref{estG}
                    }
                 \STATE{Update $x_{k+1}=x_k- \beta \widehat \nabla \Phi(x_k) $}
		\ENDFOR
	\end{algorithmic}
	\end{algorithm}	
	\begin{algorithm}[t]
	\small
	\caption{Construct $v_Q$ given $v_0$  }  
	\label{alg:hessianEst}
	\begin{algorithmic}[1]
		\STATE {\bfseries Input:} Integer $Q$,  samples $\gD_H= \{\gB_j\}_{j=1}^{Q}$ and constant $\eta$.
		\FOR{$j=1,2,...,Q$}
		\STATE{Sample $\gB_j$ and compute $G_j(y)=y-\eta \nabla_y G(x,y; \gB_j)$
		}
		\ENDFOR
		\STATE{Set $r_Q = v_0$}
		\FOR{$i=Q,...,1$}
		\STATE{$r_{i-1}=\partial \big( G_i(y)r_i\big)/\partial y=r_i-\eta \nabla_y^2G(x,y;\gB_i)r_i$ via automatic differentiation}
		\ENDFOR
		\STATE{Return $v_Q=\eta \sum_{i=0}^Qr_i$
		}
	\end{algorithmic}
	\end{algorithm}

\section{Definitions and Assumptions}
Let $z=(x,y)$ denote all parameters. For simplicity, suppose sample sets $\gS_t$  for all $t=0,...,D-1$, $\gD_G$ and $\gD_F$ have the sizes of  $S$, $D_g$ and $D_f$, respectively.  In this paper, we focus on the following types of loss functions for both the deterministic and stochastic cases. 
\begin{assum}\label{assum:geo}
The lower-level function $g(x,y)$ is $\mu$-strongly-convex w.r.t.~$y$ and the total objective function $\Phi(x)=f(x,y^*(x))$ is nonconvex w.r.t.~$x$.  
For the stochastic setting, the same assumptions hold for $G(x,y;\zeta)$ and $\Phi(x)$, respectively.
\vspace{-0.2cm} 
\end{assum}
Since $\Phi(x)$ is nonconvex, algorithms are expected to find an $\epsilon$-accurate stationary point defined as follows. 
\begin{definition}
We say $\bar x$ is an $\epsilon$-accurate stationary point for the objective function $\Phi(x)$ in~\cref{objective} if $\mathbb{E}\|\nabla \Phi(\bar x)\|^2\leq \epsilon$, where $\bar x$ is the output of  an algorithm.
\end{definition}
In order to compare the performance of different bilevel algorithms, we adopt the following metrics of  complexity. 

\begin{definition}\label{com_measure}
For a function $f(x,y)$ and a vector $v$, let $\mbox{\normalfont Gc}(f,\epsilon)$ be the number of the partial gradient $\nabla_x f$ or $\nabla_y f$, and let $\mbox{\normalfont JV}(g,\epsilon)$ and  $\mbox{\normalfont HV}(g,\epsilon)$ be the number of Jacobian-vector products $\nabla_x\nabla_y g (x,y)v$. 
  and  Hessian-vector products $\nabla_y^2g(x,y) v$.  For the stochastic case, similar metrics are adopted but w.r.t.~the stochastic function $F(x,y;\xi)$. 
 \end{definition} 
We  take the following standard assumptions on the loss functions in~\cref{objective}, which have been widely adopted in bilevel optimization~\citep{ghadimi2018approximation,ji2020convergence}.
\begin{assum}\label{ass:lip}
The loss function $f(z)$ and $g(z)$ satisfy
\begin{list}{$\bullet$}{\topsep=0.2ex \leftmargin=0.2in \rightmargin=0.in \itemsep =0.01in}
\item The function $f(z)$ is $M$-Lipschitz, i.e.,  for any $z,z^\prime$, $$|f(z)-f(z^\prime)|\leq M\|z-z^\prime\|.$$
\item $\nabla f(z)$ and $\nabla g(z)$ are $L$-Lipschitz, i.e., for any $z,z^\prime$, 
\vspace{-0.1cm}
\begin{align*}
\|\nabla f(z)-\nabla f(z^\prime)\|\leq& L\|z-z^\prime\|,
\\\|\nabla g(z)-\nabla g(z^\prime)\|\leq& L\|z-z^\prime\|.
\end{align*}
\end{list}
\vspace{-0.2cm}

For the stochastic case, the same assumptions hold for $F(z;\xi)$ and $G(z;\zeta)$ for any given $\xi$ and $\zeta$.
\end{assum}
As shown in~\Cref{prop:grad}, the gradient of the objective function $\Phi(x)$ involves the second-order derivatives $\nabla_x\nabla_y g(z)$ and $\nabla_y^2 g(z)$. The following assumption imposes the Lipschitz conditions on such high-order derivatives, as also made in~\citealt{ghadimi2018approximation}.
\begin{assum}\label{high_lip}
Suppose the derivatives $\nabla_x\nabla_y g(z)$ and $\nabla_y^2 g(z)$ are $\tau$- and $\rho$- Lipschitz, i.e.,
\begin{list}{$\bullet$}{\topsep=0.2ex \leftmargin=0.2in \rightmargin=0.in \itemsep =0.02in}
\item For any $z,z^\prime$, $\|\nabla_x\nabla_y g(z)-\nabla_x\nabla_y g(z^\prime)\| \leq \tau \|z-z^\prime\|$.
\item For any $z,z^\prime$, $\|\nabla_y^2 g(z)-\nabla_y^2 g(z^\prime)\|\leq \rho \|z-z^\prime\|$.
\end{list} 
For the stochastic case, the same assumptions hold for $\nabla_x\nabla_y G(z;\zeta)$ and $\nabla_y^2 G(z;\zeta)$ for any $\zeta$.
\end{assum}
As typically adopted in the analysis for stochastic optimization, we make the following bounded-variance assumption for the lower-level stochastic function $G(z;\zeta)$. 
\begin{assum} \label{ass:bound} 
Gradient $\nabla G(z;\zeta)$ has a bounded variance, i.e., $\mathbb{E}_\xi \|\nabla G(z;\zeta)-\nabla g(z)\|^2 \leq \sigma^2$ for some $\sigma$.
\end{assum}
\section{Main Results for Bilevel Optimization}
\subsection{Deterministic Bilevel Optimization}\label{main:result_deter}
We first characterize the convergence and complexity of AID-BiO.  Let $\kappa=\frac{L}{\mu}$ denote the condition number. 
\begin{theorem}[AID-BiO]\label{th:aidthem}
Suppose Assumptions~\ref{assum:geo},~\ref{ass:lip}, \ref{high_lip} hold. Define  a smoothness parameter $L_\Phi = L + \frac{2L^2+\tau M^2}{\mu} + \frac{\rho L M+L^3+\tau M L}{\mu^2} + \frac{\rho L^2 M}{\mu^3}=\Theta(\kappa^3)$, choose the stepsizes $\alpha\leq \frac{1}{L}$, $\beta=\frac{1}{8L_\Phi}$, and set the inner-loop iteration number $D\geq\Theta(\kappa)$ and the CG iteration number  $N\geq \Theta(\sqrt{\kappa})$, where the detailed forms of  $D,N$ can be found in \Cref{appen:aid-bio}. 
 Then, the outputs of AID-BiO satisfy
\begin{align*}
\frac{1}{K}\sum_{k=0}^{K-1}\| \nabla \Phi(x_k)\|^2 \leq \frac{64L_\Phi (\Phi(x_0) - \inf_x\Phi(x))+5\Delta_0}{K}, 
\end{align*}
where $\Delta_0=\|y_0-y^*(x_{0})\|^2 + \|v_{0}^*-v_0\|^2>0$.

In order to achieve an $\epsilon$-accurate stationary point, the complexities satisfy 
\begin{list}{$\bullet$}{\topsep=0.ex \leftmargin=0.2in \rightmargin=0.in \itemsep =0.01in}
\item Gradient: {\small$\mbox{\normalfont Gc}(f,\epsilon)=\mathcal{O}(\kappa^3\epsilon^{-1}), \mbox{\normalfont Gc}(g,\epsilon)=\mathcal{O}(\kappa^4\epsilon^{-1}).$}
\item Jacobian- and Hessian-vector product complexities: {\small$ \mbox{\normalfont JV}(g,\epsilon)=\mathcal{O}\left(\kappa^3\epsilon^{-1}\right), \mbox{\normalfont HV}(g,\epsilon)=\mathcal{O}\left(\kappa^{3.5}\epsilon^{-1}\right).$}
\end{list}
\end{theorem}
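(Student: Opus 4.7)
The plan is to combine a standard descent-lemma argument on the smooth nonconvex objective $\Phi$ with careful tracking of two approximation errors—the inner-loop GD error $\|y_k^D - y^*(x_k)\|$ and the CG error $\|v_k^N - v_k^*\|$—where both trackings are made tight by exploiting the warm-start initializations.

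First, I would establish that $\Phi$ is $L_\Phi$-smooth with the stated $L_\Phi=\Theta(\kappa^3)$. Using the hypergradient formula in Proposition~\ref{prop:grad}, $\nabla\Phi(x)$ can be written in terms of $\nabla_x f$, $\nabla_x\nabla_y g$, $(\nabla_y^2 g)^{-1}$, and $\nabla_y f$, all evaluated at $(x,y^*(x))$. Standard manipulations using Assumptions~\ref{ass:lip}--\ref{high_lip}, the fact that $\|(\nabla_y^2 g)^{-1}\|\leq 1/\mu$, and the Lipschitz property $\|y^*(x)-y^*(x')\|\leq \kappa\|x-x'\|$ (obtained by implicit differentiation of the optimality condition $\nabla_y g(x,y^*(x))=0$) yield the Lipschitz constant of the form in the theorem. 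Next, I would bound the hypergradient estimation error $\|\widehat\nabla\Phi(x_k)-\nabla\Phi(x_k)\|$ by the triangle inequality into three pieces: one controlled by $\|y_k^D-y^*(x_k)\|$, one controlled by $\|v_k^N-v_k^*\|$, and cross terms absorbed by $L,\mu,M,\tau,\rho$.

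Second, I would derive the two error recursions. For the inner loop, $\alpha\leq 1/L$ gives linear contraction $\|y_k^D-y^*(x_k)\|^2\leq (1-\alpha\mu)^D\|y_k^0-y^*(x_k)\|^2$, and the warm start $y_k^0=y_{k-1}^D$ together with the $\kappa$-Lipschitzness of $y^*$ gives
\begin{align*}
\|y_k^0-y^*(x_k)\|^2 \leq 2\|y_{k-1}^D-y^*(x_{k-1})\|^2 + 2\kappa^2\|x_k-x_{k-1}\|^2.
\end{align*}
A parallel recursion for the CG iterate uses the standard CG contraction $\|v_k^N-v_k^*\|\leq 2(1-1/\sqrt{\kappa})^N \|v_k^0-v_k^*\|$, together with the fact that $v^*(x)$ is Lipschitz in $x$ (again via implicit differentiation), so the warm start $v_k^0=v_{k-1}^N$ gives an analogous two-term upper bound involving $\|v_{k-1}^N-v_{k-1}^*\|^2$ and $\|x_k-x_{k-1}\|^2$, plus a term proportional to $\|y_k^D-y^*(x_k)\|^2$ capturing that $v_k^*$ depends on $y^*(x_k)$.

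Third, I would apply descent on $\Phi$:
\begin{align*}
\Phi(x_{k+1})\leq \Phi(x_k) - \tfrac{\beta}{2}\|\nabla\Phi(x_k)\|^2 + \tfrac{\beta}{2}\|\widehat\nabla\Phi(x_k)-\nabla\Phi(x_k)\|^2
\end{align*}
(using $\beta\leq 1/L_\Phi$ to absorb the quadratic term), then substitute the bound on the estimation error and telescope. By choosing $D=\Theta(\kappa)$ and $N=\Theta(\sqrt{\kappa})$ large enough that the contraction factors $(1-\alpha\mu)^D$ and $(1-1/\sqrt\kappa)^{2N}$ kill the $\kappa^2$ factors appearing from Lipschitzness of $y^*$ and $v^*$, the coupled recursions close: summing over $k$, the $\|x_k-x_{k-1}\|^2=\beta^2\|\widehat\nabla\Phi(x_k)\|^2$ terms fold back into $\sum_k\|\nabla\Phi(x_k)\|^2$ with a small coefficient and are absorbed, while the warm-start tracking errors telescope to $\Delta_0$. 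Dividing by $K$ yields the stated bound, and plugging $K=\Theta(L_\Phi/\epsilon)=\Theta(\kappa^3/\epsilon)$ together with the per-iteration costs ($1$ gradient each of $f$ and $\nabla_y f$, $D=\Theta(\kappa)$ gradients of $g$, $1$ Jacobian-vector product, and $N=\Theta(\sqrt\kappa)$ Hessian-vector products from CG) gives the four complexity estimates.

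\textbf{Main obstacle.} The delicate step is closing the coupled warm-start recursions: the bound on the current iterate's tracking error depends on the previous iterate's tracking error plus $\|x_k-x_{k-1}\|^2$, which itself is proportional to the squared hypergradient estimate, which in turn contains the tracking errors. Choosing $D,N$ just large enough so that the contraction factors defeat both the $\kappa^2$ drift constants and the cross-coupling between the $y$- and $v$-recursions—without blowing up the per-iteration cost—is where the $\Theta(\kappa)$ and $\Theta(\sqrt\kappa)$ thresholds, and ultimately the orderwise improvements over prior analyses, come from.
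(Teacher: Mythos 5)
Your plan is correct and follows essentially the same route as the paper's proof: establish $L_\Phi$-smoothness of $\Phi$, decompose the hypergradient error into a $\|y_k^D-y^*(x_k)\|$ part and a $\|v_k^N-v_k^*\|$ part (the latter via the CG contraction and the Lipschitzness of $y\mapsto(\nabla_y^2 g)^{-1}\nabla_y f$), close the warm-start-coupled recursion for the joint tracking error $\|y_k^0-y^*(x_k)\|^2+\|v_k^*-v_k^0\|^2$ with a $1/2$ contraction by taking $D\geq\Theta(\kappa)$ and $N\geq\Theta(\sqrt\kappa)$, and then telescope the descent inequality while reabsorbing the accumulated $\|\nabla\Phi(x_j)\|^2$ terms. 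The obstacle you flag—balancing the contraction factors against the $\kappa^2$ drift and the $y$--$v$ cross-coupling—is exactly where the paper's Lemmas 3--5 do their work.
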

As shown in \Cref{tab:determinstic}, 
the complexities  $\mbox{\normalfont Gc}(f,\epsilon)$, $\mbox{\normalfont Gc}(g,\epsilon)$, $\mbox{\normalfont JV}(g,\epsilon) $ and $\mbox{\normalfont HV}(g,\epsilon)$ of our analysis improves that of \citealt{ghadimi2018approximation} (eq.~(2.30) therein) by the order of $\kappa$, $\kappa\epsilon^{-1/4}$, $\kappa$ and $\kappa$.  Such an improvement is achieved by a refined analysis with a constant number of inner-loop steps, and by a warm start strategy to backpropagate the tracking errors $\|y_k^D-y^*(x_k)\|$  and $\|v_k^N-v^*_k\|$ to previous loops, as also demonstrated by our meta-learning experiments.  We next characterize the convergence and complexity performance of the ITD-BiO algorithm. \begin{theorem}[ITD-BiO]\label{th:determin}
Suppose Assumptions~\ref{assum:geo},~\ref{ass:lip}, and \ref{high_lip} hold. Define $L_\Phi $ as in \Cref{th:aidthem}, and choose $\alpha\leq \frac{1}{L}$,
 $\beta=\frac{1}{4L_\Phi}$ and $D\geq \Theta(\kappa\log\frac{1}{\epsilon})$, where the detailed form of $D$ can be found in \Cref{append:itd-bio}. Then, we have 
\begin{align*}
\frac{1}{K}\sum_{k=0}^{K-1}\| \nabla \Phi(x_k)\|^2 \leq \frac{16 L_\Phi (\Phi(x_0)-\inf_x\Phi(x))}{K} + \frac{2\epsilon}{3}.
\end{align*}
In order to achieve an $\epsilon$-accurate stationary point, the complexities satisfy 
\begin{list}{$\bullet$}{\topsep=0.ex \leftmargin=0.1in \rightmargin=0.in \itemsep =0.01in}
\item Gradient: {\small$\mbox{\normalfont Gc}(f,\epsilon)=\mathcal{O}(\kappa^3\epsilon^{-1}), \mbox{\normalfont Gc}(g,\epsilon)=\mathcal{\widetilde O}(\kappa^4\epsilon^{-1}).$}
\item Jacobian- and Hessian-vector product complexity: {\small$ \mbox{\normalfont JV}(g,\epsilon)=\mathcal{\widetilde O}\big(\kappa^4\epsilon^{-1}\big), \mbox{\normalfont HV}(g,\epsilon)=\mathcal{\widetilde O}\big(\kappa^4\epsilon^{-1}\big).$}
\end{list}
\end{theorem}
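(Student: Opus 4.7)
The plan is to derive Theorem 2 via the classical descent-lemma argument for smooth nonconvex optimization, applied to the inexact hypergradient $\widehat\nabla \Phi(x_k) = \partial f(x_k, y_k^D(x_k))/\partial x_k$. First I would invoke (or prove as a preliminary lemma) that $\Phi$ is $L_\Phi$-smooth with the same $L_\Phi = \Theta(\kappa^3)$ used in Theorem 1; this follows from Assumptions \ref{assum:geo}--\ref{high_lip} combined with the implicit function form of $\nabla\Phi(x)$ in Proposition \ref{prop:grad}, and is essentially the same smoothness computation that underlies the AID-BiO analysis. Given $L_\Phi$-smoothness and the outer update $x_{k+1}=x_k-\beta\widehat\nabla\Phi(x_k)$ with $\beta=1/(4L_\Phi)$, the standard descent inequality plus the identity $-\langle a,b\rangle=\tfrac12(\|a-b\|^2-\|a\|^2-\|b\|^2)$ yields
\begin{equation*}
\Phi(x_{k+1}) \le \Phi(x_k) - \tfrac{\beta}{2}\|\nabla\Phi(x_k)\|^2 - \tfrac{3\beta}{8}\|\widehat\nabla\Phi(x_k)\|^2 + \tfrac{\beta}{2}\|\widehat\nabla\Phi(x_k)-\nabla\Phi(x_k)\|^2.
\end{equation*}

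The core step, and the main obstacle, is bounding the ITD bias $\|\widehat\nabla\Phi(x_k)-\nabla\Phi(x_k)\|$. I would compare the closed form in Proposition \ref{deter:gdform} with $\nabla\Phi(x_k)$ in Proposition \ref{prop:grad}, using the telescoping identity $\alpha\sum_{t=0}^{D-1}\prod_{j=t+1}^{D-1}(I-\alpha\nabla_y^2 g) = [\nabla_y^2 g]^{-1}(I-(I-\alpha\nabla_y^2 g)^D)$ evaluated at $(x_k,y^*(x_k))$. This exposes two sources of error: (i) the replacement of $y^*(x_k)$ by the iterates $y_k^t$, which by Assumption \ref{high_lip} gives a Lipschitz perturbation bounded by $\max_t\|y_k^t - y^*(x_k)\|$, and (ii) the finite-truncation residual $(I-\alpha\nabla_y^2 g(x_k,y^*(x_k)))^D$. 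Since $\alpha\le 1/L$ and $g$ is $\mu$-strongly convex, the spectral norm of $I-\alpha\nabla_y^2 g$ is at most $1-\alpha\mu$, and standard GD contraction on the inner loop gives $\|y_k^t-y^*(x_k)\|\le (1-\alpha\mu)^t\|y_k^0-y^*(x_k)\|$. Combining these yields a bound of the form $\|\widehat\nabla\Phi(x_k)-\nabla\Phi(x_k)\|^2 \le C(1-\alpha\mu)^{2D}(\|y_k^0-y^*(x_k)\|^2 + \text{const})$ where $C$ is a polynomial in $\kappa$ coming from $M$, $L$, $\tau$, $\rho$ and the geometric series constants. The tedious part is keeping the $\kappa$-dependence of $C$ tracked cleanly.

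Once this bias bound is in hand, the warm-start recursion $y_k^0 = y_{k-1}^D$ together with Lipschitzness of $y^*$ (constant $\kappa$) lets me control $\|y_k^0-y^*(x_k)\|$ in terms of $\|y_{k-1}^0-y^*(x_{k-1})\|$ and $\beta\|\widehat\nabla\Phi(x_{k-1})\|$; the $\widehat\nabla\Phi$ term can be absorbed into the negative $-\tfrac{3\beta}{8}\|\widehat\nabla\Phi(x_k)\|^2$ we already have in the descent inequality, provided $D\ge \Theta(\kappa)$ so that $(1-\alpha\mu)^{2D}\kappa^{O(1)}$ is small. Choosing $D = \Theta(\kappa\log(1/\epsilon))$ further drives the residual bias term below $\epsilon/4$, so that the bias contribution to the telescoped sum is at most $2\epsilon/3$.

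Telescoping the descent inequality from $k=0$ to $K-1$ and dividing by $K$ then gives
\begin{equation*}
\frac{1}{K}\sum_{k=0}^{K-1}\|\nabla\Phi(x_k)\|^2 \le \frac{16 L_\Phi(\Phi(x_0)-\inf_x \Phi(x))}{K} + \frac{2\epsilon}{3},
\end{equation*}
which is exactly the stated bound. For the complexity accounting I would set $K = \Theta(L_\Phi/\epsilon) = \Theta(\kappa^3/\epsilon)$, giving $\mathrm{Gc}(f,\epsilon)=\mathcal{O}(\kappa^3\epsilon^{-1})$ outer hypergradient evaluations, each of which uses $D=\widetilde\Theta(\kappa)$ inner GD steps and $D$ Jacobian- and Hessian-vector products through the backpropagation formula in Proposition \ref{deter:gdform}, producing $\mathrm{Gc}(g,\epsilon) = \widetilde{\mathcal O}(\kappa^4\epsilon^{-1})$ and $\mathrm{JV}(g,\epsilon)=\mathrm{HV}(g,\epsilon)=\widetilde{\mathcal O}(\kappa^4\epsilon^{-1})$, matching the theorem.
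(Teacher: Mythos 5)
Your proposal is correct and follows the same overall skeleton as the paper's argument (descent lemma with $L_\Phi$-smoothness from \Cref{le:lipphi}, an exponentially decaying bound on the ITD hypergradient bias, then telescoping), but it deviates from the paper in two places worth noting. First, for the key bias lemma the paper (\Cref{prop:partialG}) does not expand the product formula of \Cref{deter:gdform} against the geometric-series identity; instead it derives a one-step recursion for the Jacobian error $\frac{\partial y_k^t}{\partial x_k}-\frac{\partial y^*(x_k)}{\partial x_k}$ by subtracting the implicit-differentiation identity at $y^*(x_k)$ from the chain-rule update, and telescopes that recursion. Your route via the truncated Neumann-type identity is equally valid and arrives at the same qualitative bound, though your claimed rate $C(1-\alpha\mu)^{2D}\|y_k^0-y^*(x_k)\|^2$ is slightly too strong: the perturbation injected at inner step $t$ decays only like $(1-\alpha\mu)^{t/2}$ in distance while being propagated through $(1-\alpha\mu)^{D-1-t}$ contraction factors, so summing over $t$ gives a squared-error term of order $(1-\alpha\mu)^{D-1}\|y_k^0-y^*(x_k)\|^2$ (plus the truncation residual $\kappa^2M^2(1-\alpha\mu)^{2D}$), exactly as in \Cref{prop:partialG}; this changes only a factor of two in $D$ and none of the stated complexities. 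Second, for ITD-BiO the paper does \emph{not} run the warm-start recursion on $\|y_k^0-y^*(x_k)\|$ with absorption into the negative $\|\widehat\nabla\Phi(x_k)\|^2$ term — that machinery is reserved for the AID-BiO proof (\Cref{le:bibibiss}); here it simply assumes a uniform bound $\|y_k^0-y^*(x_k)\|^2\le\Delta$ and chooses $D=\Theta(\kappa\log\frac{1}{\epsilon})$ large enough that the entire $\Delta$-dependent bias contribution is below $\frac{2\epsilon}{3}$. Your recursion-plus-absorption variant is more self-contained (it removes the need to posit $\Delta$) at the cost of the extra bookkeeping, and it buys nothing in the final rates since $D$ must be $\Theta(\kappa\log\frac{1}{\epsilon})$ anyway to kill the truncation residual. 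The complexity accounting at the end matches the paper exactly.
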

By comparing \Cref{th:aidthem} and \Cref{th:determin}, it can be seen that the complexities $\mbox{\normalfont JV}(g,\epsilon)$ and $\mbox{\normalfont HV}(g,\epsilon)$  of AID-BiO are better than those of ITD-BiO by the order of  $\kappa$ and $\kappa^{0.5}$, which implies that AID-BiO is more computationally and memory efficient than ITD-BiO, as verified in \Cref{fig:strfc100}.

\subsection{Stochastic Bilevel Optimization}\label{main:result}
We first  characterize the bias and variance of an important component $v_Q$ in~\cref{ours:est}. 

\begin{proposition}\label{prop:hessian}
Suppose Assumptions~\ref{assum:geo},~\ref{ass:lip} and \ref{high_lip} hold. Let  $\eta\leq \frac{1}{L}$ and choose  $|\gB_{Q+1-j}|=BQ(1-\eta\mu)^{j-1}$ for $j=1,...,Q$, where $B\geq \frac{1}{Q(1-\eta\mu)^{Q-1}}$. 
Then, the  bias satisfies   
\begin{align}{\label{bias}}
\big\|\mathbb{E} v_Q- [\nabla_y^2 g(x_k,y^D_k&)]^{-1}\nabla_y f(x_k,y^D_k)\big\|  \nonumber
\\&\leq  \mu^{-1}(1-\eta \mu)^{Q+1}M.
\end{align}
Furthermore, the estimation variance is given by 
\begin{align}\label{hessian_variance}
\mathbb{E}&\|v_Q-[\nabla_y^2 g(x_k,y^D_k)]^{-1}\nabla_y f(x_k,y^D_k)\|^2 \nonumber
\\&\leq\frac{4\eta^2  L^2M^2}{\mu^2} \frac{1}{B}+\frac{4(1-\eta \mu)^{2Q+2}M^2}{\mu^2}+ \frac{2M^2}{\mu^2D_f}.
\end{align}
\end{proposition}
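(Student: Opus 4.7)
The plan is to separate the argument into a bias estimate $\|\mathbb{E}v_Q - H^{-1}v^*\|$ and a second-moment estimate $\mathbb{E}\|v_Q-\mathbb{E}v_Q\|^2$, then combine via the bias-variance identity. Introduce the notation $H := \nabla_y^2 g(x_k,y_k^D)$, $v^* := \nabla_y f(x_k,y_k^D)$, $M_j := I - \eta \nabla_y^2 G(x_k,y_k^D;\mathcal{B}_j)$, $P_i := \prod_{j=i+1}^Q M_j$ (so $P_Q = I$), and $\tilde\pi_Q := \eta\sum_{i=0}^Q P_i$, so that $v_Q = \tilde\pi_Q v_0$. Two structural facts drive everything: (a) the contractions $\|I-\eta H\|\leq 1-\eta\mu$ and $\|M_j\|\leq 1-\eta\mu$ almost surely, which follow from $\eta\leq 1/L$ and Assumption~\ref{assum:geo} and make the Neumann series $H^{-1} = \eta\sum_{k=0}^\infty (I-\eta H)^k$ convergent with truncation residual $(I-\eta H)^{Q+1}H^{-1}$; and (b) mutual independence of $\mathcal{B}_1,\ldots,\mathcal{B}_Q,\mathcal{D}_F$ together with unbiasedness $\mathbb{E}M_j=I-\eta H$, $\mathbb{E}v_0=v^*$, which give $\mathbb{E}P_i = (I-\eta H)^{Q-i}$ and hence $\mathbb{E}v_Q = \pi_Q v^*$ with $\pi_Q := \eta\sum_{k=0}^Q (I-\eta H)^k$.

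The bias bound~\eqref{bias} is then immediate: $\mathbb{E}v_Q - H^{-1}v^* = -(I-\eta H)^{Q+1}H^{-1}v^*$ has operator norm at most $(1-\eta\mu)^{Q+1}/\mu$, and $\|v^*\|\leq M$ from Lipschitzness of $f$. For \eqref{hessian_variance}, decompose $v_Q - \mathbb{E}v_Q = \tilde\pi_Q(v_0-v^*) + (\tilde\pi_Q-\pi_Q)v^*$; the cross-term vanishes in expectation because $v_0-v^*$ is independent of $\{M_j\}$ with zero mean. Combining with the bias-variance identity $\mathbb{E}\|v_Q-H^{-1}v^*\|^2 = \mathbb{E}\|v_Q-\mathbb{E}v_Q\|^2 + \|\mathbb{E}v_Q - H^{-1}v^*\|^2$ and applying $\|a+b+c\|^2\leq 2\|a\|^2+4\|b\|^2+4\|c\|^2$ reproduces the stated prefactors $2,4,4$. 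The $v_0$ term uses $\|\tilde\pi_Q\|\leq 1/\mu$ a.s.\ (from $\|\tilde\pi_Q\|\leq \eta\sum_{i=0}^Q (1-\eta\mu)^{Q-i}\leq 1/\mu$) together with $\mathbb{E}\|v_0-v^*\|^2\leq M^2/D_f$ (from $\|\nabla_y F\|\leq M$ and sample averaging) to contribute $M^2/(\mu^2 D_f)$; the squared truncation term recycles the bias calculation for $(1-\eta\mu)^{2Q+2}M^2/\mu^2$.

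The hardest piece is $\mathbb{E}\|(\tilde\pi_Q-\pi_Q)v^*\|^2$, the variance injected by the random Hessian sub-samples. Unrolling the recursion $P_i - \mathbb{E}P_i = (I-\eta H)(P_{i+1}-\mathbb{E}P_{i+1}) + \bar M_{i+1}P_{i+1}$ with $\bar M_j := M_j - \mathbb{E}M_j$ yields the telescoping identity
\[ (\tilde\pi_Q - \pi_Q)v^* = \sum_{k=1}^Q A_k\,\bar M_k\,P_k v^*, \qquad A_k := \eta\sum_{l=0}^{k-1}(I-\eta H)^l. \]
With the reverse filtration $\mathcal{G}_k := \sigma(\mathcal{B}_k,\ldots,\mathcal{B}_Q)$, the summands form a martingale difference sequence: $A_k$ is deterministic, $P_k v^*\in\mathcal{G}_{k+1}$, and $\bar M_k$ is mean-zero and independent of $\mathcal{G}_{k+1}$. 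Cross-terms therefore vanish and $\mathbb{E}\|(\tilde\pi_Q-\pi_Q)v^*\|^2 = \sum_{k=1}^Q \mathbb{E}\|A_k\bar M_k P_k v^*\|^2$. Each summand is controlled using (i) $\|A_k\|\leq 1/\mu$, (ii) $\|P_k v^*\|\leq (1-\eta\mu)^{Q-k}M$ a.s.\ from contractivity of each $M_j$, and (iii) the per-sample variance inequality $\mathbb{E}[\|\bar M_k w\|^2\mid w] \leq \eta^2 L^2\|w\|^2/|\mathcal{B}_k|$, valid for $w$ independent of $\mathcal{B}_k$ and obtained from $\|\nabla_y^2 G\|\leq L$ and i.i.d.\ averaging. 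Inserting the geometric schedule $|\mathcal{B}_k| = BQ(1-\eta\mu)^{Q-k}$ cancels one copy of the contraction and leaves a geometric series summing to $\mathcal{O}(\eta^2 L^2 M^2/(\mu^2 B))$; the constraint $B\geq 1/[Q(1-\eta\mu)^{Q-1}]$ is precisely what guarantees every $|\mathcal{B}_k|\geq 1$. The main obstacle is exactly this step: the $P_k$'s are correlated across $k$ through the shared factors $M_{k+1},\ldots,M_Q$, and picking the correct reverse filtration while balancing the decay of $\|P_k v^*\|$ against the decaying batch schedule is what lets the $1/B$ rate emerge rather than a weaker $Q$-inflated or $1/(BQ)$ bound.
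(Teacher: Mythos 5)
Your proof is correct, and for the variance bound it takes a genuinely different route from the paper's. The bias argument (exact Neumann-series truncation of $H^{-1}$ plus the contraction $\|I-\eta H\|\le 1-\eta\mu$) is identical to the paper's. For the variance, the paper first applies Cauchy--Schwarz over the $Q+1$ terms of the truncated Neumann sum (picking up an explicit factor of $Q$, see the term $4\eta^2M^2Q\sum_q M_q$ in \cref{eq:init}) and then bounds each $\mathbb{E}M_q=\mathbb{E}\|\prod_{j}(I-\eta\nabla_y^2G(\cdot;\gB_j))-(I-\eta\nabla_y^2g)^q\|^2$ by a recursion in $q$ (\cref{eq:mq}) that peels off one stochastic factor at a time, using conditional unbiasedness to kill the cross term at each step; the geometric batch schedule is then inserted into the resulting sum $\sum_{j\le q}|\gB_{Q+1-j}|^{-1}$. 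You instead reorganize the double sum by the innovation index $k$ (which batch's noise enters), obtaining the telescoping identity $(\tilde\pi_Q-\pi_Q)v^*=\sum_k A_k\bar M_kP_kv^*$ whose summands are orthogonal under the reverse filtration, so no Cauchy--Schwarz over the Neumann index is needed and the second moment splits exactly. Both arguments rest on the same three ingredients (a.s.\ contraction of each factor, conditional unbiasedness of $\nabla_y^2G(\cdot;\gB_k)$, and the per-batch variance $L^2/|\gB_k|$ from \Cref{le:boundv}), and both exploit the decaying schedule by cancelling one power of $(1-\eta\mu)$ against the batch size. What your version buys is exactness where the paper uses inequalities: your constants are slightly smaller than the stated $2,4,4$ (orthogonality makes the squared-norm decomposition an identity), and your bound degrades more gracefully under a uniform batch schedule, since the $Q$-inflation from Cauchy--Schwarz never appears. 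Two small points worth making explicit if you write this up: the operator $A_k$ and the matrix $H$ are deterministic only after conditioning on $(x_k,y_k^D)$, which is the conditioning in force throughout the paper's proof as well; and the per-batch variance step $\mathbb{E}[\|\bar M_kw\|^2\mid w]\le \eta^2L^2\|w\|^2/|\gB_k|$ should be applied to the vector $\bar M_k(P_kv^*)$ directly (an i.i.d.\ average of mean-zero vectors) rather than through an operator-norm variance of the averaged matrix, which is exactly how you phrased it.
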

\vspace{-0.2cm}
\Cref{prop:hessian} shows that if we choose $Q$, $B$ and $D_f$ at the order level of $\mathcal{O}(\log \frac{1}{\epsilon}) $, $\mathcal{O}(1/\epsilon)$ and $\mathcal{O}(1/\epsilon)$,  the bias and variance are smaller than $\mathcal{O}(\epsilon)$, and the required number of samples is
$\sum_{j=1}^Q BQ(1-\eta\mu)^{j-1} = \mathcal{O}\left(\epsilon^{-1}\log \frac{1}{\epsilon}\right)$.
Note that  the chosen batch size $|\gB_{Q+1-j}|$ exponentially decays w.r.t.~the index $j$. In comparison, the uniform choice of all $|\gB_j|$ would yield a worse complexity of $ \mathcal{O}\big( \epsilon^{-1}(\log\frac{1}{\epsilon})^2\big)$.


We next analyze stocBiO when $\Phi(x)$ is nonconvex.
\begin{theorem}\label{th:nonconvex}
Suppose Assumptions~\ref{assum:geo},~\ref{ass:lip}, \ref{high_lip} and~\ref{ass:bound} hold. Define 
$L_\Phi = L + \frac{2L^2+\tau M^2}{\mu} + \frac{\rho L M+L^3+\tau M L}{\mu^2} + \frac{\rho L^2 M}{\mu^3}$, and choose $\beta=\frac{1}{4L_\Phi}, \eta<\frac{1}{L}$, and 
 $D\geq\Theta(\kappa\log \kappa)$, where the detailed form of $D$ can be found in \Cref{mianshisimida}.  
We have
\begin{align}\label{eq:main_nonconvex}
\frac{1}{K}\sum_{k=0}^{K-1}\mathbb{E}\|\nabla&\Phi(x_k)\|^2 \leq  \mathcal{O}\Big( \frac{L_\Phi}{K}+  \kappa^2(1-\eta \mu)^{2Q} \nonumber
\\&+\frac{\kappa^5\sigma^2}{S}+ \frac{\kappa^2}{D_g} +\frac{\kappa^2}{D_f}+ \frac{\kappa^2}{B}\Big).
\end{align}
In order to achieve an $\epsilon$-accurate stationary point, the complexities satisfy 
\begin{list}{$\bullet$}{\topsep=0.4ex \leftmargin=0.15in \rightmargin=0.in \itemsep =0.01in}
\item Gradient: {\small$\mbox{\normalfont Gc}(F,\epsilon)=\mathcal{O}(\kappa^5\epsilon^{-2}), \mbox{\normalfont Gc}(G,\epsilon)=\mathcal{O}(\kappa^9\epsilon^{-2}).$}
\item Jacobian- and Hessian-vector product complexities: {\small$\mbox{\normalfont JV}(G,\epsilon)=\mathcal{O}(\kappa^5\epsilon^{-2}), \mbox{\normalfont HV}(G,\epsilon)=\mathcal{\widetilde O}(\kappa^6\epsilon^{-2}).$}
\end{list}
\end{theorem}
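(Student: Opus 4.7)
The plan is to combine three ingredients: (i) the $L_\Phi$-smoothness of $\Phi(x)$ that follows from Assumptions \ref{assum:geo}--\ref{high_lip} (this gives the coefficient $L_\Phi = \Theta(\kappa^3)$ appearing in the rate), (ii) a careful bias-variance decomposition of the mini-batch hypergradient estimator $\widehat\nabla\Phi(x_k)$ defined in~\cref{estG}, leaning on \Cref{prop:hessian} for the $v_Q$ component, and (iii) a warm-start Lyapunov argument that controls the inner-loop tracking error $\mathbb{E}\|y_k^D - y^*(x_k)\|^2$ across the outer iterations. Specifically, first I would invoke the descent lemma for an $L_\Phi$-smooth function to write
\begin{align*}
\Phi(x_{k+1}) \leq \Phi(x_k) - \beta \langle \nabla\Phi(x_k), \widehat\nabla\Phi(x_k)\rangle + \tfrac{\beta^2 L_\Phi}{2}\|\widehat\nabla\Phi(x_k)\|^2,
\end{align*}
take conditional expectation given the history $\mathcal{F}_k$, and reduce the inner product term by Young's inequality to the squared distance between $\mathbb{E}[\widehat\nabla\Phi(x_k)\mid\mathcal{F}_k]$ and $\nabla\Phi(x_k)$ plus $\|\nabla\Phi(x_k)\|^2$.

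Next I would bound the conditional bias and variance of $\widehat\nabla\Phi(x_k)$ by writing
\begin{align*}
\widehat\nabla\Phi(x_k) - \nabla\Phi(x_k) = \underbrace{\nabla_x F(x_k,y_k^D;\mathcal{D}_F) - \nabla_x f(x_k,y^*(x_k))}_{(\mathrm{I})} - \underbrace{\bigl[\nabla_x\nabla_y G(x_k,y_k^D;\mathcal{D}_G)v_Q - \nabla_x\nabla_y g(x_k,y^*(x_k))v_k^*\bigr]}_{(\mathrm{II})},
\end{align*}
then further splitting (II) by inserting $\nabla_x\nabla_y g(x_k,y_k^D)v_Q$ and $\nabla_x\nabla_y g(x_k,y_k^D)[\nabla_y^2 g(x_k,y_k^D)]^{-1}\nabla_y f(x_k,y_k^D)$. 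The Lipschitzness of $\nabla_x\nabla_y g$ and boundedness of $v_k^*$ (which is $\mathcal{O}(M/\mu)$ by the strong convexity) convert each piece into a multiple of $\|y_k^D - y^*(x_k)\|$ plus the bias/variance already quantified by \Cref{prop:hessian} (yielding the $\kappa^2(1-\eta\mu)^{2Q}$, $\kappa^2/B$, $\kappa^2/D_f$ terms), and the sample-set variance of $\nabla_x\nabla_y G(\cdot;\mathcal{D}_G)$ contributes the $\kappa^2/D_g$ term via Assumption~\ref{ass:bound} applied to $\nabla G$.

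The central technical step, and the main obstacle, is controlling $\mathbb{E}\|y_k^D - y^*(x_k)\|^2$ under the warm start $y_k^0 = y_{k-1}^D$. For this I would establish a one-step recursion using strong convexity of $G(x_k,\cdot;\mathcal{S}_{t-1})$: SGD on a $\mu$-strongly convex objective gives $\mathbb{E}\|y_k^D - y^*(x_k)\|^2 \leq (1-\alpha\mu)^D\|y_k^0 - y^*(x_k)\|^2 + \mathcal{O}(\alpha\sigma^2/(\mu S))$, and then couple the initialization by $\|y_k^0 - y^*(x_k)\|^2 \leq 2\|y_{k-1}^D - y^*(x_{k-1})\|^2 + 2\|y^*(x_k)-y^*(x_{k-1})\|^2$, with the second term bounded by $\kappa^2\beta^2\|\widehat\nabla\Phi(x_{k-1})\|^2$ since $y^*(\cdot)$ is $\kappa$-Lipschitz. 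Choosing $D = \Theta(\kappa\log\kappa)$ makes the contraction factor $(1-\alpha\mu)^D$ small enough to absorb the factor $2$ and the $\kappa$-drift, producing a geometric recursion on $\delta_k := \mathbb{E}\|y_k^D - y^*(x_k)\|^2$ of the form $\delta_k \leq \tfrac{1}{2}\delta_{k-1} + c_1\kappa^2\beta^2\mathbb{E}\|\widehat\nabla\Phi(x_{k-1})\|^2 + c_2\sigma^2/(\mu^2 S)$; telescoping converts this into a uniform bound whose $\sigma^2/S$ coefficient carries the $\kappa^5$ factor after composition with the $\kappa^2$ prefactors in (I)+(II).

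Finally I would form a Lyapunov function $V_k = \Phi(x_k) - \inf\Phi + c\delta_k$ with a small enough constant $c$ so that the $\kappa^2\beta^2\|\widehat\nabla\Phi(x_{k-1})\|^2$ drift term is dominated by the $-\tfrac{\beta}{2}\|\nabla\Phi(x_k)\|^2$ term coming from the descent lemma (this dictates $\beta = \Theta(1/L_\Phi)$ and explains why the $D$-choice must beat the condition number), sum the recursion over $k=0,\ldots,K-1$, and rearrange to obtain~\cref{eq:main_nonconvex}. The complexity bounds then follow by setting $S,D_f,D_g,B = \Theta(\kappa^3/\epsilon)$ (or $\Theta(\kappa^6/\epsilon)$ for $S$), $Q = \Theta(\kappa\log(\kappa/\epsilon))$, and $K = \Theta(\kappa^3/\epsilon)$, and then multiplying by $D$, $Q$, and the per-iteration batch costs to yield $\mathrm{Gc}(F,\epsilon) = \mathcal{O}(\kappa^5\epsilon^{-2})$, $\mathrm{Gc}(G,\epsilon) = \mathcal{O}(\kappa^9\epsilon^{-2})$, $\mathrm{JV}(G,\epsilon) = \mathcal{O}(\kappa^5\epsilon^{-2})$, and $\mathrm{HV}(G,\epsilon) = \widetilde{\mathcal{O}}(\kappa^6\epsilon^{-2})$, where the exponentially decaying batch schedule in \Cref{prop:hessian} is exactly what prevents the Hessian-vector complexity from incurring an extra $\log(1/\epsilon)$ factor.
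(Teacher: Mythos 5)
Your proposal follows essentially the same route as the paper's proof: the descent lemma for the $L_\Phi$-smooth $\Phi$, a bias--variance decomposition of $\widehat\nabla\Phi(x_k)$ through the intermediate quantity $\nabla_x f(x_k,y_k^D)-\nabla_x\nabla_y g(x_k,y_k^D)[\nabla_y^2 g(x_k,y_k^D)]^{-1}\nabla_y f(x_k,y_k^D)$ combined with \Cref{prop:hessian} (this is exactly the content of Lemmas~\ref{le:first_m} and~\ref{le:variancc}), and a warm-start recursion on the tracking error $\mathbb{E}\|y_k^D-y^*(x_k)\|^2$ telescoped against the descent inequality (\Cref{tra_error}); your Lyapunov-function packaging is equivalent to the paper's double telescoping with the geometric-sum bound. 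Two minor inaccuracies are worth flagging. First, the $\kappa^2/D_g$ term comes from the bounded variance of $\nabla_x\nabla_y G(\cdot;\mathcal{D}_G)$, which the paper derives from the Lipschitz conditions in Assumption~\ref{ass:lip} via \Cref{le:boundv}, not from Assumption~\ref{ass:bound} (which concerns only $\nabla G$). Second, your final parameter instantiation $S,D_f,D_g,B=\Theta(\kappa^3/\epsilon)$ (with $S=\Theta(\kappa^6/\epsilon)$) is inconsistent with the complexities you then state: matching~\cref{eq:main_nonconvex} requires $D_f,D_g,B=\Theta(\kappa^2/\epsilon)$ and $S=\Theta(\kappa^5/\epsilon)$, since, e.g., $\mbox{\normalfont Gc}(F,\epsilon)=KD_f$ with $K=\Theta(\kappa^3/\epsilon)$ would otherwise be $\mathcal{O}(\kappa^6\epsilon^{-2})$ rather than the claimed $\mathcal{O}(\kappa^5\epsilon^{-2})$. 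Neither issue affects the soundness of the overall argument.
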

\Cref{th:nonconvex} shows that stocBiO converges sublinearly with the convergence error decaying exponentially w.r.t.~$Q$ and sublinearly w.r.t.~the batch sizes $S,D_g,D_f$ for gradient estimation and $B$ for Hessian inverse estimation. In addition, it can be seen that the number $D$ of the inner-loop steps is 
at  a  constant level, rather than a typical choice of $\Theta(\log(\frac{1}{\epsilon}))$.

As shown in~\Cref{tab:stochastic}, the gradient complexities of our proposed algorithm in terms of  $F$  and $G$  improve those of BSA in \citealt{ghadimi2018approximation} by an order of $\kappa$ and $\epsilon^{-1}$, respectively. In addition, the Jacobian-vector product complexity $\mbox{\normalfont JV}(G,\epsilon)$ of our algorithm improves that of BSA by the order of $\kappa$. 
In terms of the accuracy $\epsilon$, our gradient,  Jacobian- and Hessian-vector product complexities  improve those of TTSA in \citealt{hong2020two} all by an order of $\epsilon^{-0.5}$.  
\section{Applications to Meta-Learning}
Consider the few-shot meta-learning problem with $m$ tasks $\{\mathcal{T}_i,i=1,...,m\}$
 sampled from distribution $P_\gT$. Each task $\mathcal{T}_i$ has a loss function $\gL(\phi,w_i;\xi)$ over each data sample $\xi$, where $\phi$ are the parameters of an embedding model shared by all tasks, and $w_i$ are the task-specific parameters. The goal of this framework is to  find good parameters $\phi$ for all tasks, and building on the embedded features, each task then adapts its own parameters $w_i$ by minimizing its loss.
 

  \begin{figure*}[ht]
  \vspace{-2mm}
	\centering    
	\subfigure[dataset: miniImageNet  ]{\label{fig1:a}\includegraphics[width=40mm]{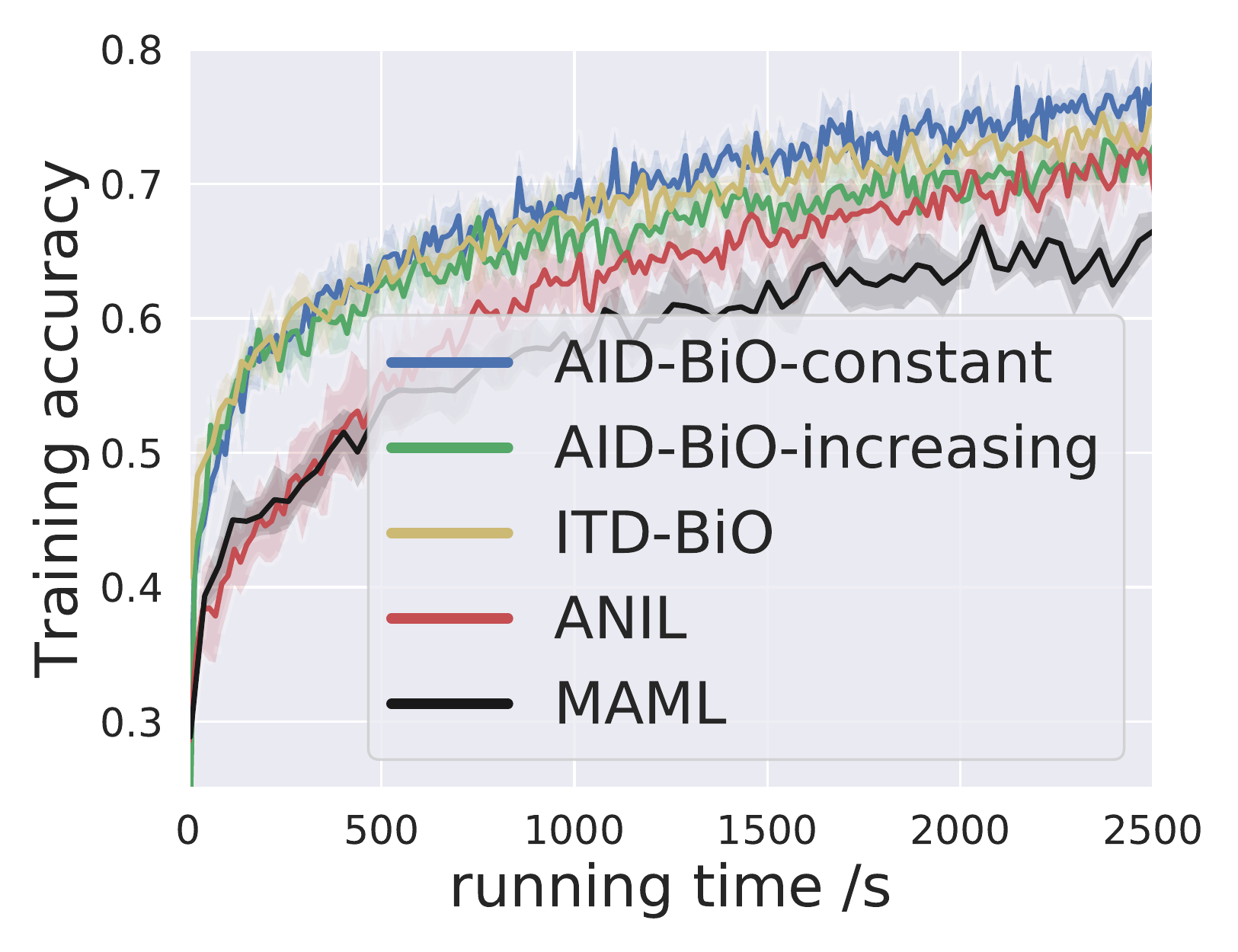}\includegraphics[width=40mm]{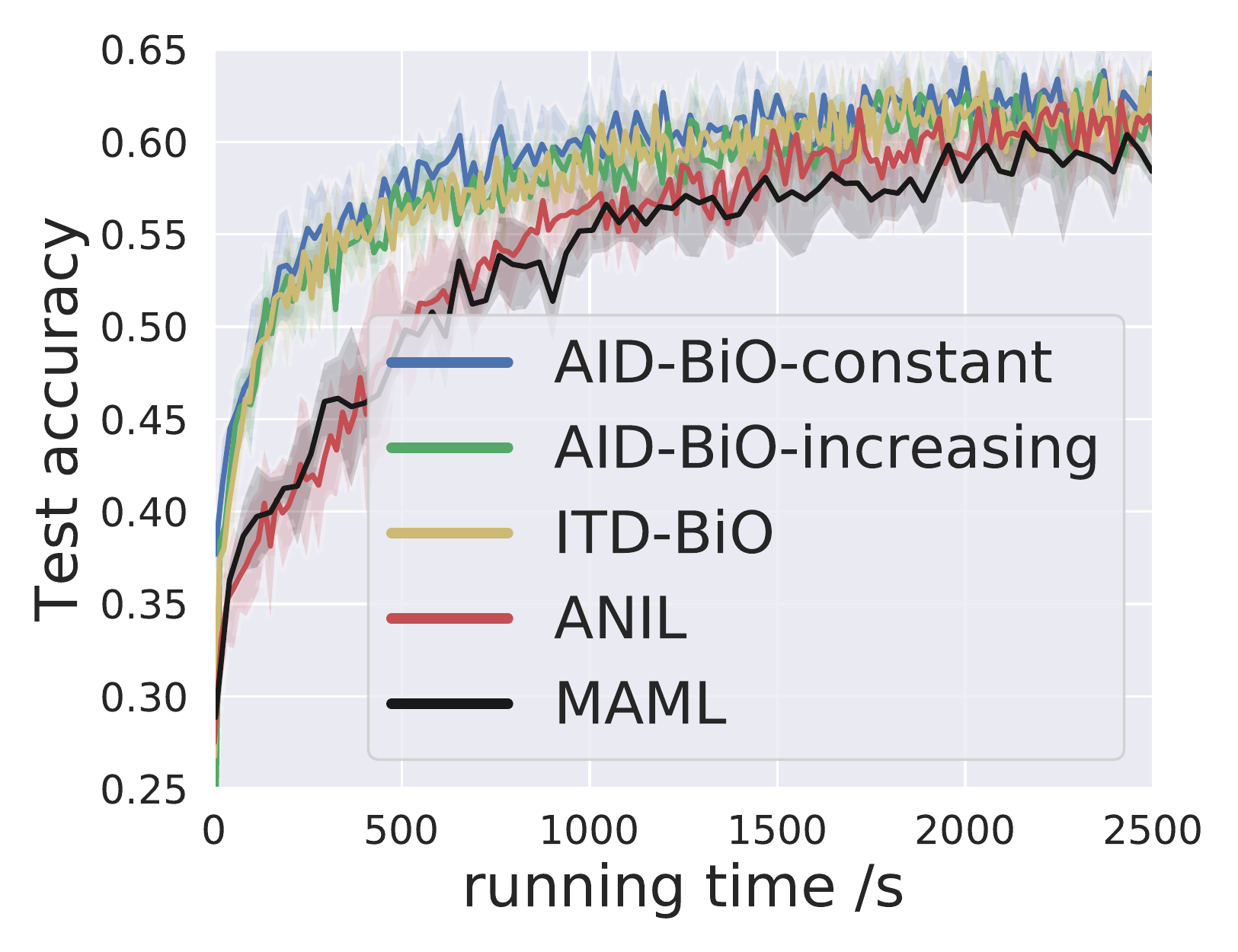}} 
	\subfigure[dataset: FC100]{\label{fig1:b}\includegraphics[width=40mm]{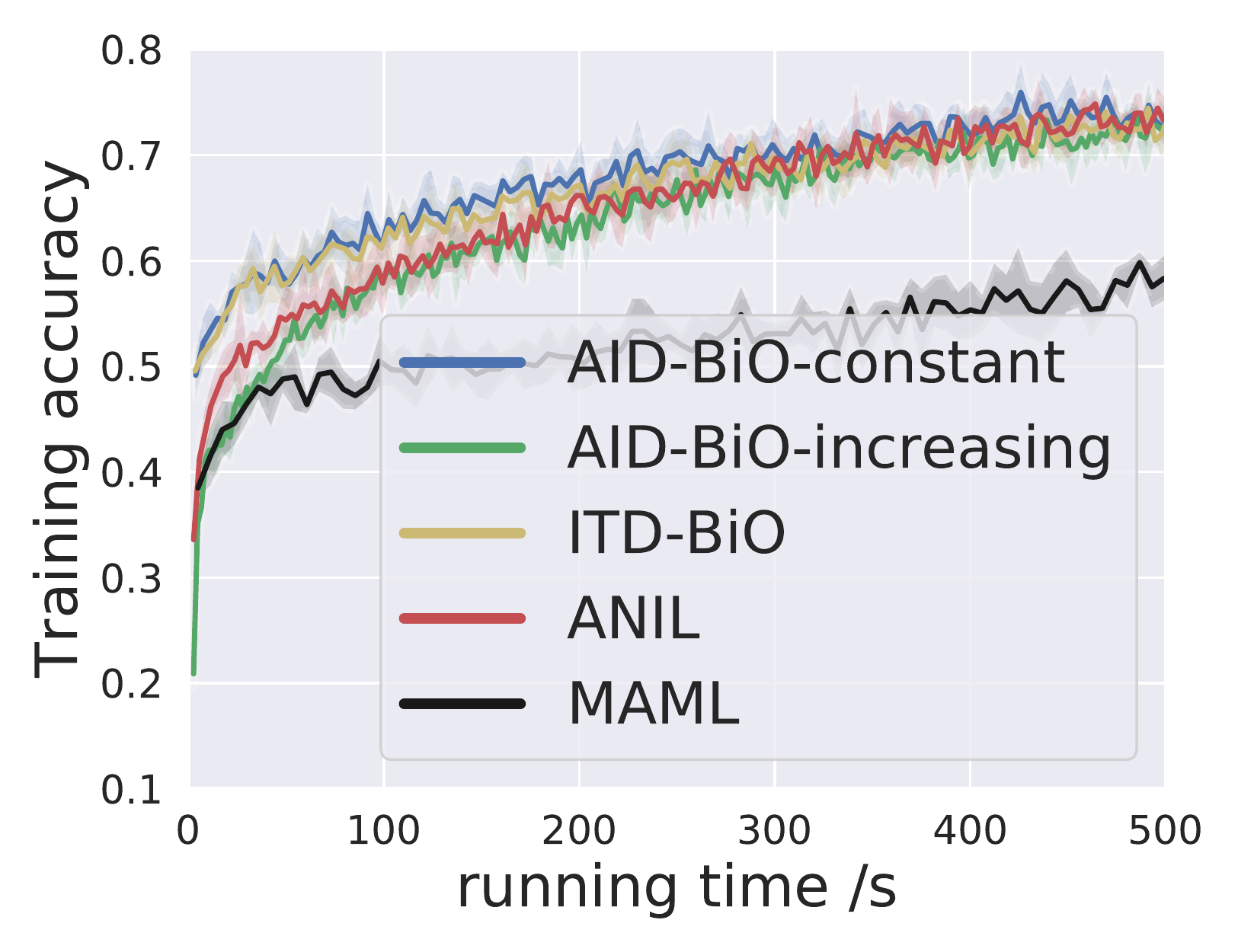}\includegraphics[width=40mm]{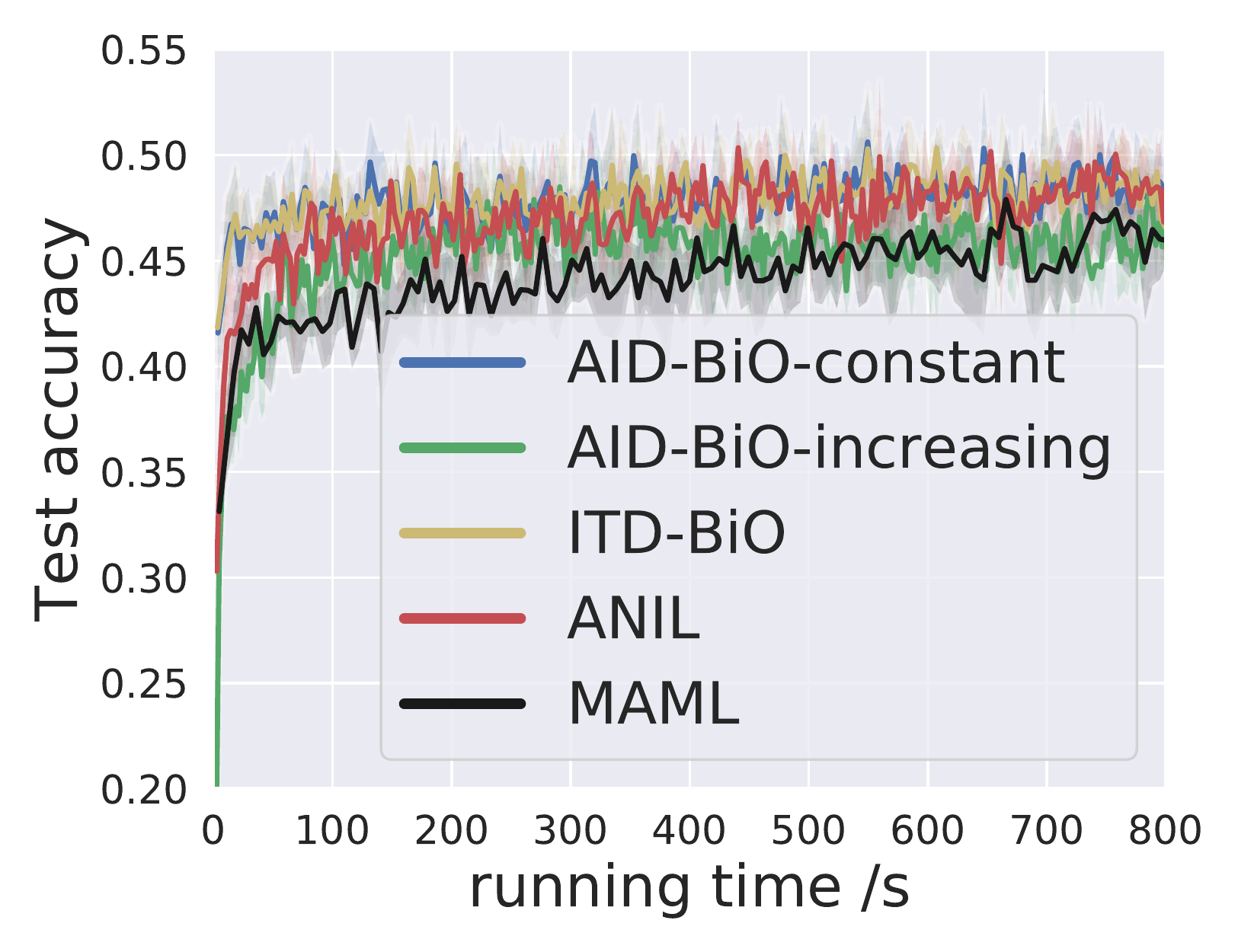}}  
	\vspace{-0.2cm}
	\caption{Comparison of various bilevel algorithms on meta-learning. For each dataset, left plot: training accuracy v.s. running time; right plot: test accuracy v.s. running time.}\label{fig:strfc100}
	  \vspace{-0.2cm}
\end{figure*}

  \begin{figure*}[ht]
  \vspace{-2mm}
	\centering    
	\subfigure[$T=10$, miniImageNet dataset]{\label{fig1:ci}\includegraphics[width=41mm]{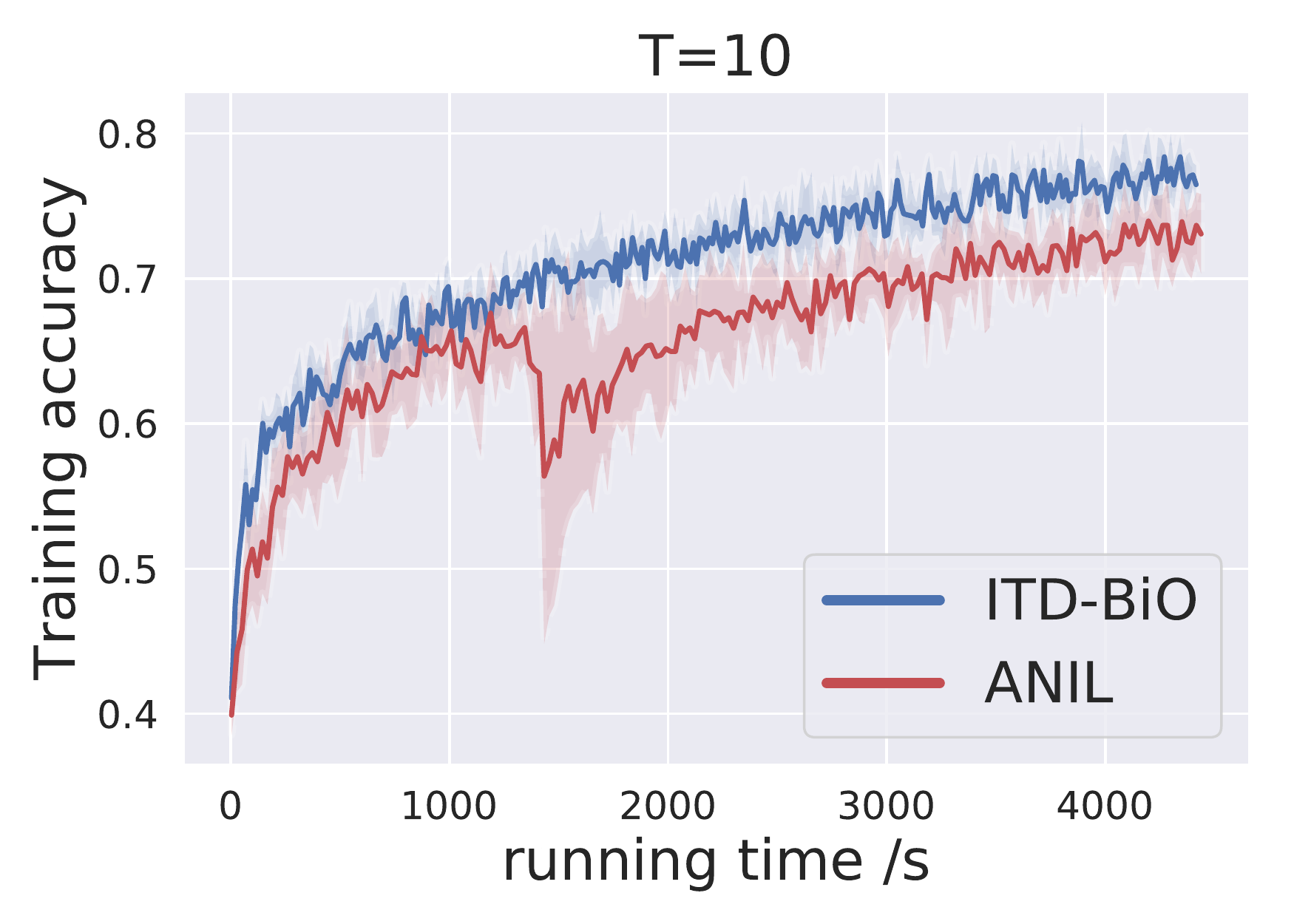}\includegraphics[width=41mm]{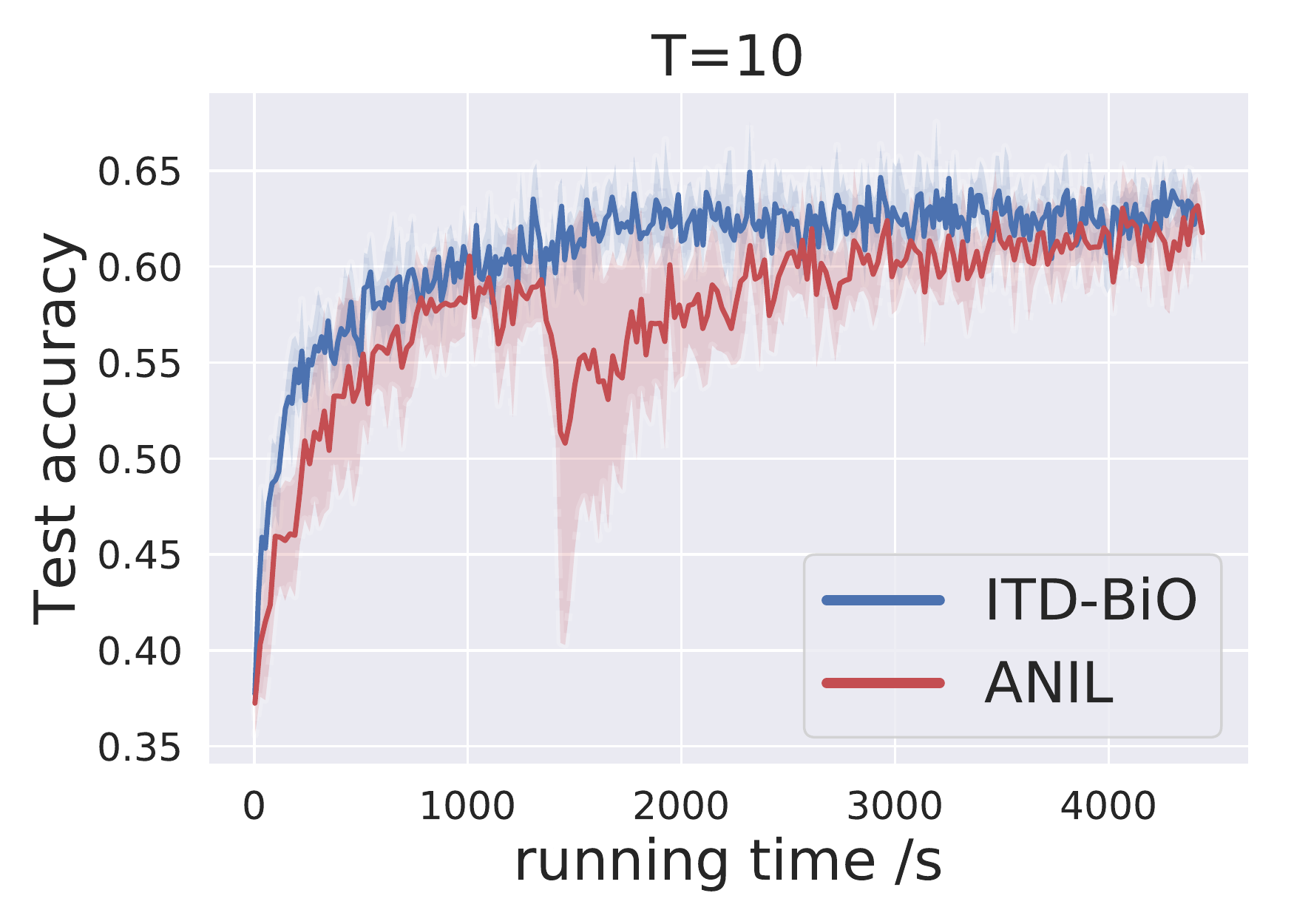}} 
	\subfigure[$T=20$, FC100 dataset]{\label{fig1:di}\includegraphics[width=38mm]{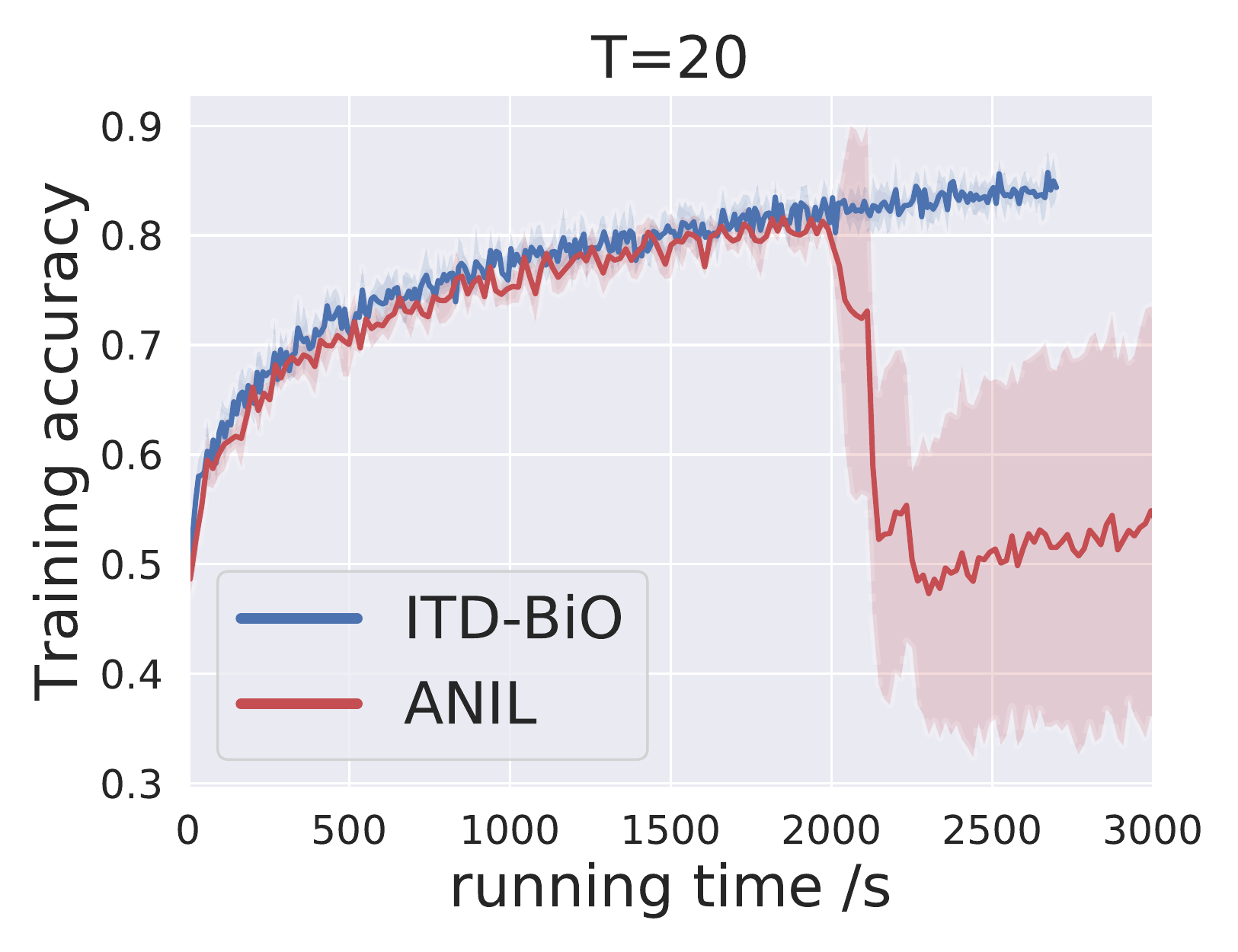}\includegraphics[width=38mm]{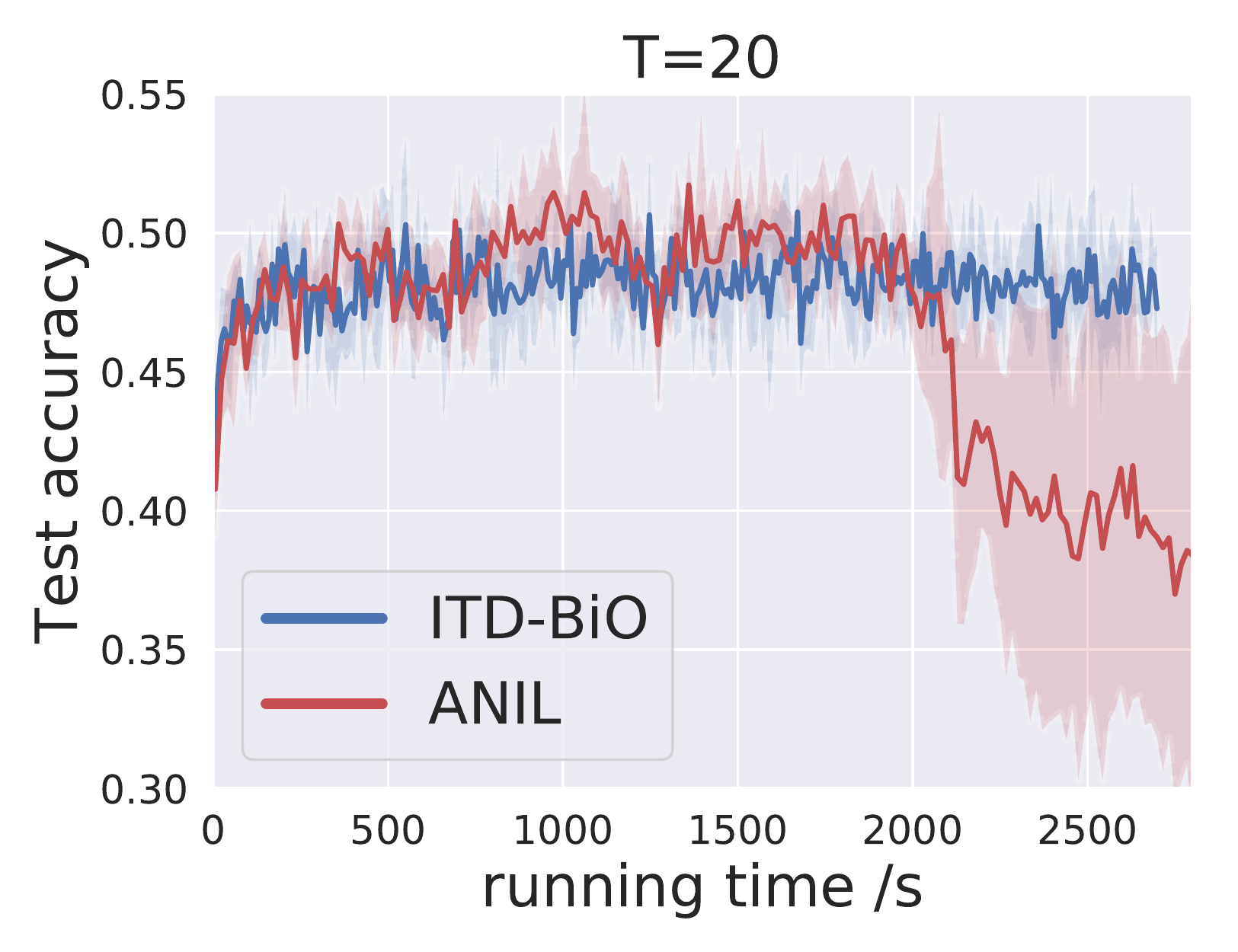}}  
	\vspace{-0.2cm}
	\caption{Comparison of ITD-BiO and ANIL with a relatively large inner-loop iteration number $T$.}\label{figure:resultlg}
	  \vspace{-0.2cm}
\end{figure*}

The model training  takes a bilevel procedure. In the lower-level stage, building on the embedded features, the base learner of task $\mathcal{T}_i$ searches $w_i^*$ as the minimizer of its 
loss 
  over a training set $\gS_i$. In the upper-level stage, the meta-learner evaluates the minimizers $w_i^*,i=1,...,m$ on held-out test sets, and optimizes $\phi$ of the embedding model over all tasks. Let $\widetilde w=(w_1,...,w_m)$ denote all task-specific parameters. Then, the objective function is given by 
 \begin{align}\label{obj:meta}
 &\min_{\phi} \gL_{\gD} (\phi,\widetilde w^{*})=\frac{1}{m}\sum_{i=1}^m\underbrace{\frac{1}{|\gD_i|}\sum_{\xi\in\gD_i}\mathcal{L}(\phi,w_i^*;\xi)}_{\gL_{\gD_i}(\phi,w_i^*)\text{: task-specific upper-level loss}} \nonumber
 \\& \;\mbox{s.t.} \; \widetilde w^* = \argmin_{\widetilde w} \gL_{\gS} (\phi,\widetilde w)=\frac{\sum_{i=1}^m\gL_{\gS_i}(\phi,w_i)
 }{m},
 \end{align} 
 where $\gL_{\gS_i}(\phi,w_i)= \frac{1}{|\gS_i|}\sum_{\xi\in\gS_i}\mathcal{L}(\phi,w_i;\xi) + \gR(w_i)$ with a strongly-convex regularizer $\gR(w_i)$, e.g., $L^2$, and $\gS_i,\gD_i$ are the training and test datasets of task $\mathcal{T}_i$. Note that the lower-level problem is equivalent to solving each $w^*_i$ as a minimizer of the task-specific loss $\gL_{\gS_i}(\phi,w_i)$ for $i=1,...,m$.  In practice, $w_i$ often corresponds to the parameters of the last {\em linear} layer of a neural network and $\phi$ are the parameters of the remaining layers (e.g., $4$ convolutional layers in~\citealt{bertinetto2018meta,ji2020convergence}), and hence the lower-level function is {\em strongly-convex} w.r.t. $\widetilde w$ and the upper-level function $\gL_{\gD} (\phi,\widetilde w^{*}(\phi))$ is generally nonconvex w.r.t. $\phi$. In addition, due to the small sizes of datasets $\gD_i$ and $\gS_i$ in few-shot learning, all updates for each task $\gT_i$ use {\em full gradient descent} without data resampling. 
 As a result,  AID-BiO and ITD-BiO in~\Cref{alg:main_deter} can be applied here.  
In some applications where the number $m$ of tasks is large,  it is more efficient to sample a batch $\gB$ of i.i.d.\ tasks from $\{\mathcal{T}_i,i=1,...,m\}$ at each meta (outer) iteration, and optimizes the mini-batch versions $ \gL_{\gD} (\phi,\widetilde w;\gB) = \frac{1}{|\gB|}\sum_{i\in\gB}\gL_{\gD_i}(\phi,w_i)$ and $\gL_{\gS} (\phi,\widetilde w;\gB)=\frac{1}{|\gB|}\sum_{i\in\gB}\gL_{\gS_i}(\phi,w_i)$ instead. 

We next  provide the convergence result of ITD-BiO for this case, and that of AID-BiO can be similarly derived.
\begin{theorem}\label{th:meta_learning}
Suppose Assumptions~\ref{assum:geo},~\ref{ass:lip} and \ref{high_lip} hold and suppose each task loss $\gL_{\gS_i}(\phi,\cdot)$ is $\mu$-strongly-convex. Choose the same parameters $\beta,D$ as in~\Cref{th:determin}. Then, we have
\begin{align*}
\frac{1}{K}\sum_{k=0}^{K-1}\mathbb{E}\| \nabla \Phi(\phi_k)\|^2 \leq& \mathcal{O} \left(\frac{1}{K}+\frac{\kappa^2}{|\gB|}\right).
\end{align*}
\end{theorem}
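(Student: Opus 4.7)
The plan is to reduce the proof to a nonconvex stochastic descent analysis against the smooth objective $\Phi(\phi)=\gL_\gD(\phi,\widetilde w^*(\phi))$, using Theorem~\ref{th:determin} to control the per-task inner-loop bias and exploiting the task-decoupled structure of the hypergradient to control the task-sampling variance. The first observation I would record is that, because the lower-level minimization in~\cref{obj:meta} decouples across tasks, $w_i^*(\phi)$ depends only on $\phi$ and $\gS_i$, and the full-batch hypergradient equals the average of per-task hypergradients $h_i(\phi):=\nabla_\phi\gL_{\gD_i}(\phi,w_i^*(\phi))$. Hence the mini-batch hypergradient estimator $\widehat\nabla\Phi(\phi_k)=\frac{1}{|\gB|}\sum_{i\in\gB}\widehat h_i(\phi_k)$, where $\widehat h_i$ is the per-task ITD estimate produced by \Cref{alg:main_deter}, is an unbiased estimator (over the task sampling) of $\frac{1}{|\gB|}\sum_{i\in\gB} h_i(\phi_k)$, whose expectation over $\gB$ is $\nabla\Phi(\phi_k)$.

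Next I would bound the task-sampling variance by $\mathcal{O}(\kappa^2/|\gB|)$. By \Cref{prop:grad}, $h_i(\phi)=\nabla_\phi\gL_{\gD_i}(\phi,w_i^*)-\nabla_\phi\nabla_{w_i}\gL_{\gS_i}(\phi,w_i^*)[\nabla_{w_i}^2\gL_{\gS_i}(\phi,w_i^*)]^{-1}\nabla_{w_i}\gL_{\gD_i}(\phi,w_i^*)$. Under Assumptions~\ref{assum:geo} and~\ref{ass:lip}, each factor is bounded, and the inverse-Hessian factor contributes a bound of $1/\mu$, so $\|h_i(\phi_k)\|\leq M+LM/\mu=\mathcal{O}(\kappa)$, yielding $\mathbb{E}_\gB\|\frac{1}{|\gB|}\sum_{i\in\gB}h_i(\phi_k)-\nabla\Phi(\phi_k)\|^2=\mathcal{O}(\kappa^2/|\gB|)$ by the standard i.i.d.\ sampling argument. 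The inner-loop approximation $\widehat h_i-h_i$ is then controlled exactly as in the proof of~\Cref{th:determin}: the ITD analysis shows that $\|\widehat h_i(\phi_k)-h_i(\phi_k)\|^2\leq \mathcal{O}((1-\alpha\mu)^D\,\text{poly}(\kappa))$, so choosing $D\geq \Theta(\kappa\log\kappa)$ (with the constant absorbed into the same $D$ prescribed by~\Cref{th:determin}) makes this bias $\mathcal{O}(\kappa^2/|\gB|)$ or smaller. These two pieces combine into a mean-squared error bound $\mathbb{E}\|\widehat\nabla\Phi(\phi_k)-\nabla\Phi(\phi_k)\|^2\leq \mathcal{O}(\kappa^2/|\gB|)$ on the hypergradient estimator.

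Finally, I would apply the standard nonconvex descent lemma for the $L_\Phi$-smooth objective (smoothness of $\Phi$ with $L_\Phi=\Theta(\kappa^3)$ was already established in~\Cref{th:aidthem,th:determin}). With stepsize $\beta=1/(4L_\Phi)$, taking expectations of $\Phi(\phi_{k+1})-\Phi(\phi_k)\leq -\beta\langle\nabla\Phi(\phi_k),\widehat\nabla\Phi(\phi_k)\rangle+\tfrac{L_\Phi\beta^2}{2}\|\widehat\nabla\Phi(\phi_k)\|^2$, using $\langle\nabla\Phi,\widehat\nabla\Phi\rangle=\tfrac12\|\nabla\Phi\|^2+\tfrac12\|\widehat\nabla\Phi\|^2-\tfrac12\|\nabla\Phi-\widehat\nabla\Phi\|^2$, telescoping from $k=0$ to $K-1$, and dividing by $K$ yields
\begin{align*}
\frac{1}{K}\sum_{k=0}^{K-1}\mathbb{E}\|\nabla\Phi(\phi_k)\|^2 \leq \mathcal{O}\!\left(\frac{L_\Phi(\Phi(\phi_0)-\inf\Phi)}{K}+\frac{\kappa^2}{|\gB|}\right),
\end{align*}
which, absorbing the constants depending on $\kappa$ and $\Phi(\phi_0)-\inf\Phi$, matches the claimed bound.

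The main obstacle I expect is keeping the dependence on $\kappa$ in the variance clean: the per-task hypergradient norm has to be bounded by $\mathcal{O}(\kappa)$ (not higher) and the inner-loop bias contribution has to be decoupled carefully from the sampling variance, since naive bounding through the chain rule in~\Cref{deter:gdform} could introduce extra $\kappa$ factors. The resolution is to use the implicit form from~\Cref{prop:grad} to bound $\|h_i\|$, and to separate the ITD truncation error (decaying exponentially in $D$) from the i.i.d.\ task-sampling variance so that the $1/|\gB|$ term is not inflated by the condition number beyond the $\kappa^2$ already coming from $\|h_i\|^2$.
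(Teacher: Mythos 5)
Your proposal is correct and follows essentially the same route as the paper's proof: a descent-lemma argument for the $L_\Phi$-smooth objective, a task-sampling variance bound of order $(1+L/\mu)^2M^2/|\gB| = \mathcal{O}(\kappa^2/|\gB|)$ obtained from the per-task hypergradient norm bound $M+LM/\mu$, and the ITD inner-loop bias controlled exactly as in \Cref{th:determin} via \Cref{prop:partialG}. The only (cosmetic) difference is that you bound the exact per-task hypergradients at $w_i^*$ via the implicit form in \Cref{prop:grad} and center the variance at $\nabla\Phi(\phi_k)$, whereas the paper bounds the $D$-step estimates $\partial\gL_{\gD_i}(\phi_k,w_{i,k}^D)/\partial\phi_k$ via the chain-rule form in \Cref{deter:gdform} (where the geometric sum $\alpha LM\sum_{t}(1-\alpha\mu)^{D-1-t}\le LM/\mu$ yields the same $M+LM/\mu$ with no extra $\kappa$ factors, so the concern you raise about that route does not materialize) and centers the variance at the full-average $D$-step gradient.
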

\vspace{-0.3cm}
 \Cref{th:meta_learning} shows  that compared to the full batch case (i.e., without task sampling) in~\cref{obj:meta}, task sampling introduces a variance term $\mathcal{O}(\frac{1}{|\gB|})$ due to the stochastic nature of the algorithm.  



\subsection{Experiments}
 To validate our theoretical results for deterministic bilevel optimization, we compare the performance among the following four algorithms: ITD-BiO, AID-BiO-constant (AID-BiO with a constant number of inner-loop steps as in our analysis), AID-BiO-increasing (AID-BiO with an increasing number of inner-loop steps under analysis in~\citealt{ghadimi2018approximation}), and two popular meta-learning algorithms MAML\footnote{MAML consists of an inner loop for task  adaptation and an outer loop for meta initialization training.}~\citep{finn2017model} and ANIL\footnote{ANIL refers to almost no inner loop, which is an efficient MAML variant with task adaption on the last-layer of parameters.}~\citep{raghu2019rapid}. We conduct experiments over a 5-way 5-shot task on two datasets: FC100 and miniImageNet. The results are averaged over 10 trials with different random seeds. Due to the space limitations, we provide the model architectures and hyperparameter settings in~\Cref{appen:meta_learning}.

 \begin{figure*}[ht]
	\centering  
	\subfigure[Test loss and test accuracy v.s. running time]{\label{figure:lr}\includegraphics[width=40mm]{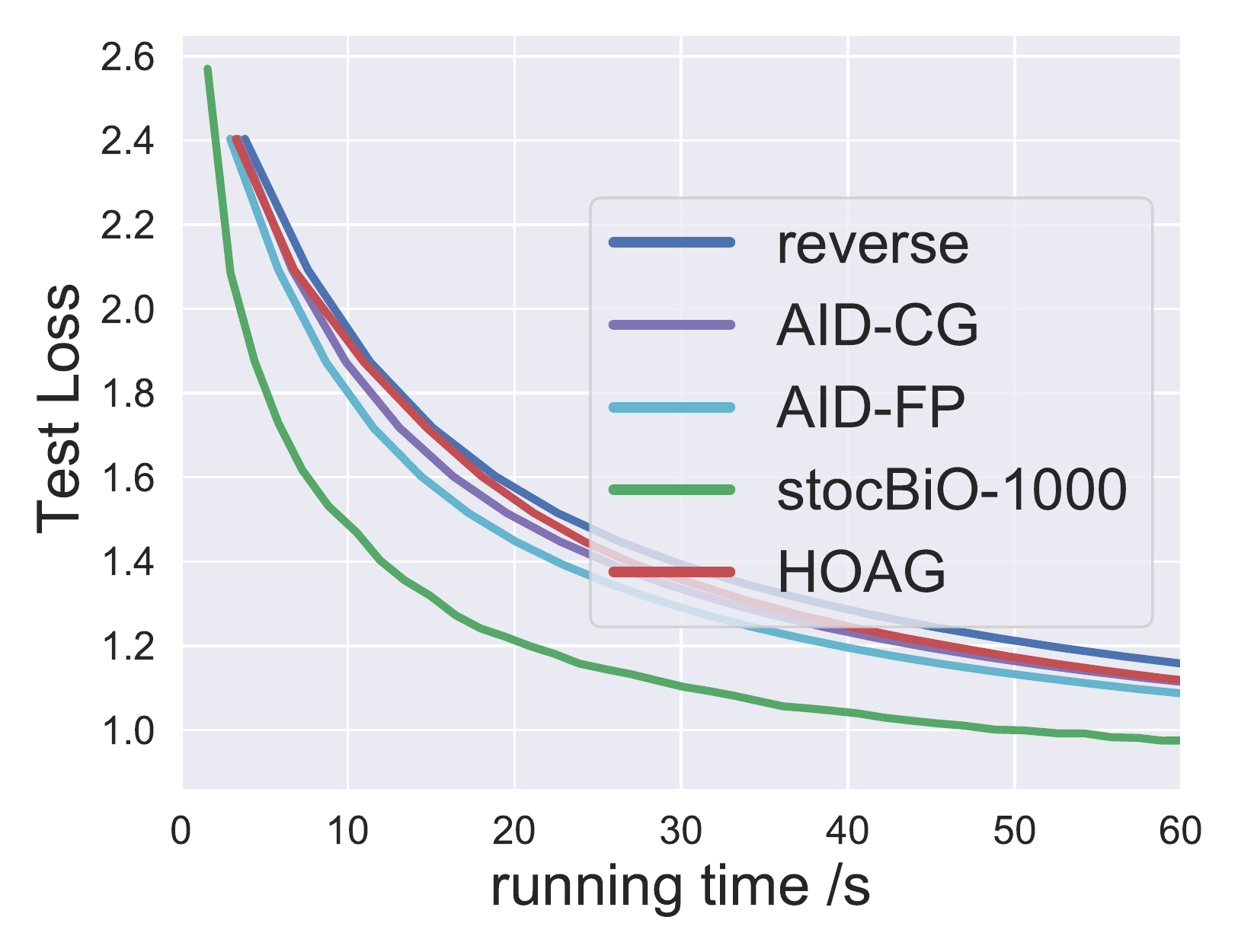}\includegraphics[width=40mm]{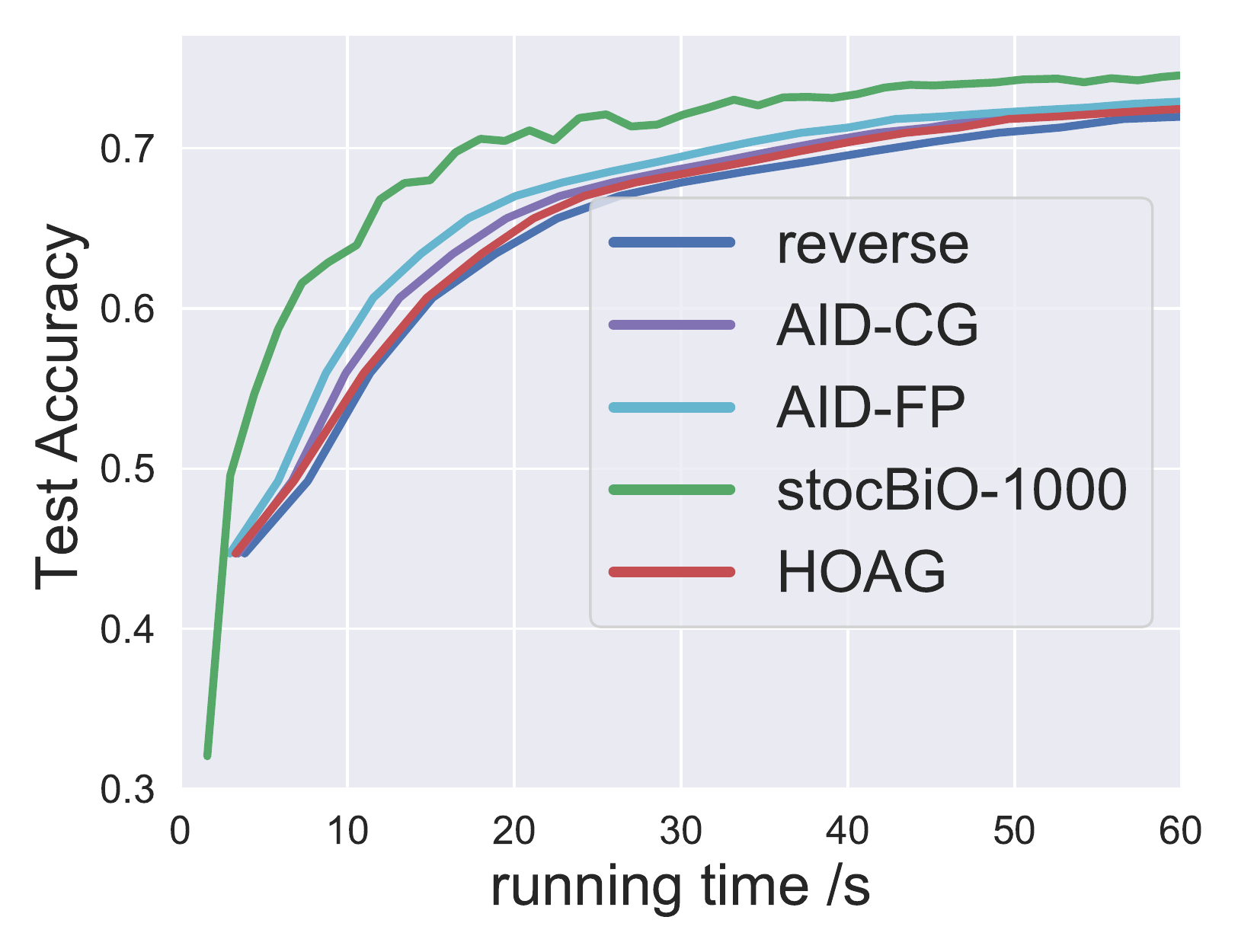}}    
	\subfigure[Convergence rate with different batch sizes]{\label{figure:batch}\includegraphics[width=40mm]{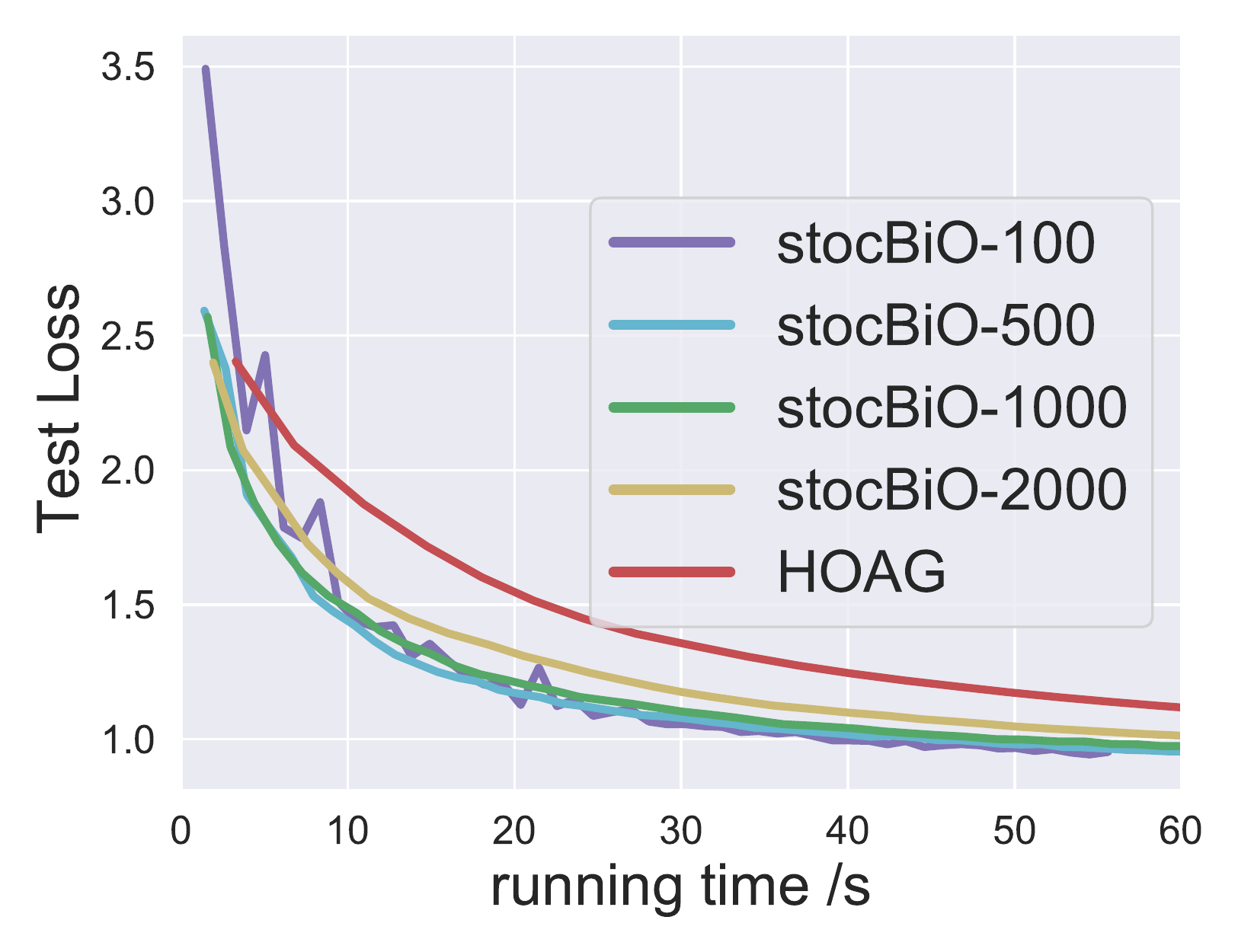}\includegraphics[width=40mm]{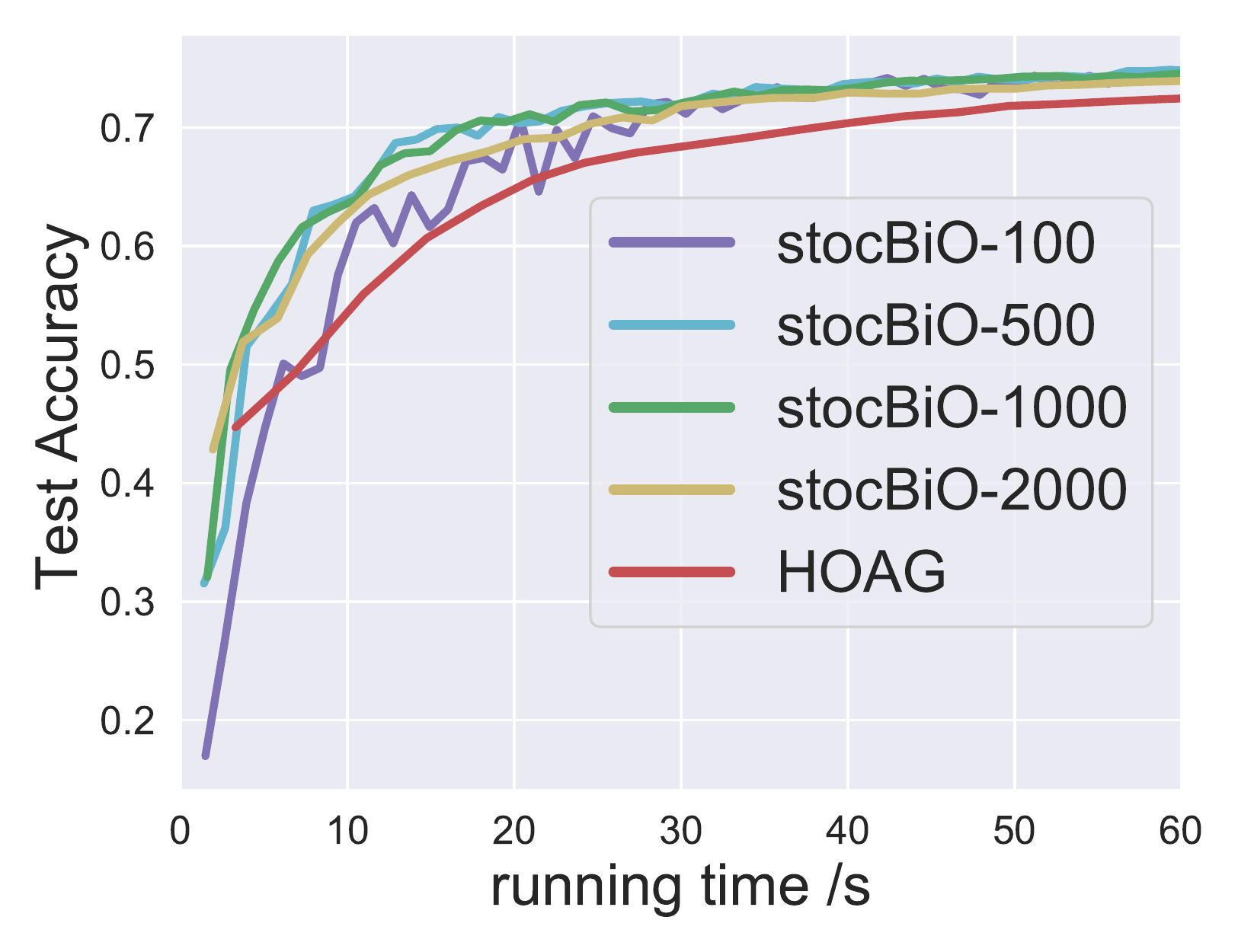}}  
	\vspace{-0.2cm}
	\caption{Comparison of various stochastic bilevel algorithms on logistic regression on 20 Newsgroup dataset.}\label{figure:newfigure}
	  \vspace{-0.3cm}
\end{figure*}

  \begin{figure*}[ht]
  \vspace{-2mm}
	\centering  
	\subfigure[Corruption rate $p=0.1$]{\label{fig2:c}\includegraphics[width=40mm]{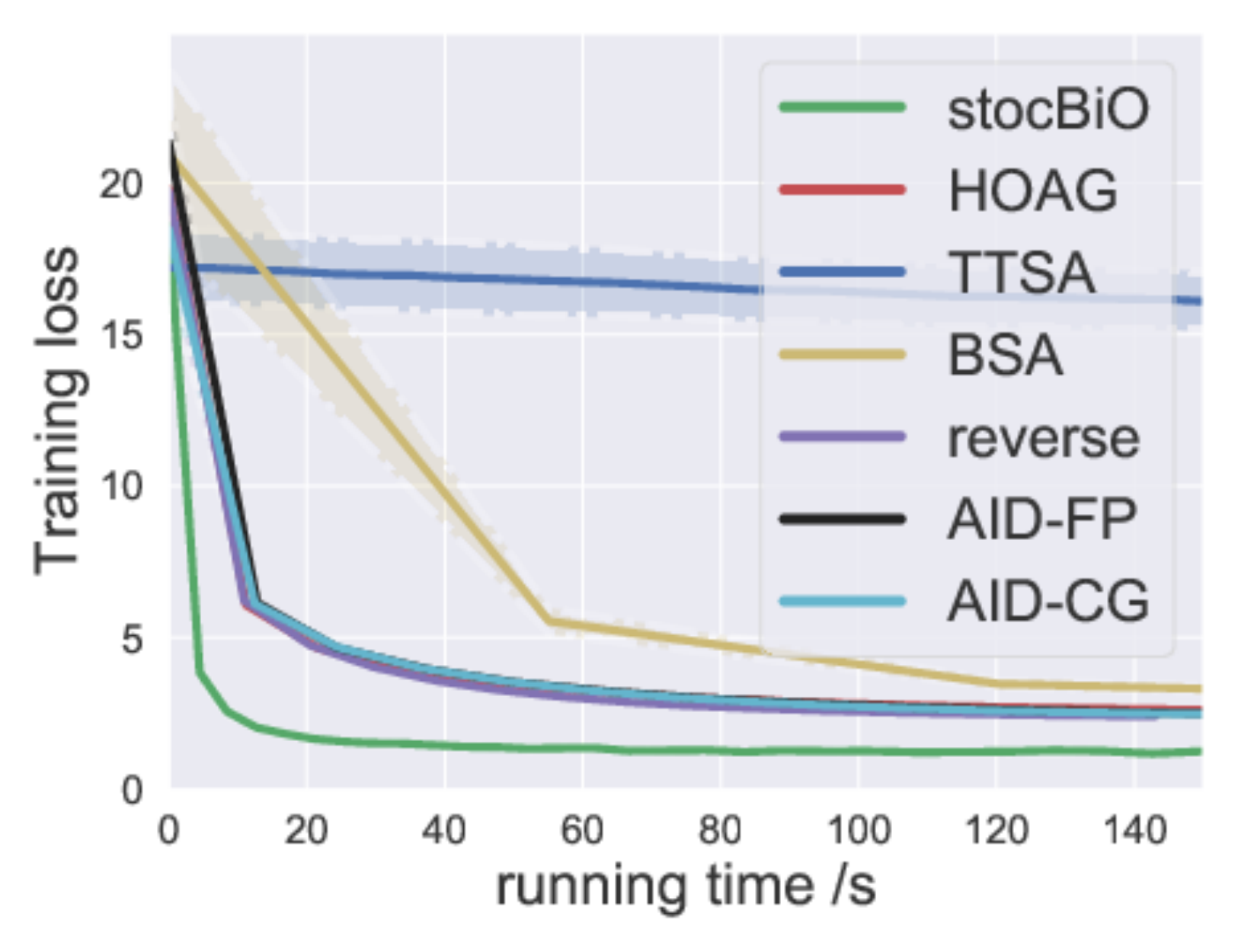}\includegraphics[width=40mm]{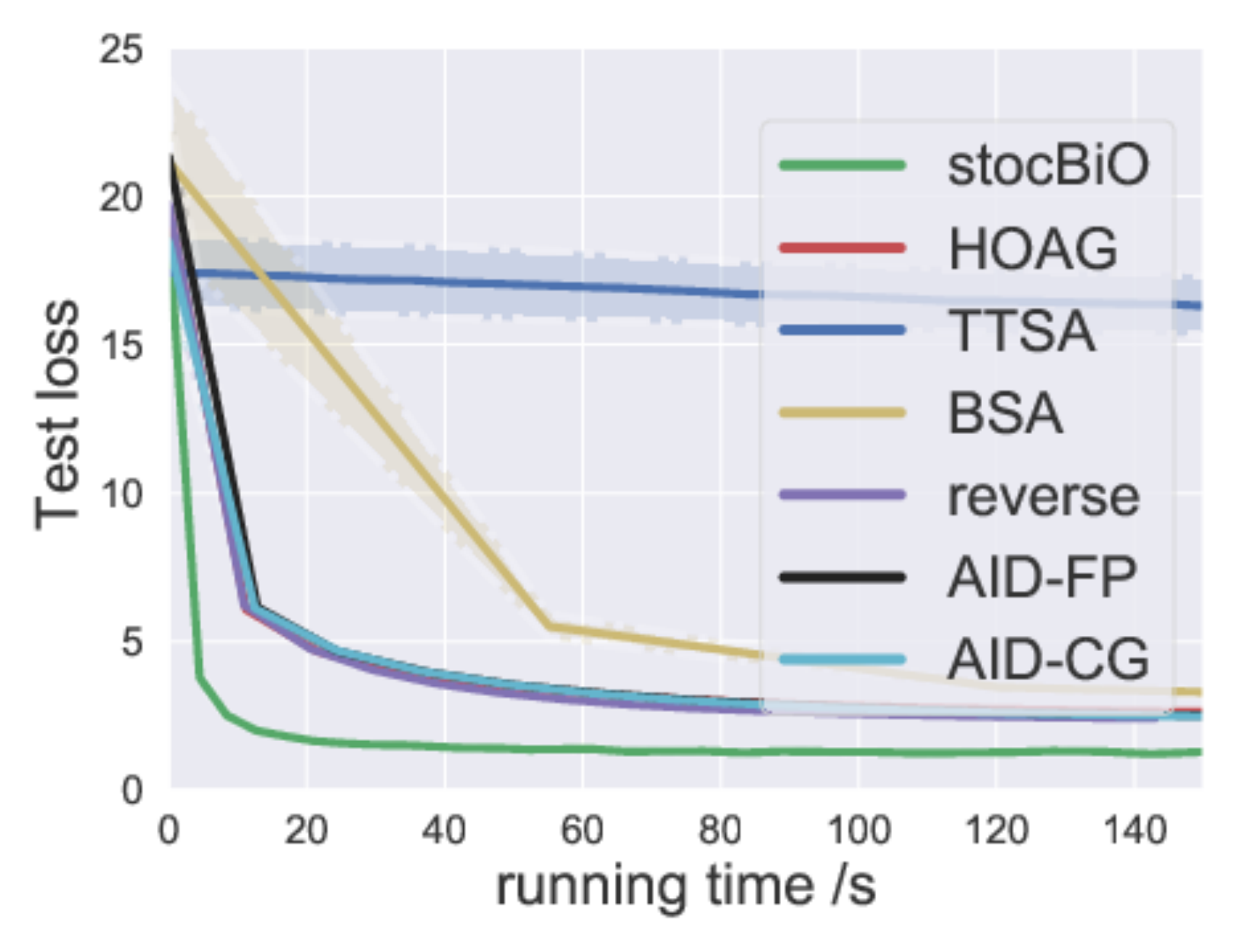}}    
	\subfigure[Corruption rate $p=0.4$]{\label{fig2:b}\includegraphics[width=40mm]{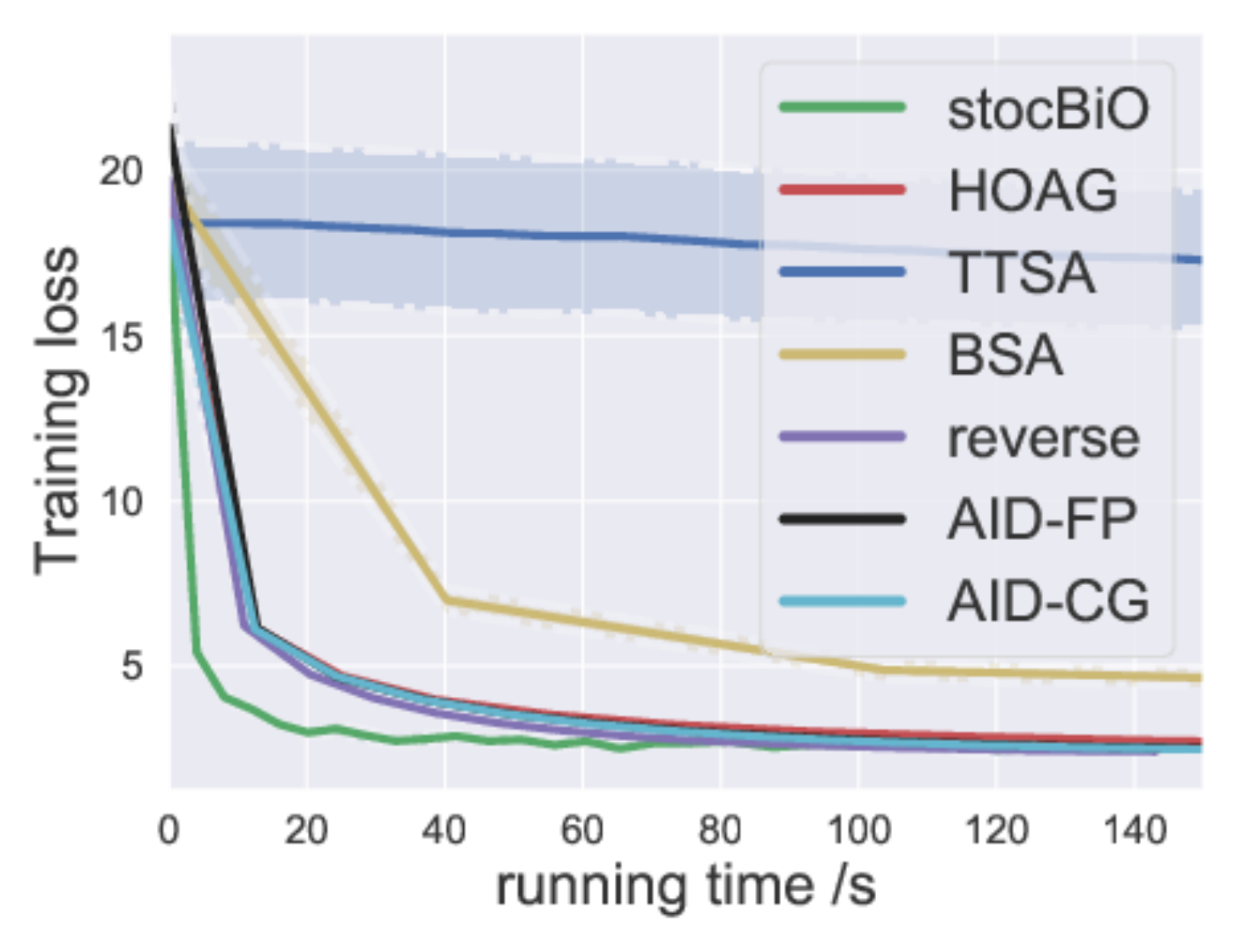}\includegraphics[width=40mm]{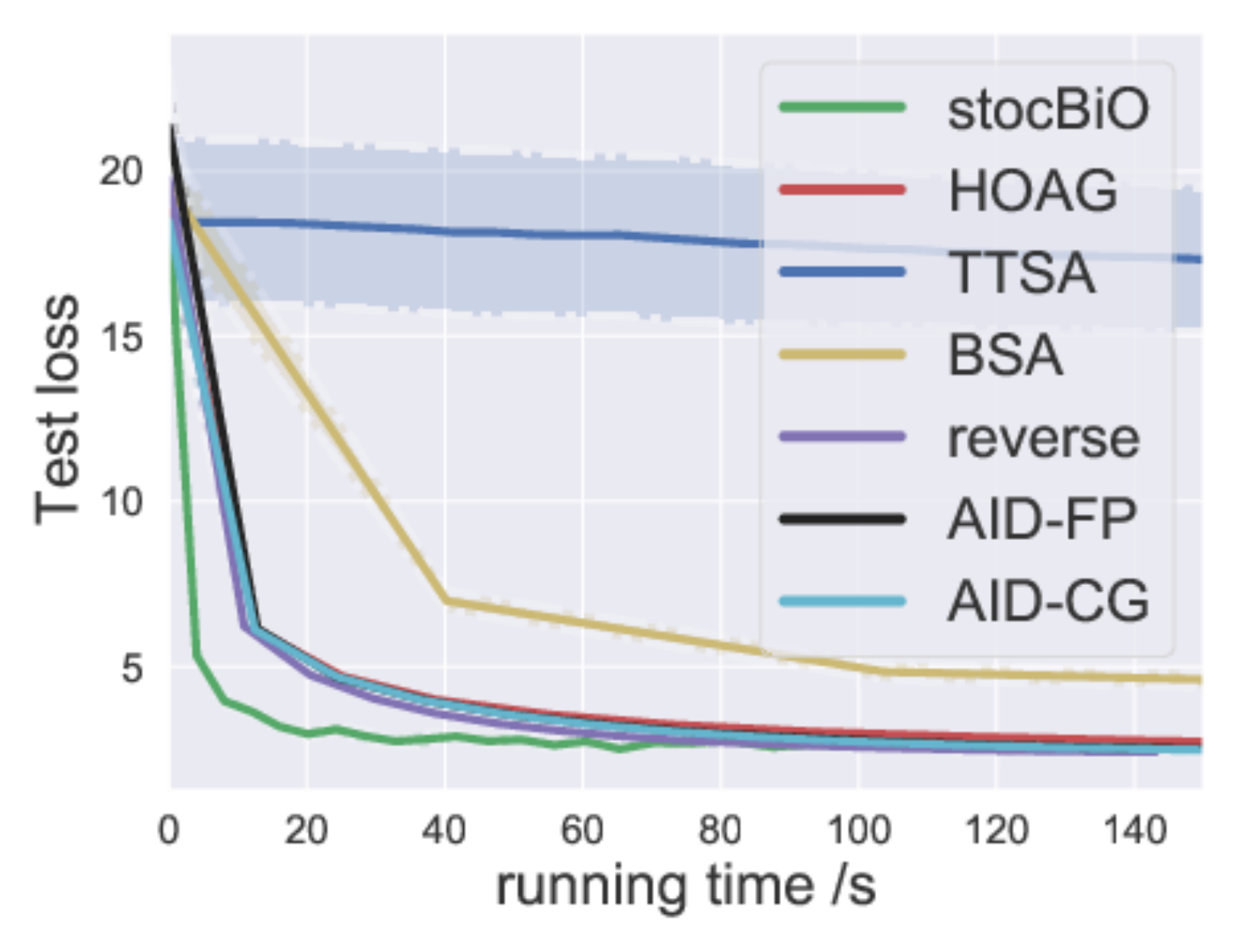}}  
	\vspace{-0.2cm}
	\caption{Comparison of various stochastic bilevel algorithms on hyperparameter optimization at different corruption rates.   For each corruption rate $p$, left plot: training loss v.s. running time; right plot: test loss v.s. running time.}\label{fig:hyper}
	  \vspace{-0.3cm}
\end{figure*}

It can be seen from \Cref{fig:strfc100} that for both the miniImageNet and FC100 datasets, AID-BiO-constant converges  faster  than AID-BiO-increasing in terms of both the training accuracy and test accuracy, and achieves a better final test accuracy than ANIL and MAML. This demonstrates the superior improvement of our developed analysis over existing analysis in~\citealt{ghadimi2018approximation} for AID-BiO algorithm.  
Moreover,  it can be observed that AID-BiO is slightly faster than ITD-BiO in terms of  the training accuracy and test accuracy. This is in consistence with our theoretical results. 

We also compare the robustness between the bilevel optimizer ITD-BiO (AID-BiO performs similarly to ITD-BiO in terms of the convergence rate) and ANIL when  the number $T$ (i.e., $D$ in \Cref{alg:main_deter})  of inner-loop steps is relatively large. 
It can be seen from~\Cref{figure:resultlg}  that when the number of inner-loop steps is large, i.e., $T=10$ for miniImageNet and $T=20$ for FC100, the bilevel optimizer ITD-BiO converges stably with a small variance, whereas ANIL suffers from a sudden descent at 1500s on miniImageNet  and even diverges after 2000s on FC100.

\section{Applications to Hyperparameter Optimization}

The goal of hyperparameter optimization~\citep{franceschi2018bilevel,feurer2019hyperparameter} is to search for representation or regularization parameters  $\lambda$ to minimize the validation error evaluated over the learner's parameters $w^*$,  
 where $w^*$ is the minimizer of the inner-loop regularized  training error. Mathematically, the objective function is given by 
\begin{align}\label{obj:hyper_opt}
&\min_\lambda \gL_{\gD_{\text{val}}}(\lambda) = \frac{1}{|\gD_{\text{val}}|}\sum_{\xi\in \gD_{\text{val}}} \gL(w^*; \xi) \nonumber
\\& \;\mbox{s.t.} \; w^*= \argmin_{w} \underbrace{\frac{1}{|\gD_{\text{tr}}|}\sum_{\xi\in \gD_{\text{tr}}} \big(\gL(w,\lambda;\xi)  + \gR_{w,\lambda}\big)}_{\gL_{\gD_{\text{tr}}}(w,\lambda)},
\end{align}
where $\gD_{\text{val}}$ and $\gD_{\text{tr}}$ are validation and training data,  $\gL$ is the loss, and $\gR_{w,\lambda}$ is a regularizer. In practice,  the lower-level function $\gL_{\gD_{\text{tr}}}(w,\lambda)$ is often strongly-convex w.r.t.~$w$. For example, for the data hyper-cleaning application proposed by~\citealt{franceschi2018bilevel,shaban2019truncated}, the predictor is modeled by a linear classifier, and  the 
loss function $\gL(w;\xi)  $ is convex w.r.t.~$w$ and $\gR_{w,\lambda}$ is a strongly-convex regularizer, e.g., $L^2$ regularization.  
The sample sizes of  $\gD_{\text{val}}$ and $\gD_{\text{tr}}$ are often large, and stochastic algorithms are preferred for achieving better efficiency. 
As a result, the above hyperparameter optimization falls into the stochastic bilevel optimization we study in~\cref{objective}, and we can apply the proposed stocBiO here. Furthermore, \Cref{th:nonconvex} establishes its performance guarantee. 
\subsection{Experiments}
We compare our proposed {\bf stocBiO} with the following baseline bilevel optimization algorithms. 
\begin{list}{$\bullet$}{\topsep=0.ex \leftmargin=0.12in \rightmargin=0.in \itemsep =0.0in}
\item {\bf BSA}~\citep{ghadimi2018approximation}: implicit gradient based stochastic bilevel optimizer via single-sample sampling.
\item {\bf TTSA}~\citep{hong2020two}: two-time-scale stochastic optimizer via single-sample data sampling. 
\item {\bf HOAG}~\citep{pedregosa2016hyperparameter}: a hyperparameter optimization algorithm with approximate gradient. We use the implementation in the repository~ {\small\url{https://github.com/fabianp/hoag}}.  
\item {\bf reverse}~\citep{franceschi2017forward}: an iterative differentiation based method that approximates the hypergradient via backpropagation. We use its implementation in {\small \url{https://github.com/prolearner/hypertorch}}.  
 \item {\bf AID-FP}~\citep{grazzi2020iteration}: AID with the  fixed-point method. We use its implementation in {\small \url{https://github.com/prolearner/hypertorch}} 
\item  {\bf AID-CG}~\citep{grazzi2020iteration}: AID with the conjugate gradient method. We use its implementation in {\small \url{https://github.com/prolearner/hypertorch}}.  
 \end{list}
We demonstrate the effectiveness of the proposed stocBiO algorithm on two experiments: data hyper-cleaning and logistic regression.  Due to the space limitations, we provide the details of the objective functions and hyperparameter settings in~\Cref{appen:hyperoptimization}. 

\vspace{0.1cm}
\noindent{\bf Logistic Regression on 20 Newsgroup:} 
  As shown in \Cref{figure:lr}, the proposed stocBiO achieves the fastest convergence rate as well as the best test accuracy among all comparison algorithms. This demonstrates the practical advantage of our proposed algorithm stocBiO. Note that we do not include BSA and TTSA in the comparison, because they converge too slowly with a large variance, and are much worse than the other competing algorithms. In addition, we investigate the impact of the batch size on the performance of our stocBiO in \Cref{figure:batch}. It can be seen that stocBiO outperforms HOAG under the batch sizes of $100,500,1000,2000$. This shows that the performance of stocBiO is not very sensitive to the batch size, and hence the tuning of the batch size is easy to handle in practice.

\vspace{0.1cm}
\noindent{\bf Data Hyper-Cleaning on MNIST.} It can be seen from Figures \ref{fig:hyper} and \ref{fig:hyper_appen} that our proposed stocBiO algorithm achieves the fastest convergence rate among all competing algorithms in terms of both the training loss and the test loss. It is also observed that such an improvement is more significant when the corruption rate $p$ is smaller.  We note that the stochastic algorithm TTSA converges very slowly with a large variance. This is because TTSA  updates the costly outer loop more frequently than other algorithms,  and has a larger variance due to the single-sample data sampling. As a comparison, our stocBiO has a much smaller variance for hypergradient estimation as well as a much faster convergence rate.  This validates our theoretical results in~\Cref{th:nonconvex}. 

  \begin{figure}[ht]
  \vspace{-2mm}
	\centering  
	\subfigure{\includegraphics[width=40mm]{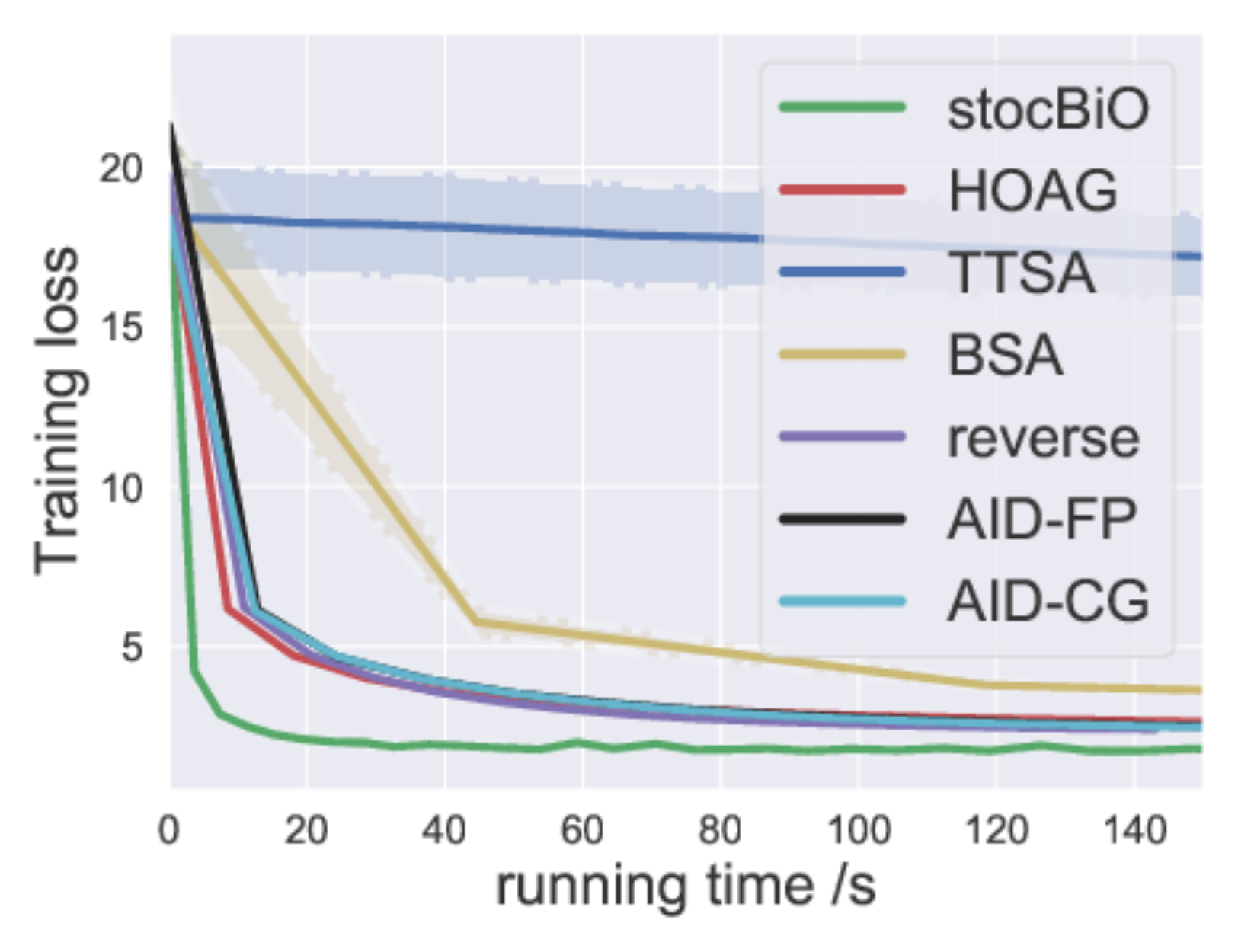}}
	\subfigure{\includegraphics[width=40mm]{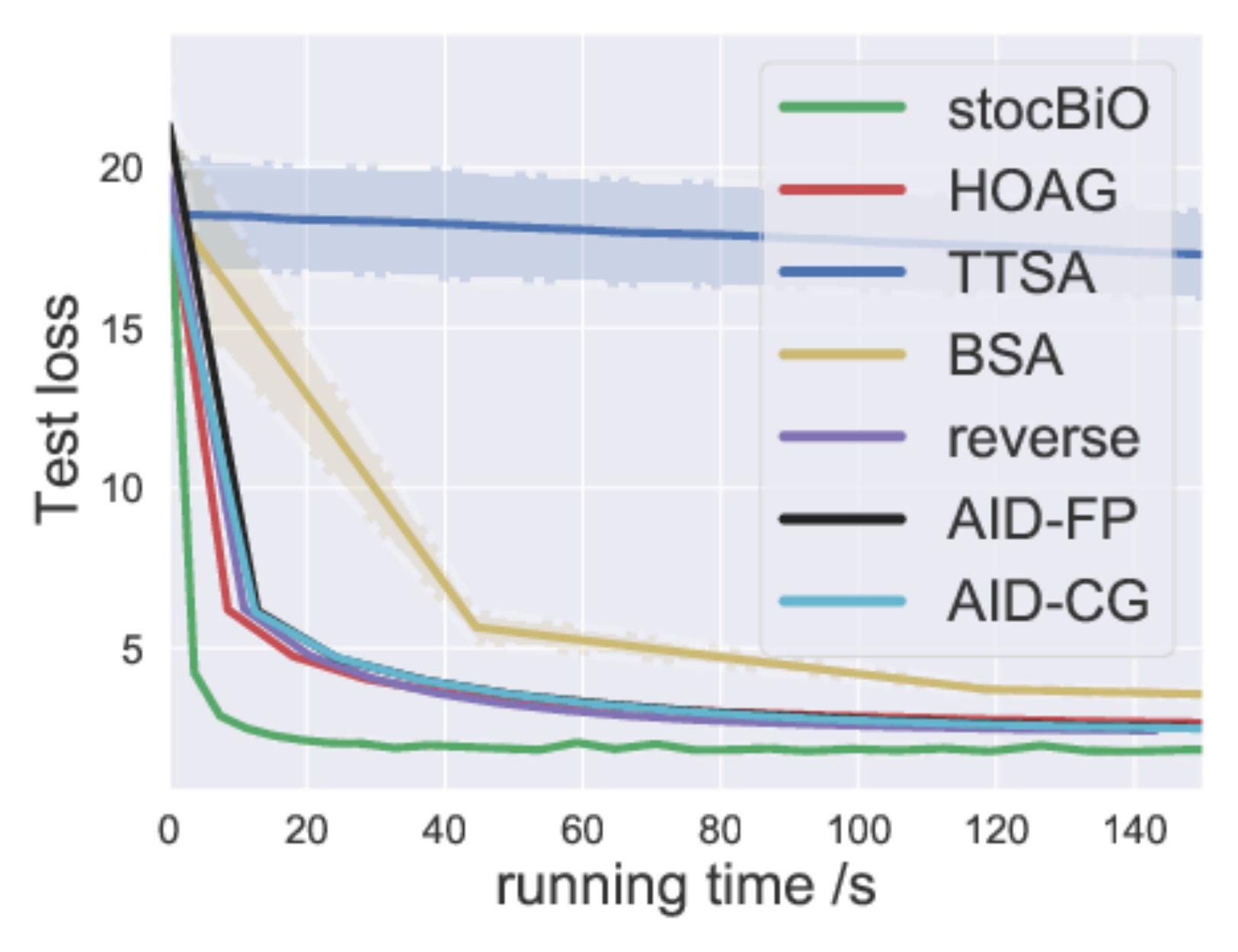}} 
	\vspace{-0.2cm}
	\caption{Convergence of algorithms at corruption rate $p=0.2$.   
	}\label{fig:hyper_appen}
	  \vspace{-0.4cm}
\end{figure}

\vspace{-0.1cm}
\section{Conclusion} 
\vspace{-0.1cm}
In this paper, we develop a general and enhanced convergence rate analysis for the nonconvex-strongly-convex bilevel deterministic optimization, and propose a novel algorithm for the stochastic setting and show that its computational complexity outperforms the best known results orderwisely. Our results also provide the theoretical guarantee for various bilevel optimizers in meta-learning and hyperparameter optimization. Our experiments validate our theoretical results and demonstrate the superior performance of the proposed algorithm. 
We anticipate that the convergence rate analysis that we develop will be useful for analyzing other  bilevel optimization problems with different loss geometries, and the proposed algorithms will be useful for other applications such as reinforcement learning and Stackelberg game.

\vspace{-0.1cm}
\section*{Acknowledgements}
The work was supported in part by the U.S. National Science Foundation under the grants CCF-1909291 and CCF-1900145.

\bibliography{icml2021BiO}

\begin{thebibliography}{41}
\providecommand{\natexlab}[1]{#1}
\providecommand{\url}[1]{\texttt{#1}}
\expandafter\ifx\csname urlstyle\endcsname\relax
  \providecommand{\doi}[1]{doi: #1}\else
  \providecommand{\doi}{doi: \begingroup \urlstyle{rm}\Url}\fi

\bibitem[Arnold et~al.(2019)Arnold, Mahajan, Datta, and
  Bunner]{learn2learn2019}
Arnold, S.~M., Mahajan, P., Datta, D., and Bunner, I.
\newblock \emph{learn2learn}, 2019.
\newblock \url{https://github.com/learnables/learn2learn}.

\bibitem[Bertinetto et~al.(2018)Bertinetto, Henriques, Torr, and
  Vedaldi]{bertinetto2018meta}
Bertinetto, L., Henriques, J.~F., Torr, P., and Vedaldi, A.
\newblock Meta-learning with differentiable closed-form solvers.
\newblock In \emph{International Conference on Learning Representations
  (ICLR)}, 2018.

\bibitem[Bracken \& McGill(1973)Bracken and McGill]{bracken1973mathematical}
Bracken, J. and McGill, J.~T.
\newblock Mathematical programs with optimization problems in the constraints.
\newblock \emph{Operations Research}, 21\penalty0 (1):\penalty0 37--44, 1973.

\bibitem[Domke(2012)]{domke2012generic}
Domke, J.
\newblock Generic methods for optimization-based modeling.
\newblock In \emph{Artificial Intelligence and Statistics (AISTATS)}, pp.\
  318--326, 2012.

\bibitem[Feurer \& Hutter(2019)Feurer and Hutter]{feurer2019hyperparameter}
Feurer, M. and Hutter, F.
\newblock Hyperparameter optimization.
\newblock In \emph{Automated Machine Learning}, pp.\  3--33. Springer, Cham,
  2019.

\bibitem[Finn et~al.(2017)Finn, Abbeel, and Levine]{finn2017model}
Finn, C., Abbeel, P., and Levine, S.
\newblock Model-agnostic meta-learning for fast adaptation of deep networks.
\newblock In \emph{Proc. International Conference on Machine Learning (ICML)},
  pp.\  1126--1135, 2017.

\bibitem[Flamary et~al.(2014)Flamary, Rakotomamonjy, and
  Gasso]{flamary2014learning}
Flamary, R., Rakotomamonjy, A., and Gasso, G.
\newblock Learning constrained task similarities in graphregularized multi-task
  learning.
\newblock \emph{Regularization, Optimization, Kernels, and Support Vector
  Machines}, pp.\  103, 2014.

\bibitem[Franceschi et~al.(2017)Franceschi, Donini, Frasconi, and
  Pontil]{franceschi2017forward}
Franceschi, L., Donini, M., Frasconi, P., and Pontil, M.
\newblock Forward and reverse gradient-based hyperparameter optimization.
\newblock In \emph{International Conference on Machine Learning (ICML)}, pp.\
  1165--1173, 2017.

\bibitem[Franceschi et~al.(2018)Franceschi, Frasconi, Salzo, Grazzi, and
  Pontil]{franceschi2018bilevel}
Franceschi, L., Frasconi, P., Salzo, S., Grazzi, R., and Pontil, M.
\newblock Bilevel programming for hyperparameter optimization and
  meta-learning.
\newblock In \emph{International Conference on Machine Learning (ICML)}, pp.\
  1568--1577, 2018.

\bibitem[Ghadimi \& Wang(2018)Ghadimi and Wang]{ghadimi2018approximation}
Ghadimi, S. and Wang, M.
\newblock Approximation methods for bilevel programming.
\newblock \emph{arXiv preprint arXiv:1802.02246}, 2018.

\bibitem[Gould et~al.(2016)Gould, Fernando, Cherian, Anderson, Cruz, and
  Guo]{gould2016differentiating}
Gould, S., Fernando, B., Cherian, A., Anderson, P., Cruz, R.~S., and Guo, E.
\newblock On differentiating parameterized argmin and argmax problems with
  application to bi-level optimization.
\newblock \emph{arXiv preprint arXiv:1607.05447}, 2016.

\bibitem[Grazzi et~al.(2020)Grazzi, Franceschi, Pontil, and
  Salzo]{grazzi2020iteration}
Grazzi, R., Franceschi, L., Pontil, M., and Salzo, S.
\newblock On the iteration complexity of hypergradient computation.
\newblock In \emph{Proc. International Conference on Machine Learning (ICML)},
  2020.

\bibitem[Hansen et~al.(1992)Hansen, Jaumard, and Savard]{hansen1992new}
Hansen, P., Jaumard, B., and Savard, G.
\newblock New branch-and-bound rules for linear bilevel programming.
\newblock \emph{SIAM Journal on Scientific and Statistical Computing},
  13\penalty0 (5):\penalty0 1194--1217, 1992.

\bibitem[Hong et~al.(2020)Hong, Wai, Wang, and Yang]{hong2020two}
Hong, M., Wai, H.-T., Wang, Z., and Yang, Z.
\newblock A two-timescale framework for bilevel optimization: Complexity
  analysis and application to actor-critic.
\newblock \emph{arXiv preprint arXiv:2007.05170}, 2020.

\bibitem[Ji \& Liang(2021)Ji and Liang]{ji2021lower}
Ji, K. and Liang, Y.
\newblock Lower bounds and accelerated algorithms for bilevel optimization.
\newblock \emph{arXiv preprint arXiv:2102.03926}, 2021.

\bibitem[Ji et~al.(2020{\natexlab{a}})Ji, Lee, Liang, and
  Poor]{ji2020convergence}
Ji, K., Lee, J.~D., Liang, Y., and Poor, H.~V.
\newblock Convergence of meta-learning with task-specific adaptation over
  partial parameters.
\newblock In \emph{Advances in Neural Information Processing Systems
  (NeurIPS)}, 2020{\natexlab{a}}.

\bibitem[Ji et~al.(2020{\natexlab{b}})Ji, Yang, and Liang]{ji2020multi}
Ji, K., Yang, J., and Liang, Y.
\newblock Multi-step model-agnostic meta-learning: Convergence and improved
  algorithms.
\newblock \emph{arXiv preprint arXiv:2002.07836}, 2020{\natexlab{b}}.

\bibitem[Kingma \& Ba(2014)Kingma and Ba]{kingma2014adam}
Kingma, D.~P. and Ba, J.
\newblock Adam: A method for stochastic optimization.
\newblock \emph{International Conference on Learning Representations (ICLR)},
  2014.

\bibitem[Konda \& Tsitsiklis(2000)Konda and Tsitsiklis]{konda2000actor}
Konda, V.~R. and Tsitsiklis, J.~N.
\newblock Actor-critic algorithms.
\newblock In \emph{Advances in neural information processing systems
  (NeurIPS)}, pp.\  1008--1014, 2000.

\bibitem[Krizhevsky \& Hinton(2009)Krizhevsky and
  Hinton]{krizhevsky2009learning}
Krizhevsky, A. and Hinton, G.
\newblock Learning multiple layers of features from tiny images.
\newblock 2009.

\bibitem[Kunapuli et~al.(2008)Kunapuli, Bennett, Hu, and
  Pang]{kunapuli2008classification}
Kunapuli, G., Bennett, K.~P., Hu, J., and Pang, J.-S.
\newblock Classification model selection via bilevel programming.
\newblock \emph{Optimization Methods \& Software}, 23\penalty0 (4):\penalty0
  475--489, 2008.

\bibitem[LeCun et~al.(1998)LeCun, Bottou, Bengio, and
  Haffner]{lecun1998gradient}
LeCun, Y., Bottou, L., Bengio, Y., and Haffner, P.
\newblock Gradient-based learning applied to document recognition.
\newblock \emph{Proceedings of the IEEE}, 86\penalty0 (11):\penalty0
  2278--2324, 1998.

\bibitem[Li et~al.(2020)Li, Gu, and Huang]{li2020improved}
Li, J., Gu, B., and Huang, H.
\newblock Improved bilevel model: Fast and optimal algorithm with theoretical
  guarantee.
\newblock \emph{arXiv preprint arXiv:2009.00690}, 2020.

\bibitem[Liao et~al.(2018)Liao, Xiong, Fetaya, Zhang, Yoon, Pitkow, Urtasun,
  and Zemel]{liao2018reviving}
Liao, R., Xiong, Y., Fetaya, E., Zhang, L., Yoon, K., Pitkow, X., Urtasun, R.,
  and Zemel, R.
\newblock Reviving and improving recurrent back-propagation.
\newblock In \emph{Proc. International Conference on Machine Learning (ICML)},
  2018.

\bibitem[Liu et~al.(2020)Liu, Mu, Yuan, Zeng, and Zhang]{liu2020generic}
Liu, R., Mu, P., Yuan, X., Zeng, S., and Zhang, J.
\newblock A generic first-order algorithmic framework for bi-level programming
  beyond lower-level singleton.
\newblock In \emph{International Conference on Machine Learning (ICML)}, 2020.

\bibitem[Lorraine et~al.(2020)Lorraine, Vicol, and
  Duvenaud]{lorraine2020optimizing}
Lorraine, J., Vicol, P., and Duvenaud, D.
\newblock Optimizing millions of hyperparameters by implicit differentiation.
\newblock In \emph{International Conference on Artificial Intelligence and
  Statistics (AISTATS)}, pp.\  1540--1552. PMLR, 2020.

\bibitem[Maclaurin et~al.(2015)Maclaurin, Duvenaud, and
  Adams]{maclaurin2015gradient}
Maclaurin, D., Duvenaud, D., and Adams, R.
\newblock Gradient-based hyperparameter optimization through reversible
  learning.
\newblock In \emph{International Conference on Machine Learning (ICML)}, pp.\
  2113--2122, 2015.

\bibitem[Moore(2010)]{moore2010bilevel}
Moore, G.~M.
\newblock \emph{Bilevel programming algorithms for machine learning model
  selection}.
\newblock Rensselaer Polytechnic Institute, 2010.

\bibitem[Okuno et~al.(2018)Okuno, Takeda, and Kawana]{okuno2018hyperparameter}
Okuno, T., Takeda, A., and Kawana, A.
\newblock Hyperparameter learning via bilevel nonsmooth optimization.
\newblock \emph{arXiv preprint arXiv:1806.01520}, 2018.

\bibitem[Oreshkin et~al.(2018)Oreshkin, L{\'o}pez, and
  Lacoste]{oreshkin2018tadam}
Oreshkin, B., L{\'o}pez, P.~R., and Lacoste, A.
\newblock Tadam: Task dependent adaptive metric for improved few-shot learning.
\newblock In \emph{Advances in Neural Information Processing Systems
  (NeurIPS)}, pp.\  721--731, 2018.

\bibitem[Pedregosa(2016)]{pedregosa2016hyperparameter}
Pedregosa, F.
\newblock Hyperparameter optimization with approximate gradient.
\newblock In \emph{International Conference on Machine Learning (ICML)}, pp.\
  737--746, 2016.

\bibitem[Raghu et~al.(2019)Raghu, Raghu, Bengio, and Vinyals]{raghu2019rapid}
Raghu, A., Raghu, M., Bengio, S., and Vinyals, O.
\newblock Rapid learning or feature reuse? towards understanding the
  effectiveness of {MAML}.
\newblock \emph{International Conference on Learning Representations (ICLR)},
  2019.

\bibitem[Rajeswaran et~al.(2019)Rajeswaran, Finn, Kakade, and
  Levine]{rajeswaran2019meta}
Rajeswaran, A., Finn, C., Kakade, S.~M., and Levine, S.
\newblock Meta-learning with implicit gradients.
\newblock In \emph{Advances in Neural Information Processing Systems
  (NeurIPS)}, pp.\  113--124, 2019.

\bibitem[Roth et~al.(2016)Roth, Ullman, and Wu]{roth2016watch}
Roth, A., Ullman, J., and Wu, Z.~S.
\newblock Watch and learn: Optimizing from revealed preferences feedback.
\newblock In \emph{Annual ACM Symposium on Theory of Computing (STOC)}, pp.\
  949--962, 2016.

\bibitem[Russakovsky et~al.(2015)Russakovsky, Deng, Su, Krause, Satheesh, Ma,
  Huang, Karpathy, Khosla, Bernstein, Berg, and
  Fei-Fei]{russakovsky2015imagenet}
Russakovsky, O., Deng, J., Su, H., Krause, J., Satheesh, S., Ma, S., Huang, Z.,
  Karpathy, A., Khosla, A., Bernstein, M., Berg, A.~C., and Fei-Fei, L.
\newblock Imagenet large scale visual recognition challenge.
\newblock \emph{International Journal of Computer Vision}, 3\penalty0
  (115):\penalty0 211--252, 2015.

\bibitem[Shaban et~al.(2019)Shaban, Cheng, Hatch, and
  Boots]{shaban2019truncated}
Shaban, A., Cheng, C.-A., Hatch, N., and Boots, B.
\newblock Truncated back-propagation for bilevel optimization.
\newblock In \emph{International Conference on Artificial Intelligence and
  Statistics (AISTATS)}, pp.\  1723--1732, 2019.

\bibitem[Shi et~al.(2005)Shi, Lu, and Zhang]{shi2005extended}
Shi, C., Lu, J., and Zhang, G.
\newblock An extended kuhn--tucker approach for linear bilevel programming.
\newblock \emph{Applied Mathematics and Computation}, 162\penalty0
  (1):\penalty0 51--63, 2005.

\bibitem[Snell et~al.(2017)Snell, Swersky, and Zemel]{snell2017prototypical}
Snell, J., Swersky, K., and Zemel, R.
\newblock Prototypical networks for few-shot learning.
\newblock In \emph{Advances in Neural Information Processing Systems (NIPS)},
  2017.

\bibitem[Vinyals et~al.(2016)Vinyals, Blundell, Lillicrap, and
  Wierstra]{vinyals2016matching}
Vinyals, O., Blundell, C., Lillicrap, T., and Wierstra, D.
\newblock Matching networks for one shot learning.
\newblock In \emph{Advances in Neural Information Processing Systems (NIPS)},
  2016.

\bibitem[Yu \& Zhu(2020)Yu and Zhu]{yu2020hyper}
Yu, T. and Zhu, H.
\newblock Hyper-parameter optimization: A review of algorithms and
  applications.
\newblock \emph{arXiv preprint arXiv:2003.05689}, 2020.

\bibitem[Z{\"u}gner \& G{\"u}nnemann(2019)Z{\"u}gner and
  G{\"u}nnemann]{zugner2019adversarial}
Z{\"u}gner, D. and G{\"u}nnemann, S.
\newblock Adversarial attacks on graph neural networks via meta learning.
\newblock In \emph{International Conference on Learning Representations
  (ICLR)}, 2019.

\end{thebibliography}
\bibliographystyle{icml2021}

\newpage
\onecolumn
\appendix
{\Large\bf Supplementary Materials}

\section{Further Specifications on Meta-Learning Experiments}\label{appen:meta_learning}
\subsection{Datasets and Model Architectures}
FC100~\citep{oreshkin2018tadam} is a dataset derived from CIFAR-100~\citep{krizhevsky2009learning}, and contains $100$ classes with each class consisting of $600$ images of size $32\time 32$. Following~\citealt{oreshkin2018tadam}, these $100$ classes are split  into $60$ classes for meta-training, $20$ classes for meta-validation, and $20$ classes for meta-testing.   For all comparison algorithms, we use a $4$-layer convolutional neural networks (CNN) with four convolutional blocks, in which each convolutional block contains a $3\times 3$ convolution ($\text{padding}=1$, $\text{stride}=2$), batch normalization, ReLU activation, and $2\times 2$
max pooling. Each convolutional layer has $64$ filters. 

The miniImageNet dataset~\citep{vinyals2016matching} is generated from ImageNet~\cite{russakovsky2015imagenet}, and consists of $100$ classes with each class containing $600$ images of size $84\times 84$. Following the repository~\cite{learn2learn2019}, we partition these classes into $64$ classes for meta-training, $16$ classes for meta-validation, and $20$ classes for meta-testing.
Following the repository~\citep{learn2learn2019}, we use a four-layer CNN with four convolutional blocks, where each block sequentially consists of  a $3\times 3$ convolution, batch normalization, ReLU activation, and $2\times 2$
max pooling. Each convolutional layer has $32$ filters. 
\subsection{Implementations and Hyperparameter Settings}
We adopt the existing implementations in the repository~\citep{learn2learn2019} for ANIL and MAML. 
 For all algorithms, we adopt Adam~\citep{kingma2014adam} as the optimizer for the outer-loop update. 
 
\vspace{0.2cm}
\noindent {\bf Parameter selection for the experiments in~\Cref{fig1:a}:} For ANIL and MAML, we adopt the suggested hyperparameter selection in the repository~\citep{learn2learn2019}. In specific, for ANIL, we choose the inner-loop stepsize as $0.1$, the outer-loop (meta) stepsize as $0.002$, the task sampling size as $32$, and the number of inner-loop steps as $5$. For MAML, we choose the inner-loop stepsize as $0.5$, the outer-loop stepsize as $0.003$, the task sampling sizeas $32$, and the number of inner-loop steps as $3$. 
For ITD-BiO, AID-BiO-constant and AID-BiO-increasing, we use a grid search to choose the inner-loop stepsize from $\{0.01,0.1,1,10\}$, the task sampling size from $\{32,128,256\}$, and  the  outer-loop stepsize from $\{10^{i},i=-3,-2,-1,0,1,2,3\}$, where values that achieve the lowest loss after a fixed running time are selected.  
 For ITD-BiO and AID-BiO-constant, we choose the  number of inner-loop steps from $\{5,10,15,20,50\}$, and for AID-BiO-increasing, we choose the number of inner-loop steps as $\lceil c{(k+1)}^{1/4}\rceil$ as adopted by the analysis in \citealt{ghadimi2018approximation}, where we choose $c$ from $\{0.5,2,5,10,50\}$.
For both AID-BiO-constant and AID-BiO-increasing, we choose the number $N$ of CG steps for solving the linear system from $\{5,10,15\}$.


\vspace{0.2cm}
\noindent 
{\bf Parameter selection for the experiments in~\Cref{fig1:b}:}  For ANIL and MAML, we adopt the suggested hyperparameter selection in the repository~\citep{learn2learn2019}. Specifically, for ANIL, we choose the inner-loop stepsize as $0.1$, the outer-loop (meta) stepsize as $0.001$, the task sampling size as $32$ and the number of inner-loop steps as $10$. For MAML, we choose the inner-loop stepsize as $0.5$, the outer-loop stepsize as $0.001$,  the  task samling size as $32$, and the number of inner-loop steps as $3$. For ITD-BiO, AID-BiO-constant and AID-BiO-increasing, we adopt the same procedure as in the experiments in~\Cref{fig1:a}. 


\vspace{0.2cm}
\noindent 
{\bf Parameter selection for the experiments in~\Cref{figure:resultlg}:} 
For the experiments in~\Cref{fig1:ci}, we choose the inner-loop stepsize as $0.05$, the outer-loop (meta) stepsize as $0.002$, the  mini-batch size as $32$, and the number $T$ of inner-loop steps as $10$ for both ANIL and  ITD-BiO. For the experiments in~\Cref{fig1:di}, we choose the inner-loop stepsize as $0.1$, the outer-loop (meta) stepsize as $0.001$, the  mini-batch size as $32$, and the number $T$ of inner-loop steps as $20$ for both ANIL and  ITD-BiO.

%

\section{Further Specifications on Hyperparameter Optimization Experiments}\label{appen:hyperoptimization}
We demonstrate the effectiveness of the proposed stocBiO algorithm on two experiments: data hyper-cleaning and logistic regression, as introduced below.  

\vspace{0.2cm}
\noindent{\bf Logistic Regression on 20 Newsgroup:} 
We compare the performance of our algorithm {\bf stocBiO} with the existing baseline algorithms {\bf reverse, AID-FP, AID-CG and HOAG }over a logistic regression problem on $20$ Newsgroup dataset~\cite{grazzi2020iteration}. The objective function of such a problem is given by 
 \begin{align*}
&\min_\lambda E(\lambda,w^*) = \frac{1}{|\gD_{\text{val}}|}\sum_{(x_i,y_i)\in \gD_{\text{val}}} L(x_iw^*, y_i) \nonumber
\\& \;\mbox{s.t.} \quad w^* = \argmin_{w\in\mathbb{R}^{p\times c}}  \Big(\frac{1}{|\gD_{\text{tr}}|}\sum_{(x_i,y_i)\in \gD_{\text{tr}}}L(x_iw, y_i)  + \frac{1}{cp} \sum_{i=1}^c\sum_{j=1}^p \exp(\lambda_j)w_{ij}^2\Big),
\end{align*}
where $L$ is the cross-entropy loss, $c=20$ is the number of topics, and $p=101631$ is the feature dimension. Following \citealt{grazzi2020iteration}, we use SGD as the optimizer for the outer-loop update for all algorithms. For reverse, AID-FP, AID-CG, we use the suggested and well-tuned hyperparameter setting in their implementations~\url{https://github.com/prolearner/hypertorch} on this application. In specific, they choose the inner- and outer-loop stepsizes as $100$, the number of inner loops as $10$, the number of CG steps as $10$.  For HOAG, we use the same parameters as reverse, AID-FP, AID-CG. For stocBiO, we use the same parameters as reverse, AID-FP, AID-CG, and choose $\eta=0.5,Q=10$. We use stocBiO-$B$ as a  shorthand of stocBiO with a batch size of $B$.

%

\vspace{0.2cm}

\noindent{\bf Data Hyper-Cleaning on MNIST.} We compare the performance of our proposed algorithm stocBiO with other baseline algorithms BSA, TTSA, HOAG on a hyperparameter optimization problem: data hyper-cleaning~\citep{shaban2019truncated} on a dataset derived from MNIST~\citep{lecun1998gradient}, which consists of 20000 images for training, 5000 images for validation, and 10000 images for testing.  
Data hyper-cleaning is to train a classifier in a corrupted setting where each label of training data is replaced by a random class number with a probability $p$ (i.e., the corruption rate). The objective function is given by 
\begin{align*}
&\min_\lambda E(\lambda,w^*) = \frac{1}{|\gD_{\text{val}}|}\sum_{(x_i,y_i)\in \gD_{\text{val}}} L(w^*x_i, y_i) \nonumber
\\& \;\mbox{s.t.} \quad w^* = \argmin_{w} \gL(w,\lambda):= \frac{1}{|\gD_{\text{tr}}|}\sum_{(x_i,y_i)\in \gD_{\text{tr}}}\sigma(\lambda_i)L(wx_i, y_i)  + C_r \|w\|^2,
\end{align*}
where $L$ is the cross-entropy loss, $\sigma(\cdot)$ is the sigmoid function, $C_r$ is a regularization parameter. Following~\citealt{shaban2019truncated}, we choose $C_r=0.001$. 
 All results are averaged over 10 trials with
different random seeds. We adopt Adam~\citep{kingma2014adam} as the optimizer for the outer-loop update for all algorithms. For stochastic algorithms, we set the batch size as $50$ for stocBiO, and $1$ for BSA and TTSA because they use the single-sample data sampling. For all algorithms, we use a grid search to choose the inner-loop stepsize from $\{0.01,0.1,1,10\}$, the  outer-loop stepsize from $\{10^{i},i=-4,-3,-2,-1,0,1,2,3,4\}$, and the number $D$ of inner-loop steps from $\{1,10,50,100,200,1000\}$, where values that achieve the lowest loss after a fixed running time are selected. For  stocBiO, BSA, and TTSA, we choose $\eta$ from $\{0.5\times 2^i, i=-3,-2,-1,0,1,2,3\}$, and $Q$ from $\{3\times 2^i, i=0,1,2,3\}$.

\section{Supporting Lemmas}
In this section, we provide some auxiliary lemmas used for proving the main convergence results. 

First note that  the Lipschitz properties in Assumption~\ref{ass:lip} imply the following lemma.
\begin{lemma}\label{le:boundv}
Suppose Assumption~\ref{ass:lip} holds. Then, the stochastic derivatives $\nabla F(z;\xi)$, $\nabla G(z;\xi)$, $\nabla_x\nabla_y G(z;\xi)$ and $\nabla_y^2 G(z;\xi)$ have bounded variances, i.e., for any $z$ and $\xi$, 
\begin{itemize}
\item $\mathbb{E}_\xi\left \|\nabla F(z;\xi)-\nabla f(z)\right\|^2 \leq M^2.$
\item $\mathbb{E}_\xi\left \|\nabla_x\nabla_y G(z;\xi)-\nabla_x\nabla_y g(z)\right\|^2 \leq L^2.$
\item $\mathbb{E}_\xi \left\|\nabla_y^2 G(z;\xi)-\nabla_y^2 g(z)\right\|^2 \leq L^2.$
\end{itemize}
\end{lemma}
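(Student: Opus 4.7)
The plan is to derive each bound as a direct consequence of Assumption~\ref{ass:lip} combined with the elementary variance inequality $\mathbb{E}\|X-\mathbb{E}X\|^2\leq\mathbb{E}\|X\|^2$ (which follows from expanding the square). The key observation is that Assumption~\ref{ass:lip} guarantees $f$ and every realization $F(\cdot;\xi)$ are $M$-Lipschitz, and $\nabla g$ and every $\nabla G(\cdot;\zeta)$ are $L$-Lipschitz. The first property bounds gradient norms by $M$; the second bounds operator norms of the full Hessians by $L$, and hence also bounds the norms of any submatrices like $\nabla_x\nabla_y$ and $\nabla_y^2$.

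First I would handle the $\nabla F$ bound. Since $F(\cdot;\xi)$ is $M$-Lipschitz for every $\xi$, we have $\|\nabla F(z;\xi)\|\leq M$ pointwise, hence $\mathbb{E}_\xi\|\nabla F(z;\xi)\|^2\leq M^2$. Noting that $\mathbb{E}_\xi[\nabla F(z;\xi)]=\nabla f(z)$ by the standard exchange of expectation and differentiation (valid under the Lipschitz assumption), the variance identity yields $\mathbb{E}_\xi\|\nabla F(z;\xi)-\nabla f(z)\|^2\leq \mathbb{E}_\xi\|\nabla F(z;\xi)\|^2\leq M^2$.

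Next I would handle the two second-order bounds simultaneously. Since $\nabla G(\cdot;\zeta)$ is $L$-Lipschitz for every $\zeta$, the operator norm of the full Hessian satisfies $\|\nabla^2 G(z;\zeta)\|\leq L$. Because $\nabla_x\nabla_y G(z;\zeta)$ and $\nabla_y^2 G(z;\zeta)$ are submatrices (or blocks) of $\nabla^2 G(z;\zeta)$, their norms are bounded by $L$ as well. Taking $\mathbb{E}_\zeta[\nabla_x\nabla_y G(z;\zeta)]=\nabla_x\nabla_y g(z)$ and $\mathbb{E}_\zeta[\nabla_y^2 G(z;\zeta)]=\nabla_y^2 g(z)$, the same variance inequality gives both $\mathbb{E}\|\nabla_x\nabla_y G(z;\zeta)-\nabla_x\nabla_y g(z)\|^2\leq L^2$ and $\mathbb{E}\|\nabla_y^2 G(z;\zeta)-\nabla_y^2 g(z)\|^2\leq L^2$.

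There is no significant obstacle here: the entire argument is three short applications of Assumption~\ref{ass:lip} plus the inequality $\text{Var}(X)\leq\mathbb{E}\|X\|^2$. The only minor subtlety worth a sentence in the writeup is justifying that a block of a matrix has operator norm bounded by the operator norm of the full matrix, which is immediate from considering test vectors supported on the corresponding coordinate subspaces.
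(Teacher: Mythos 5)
Your proof is correct and takes essentially the same route as the paper, which in fact offers no explicit proof of this lemma at all --- it merely remarks that the Lipschitz properties in Assumption~\ref{ass:lip} imply it --- and your three-step argument (pointwise norm bounds from Lipschitzness, unbiasedness of the stochastic derivatives, and $\mathbb{E}\|X-\mathbb{E}X\|^2\le\mathbb{E}\|X\|^2$) is precisely the standard way to fill that in. The one caveat, which the paper shares, is that the inequality $\mathbb{E}\|X-\mathbb{E}X\|^2\le\mathbb{E}\|X\|^2$ relies on an inner-product norm, so for the two matrix-valued cases one should either interpret the bound in Frobenius norm or accept a constant factor (e.g., $4L^2$) under the operator norm; this does not affect any order-wise conclusion downstream.
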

Recall that  $\Phi(x)=f(x,y^*(x))$ in~\cref{objective}. Then, we use 
the following lemma to characterize the Lipschitz properties of $\nabla \Phi(x)$, which is adapted from Lemma 2.2 in~\citealt{ghadimi2018approximation}.
\begin{lemma}\label{le:lipphi}
Suppose Assumptions~\ref{assum:geo},~\ref{ass:lip} and \ref{high_lip} hold. Then, we have, for any $x,x^\prime\in\mathbb{R}^p$,  
\begin{align*}
\|\nabla \Phi(x)- \nabla \Phi(x^\prime)\| \leq L_\Phi \|x-x^\prime\|,
\end{align*}
where the constant $L_\Phi$ is given by
\begin{align}
L_\Phi = L + \frac{2L^2+\tau M^2}{\mu} + \frac{\rho L M+L^3+\tau M L}{\mu^2} + \frac{\rho L^2 M}{\mu^3}.
\end{align}
\end{lemma}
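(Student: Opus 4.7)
The plan is to start from the closed-form expression for the hypergradient given in \Cref{prop:grad}:
\begin{align*}
\nabla \Phi(x) = \nabla_x f(x, y^*(x)) - \nabla_x \nabla_y g(x, y^*(x)) \bigl[\nabla_y^2 g(x, y^*(x))\bigr]^{-1} \nabla_y f(x, y^*(x)),
\end{align*}
and to bound $\|\nabla \Phi(x) - \nabla \Phi(x')\|$ by decomposing the difference into contributions from each of the four factors via standard add-and-subtract tricks and the triangle inequality.

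The first preparatory estimate I would establish is $\|y^*(x) - y^*(x')\| \leq (L/\mu)\,\|x - x'\|$. Since $y^*(x)$ is characterized by $\nabla_y g(x, y^*(x)) = 0$, implicit differentiation gives $\nabla y^*(x) = -[\nabla_y^2 g(x, y^*(x))]^{-1} \nabla_x \nabla_y g(x, y^*(x))$; combined with $\|[\nabla_y^2 g]^{-1}\| \leq 1/\mu$ from $\mu$-strong convexity (\Cref{assum:geo}) and $\|\nabla_x \nabla_y g\| \leq L$ from \Cref{ass:lip}, this yields $\|\nabla y^*(x)\| \leq L/\mu$, and hence the claimed Lipschitz bound on $y^*$. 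Writing $z := (x, y^*(x))$ and $z' := (x', y^*(x'))$, it follows that $\|z - z'\| \leq (1 + L/\mu)\,\|x - x'\|$.

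Next I would collect uniform bounds that hold at such points along the manifold $\{(x, y^*(x))\}$: $\|\nabla_y f(z)\| \leq M$ (from \Cref{ass:lip}), $\|\nabla_x \nabla_y g(z)\| \leq L$, $\|[\nabla_y^2 g(z)]^{-1}\| \leq 1/\mu$, and the key Lipschitz estimate for the inverse Hessian, $\|[\nabla_y^2 g(z)]^{-1} - [\nabla_y^2 g(z')]^{-1}\| \leq (\rho/\mu^2)\,\|z - z'\|$, which follows from the resolvent identity $A^{-1} - B^{-1} = A^{-1}(B - A)B^{-1}$ combined with \Cref{high_lip}. Adding and subtracting intermediate terms then decomposes the hypergradient difference into four pieces, which are bounded respectively using the Lipschitzness of $\nabla_x f$, of $\nabla_x \nabla_y g$, of the inverse Hessian, and of $\nabla_y f$, each multiplied by the uniform bounds on the remaining factors and by $\|z - z'\|$. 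Substituting $\|z - z'\| \leq (1 + L/\mu)\,\|x - x'\|$ and grouping the resulting powers of $1/\mu$ yields the stated expression for $L_\Phi$.

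The main obstacle is essentially bookkeeping: tracking how each of the four factor-differences, after being multiplied by the uniform bounds on the remaining factors and by the $(1 + L/\mu)$ amplification from $\|z - z'\|$, recombines into the precise coefficient structure $L + (2L^2 + \tau M^2)/\mu + (\rho LM + L^3 + \tau ML)/\mu^2 + \rho L^2 M/\mu^3$. No new conceptual ingredient is required; the lemma is an adaptation of Lemma 2.2 in \citealt{ghadimi2018approximation}, whose proof follows this exact template.
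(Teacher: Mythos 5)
Your argument is correct and is precisely the standard one: the paper does not actually prove this lemma but imports it as an adaptation of Lemma~2.2 of \citealt{ghadimi2018approximation}, whose proof follows exactly the template you lay out (Lipschitzness of $y^*$ with constant $L/\mu$ via implicit differentiation of $\nabla_y g(x,y^*(x))=0$, the uniform bounds $\|\nabla_y f\|\leq M$, $\|\nabla_x\nabla_y g\|\leq L$, $\|[\nabla_y^2 g]^{-1}\|\leq 1/\mu$, the resolvent bound $\|[\nabla_y^2 g(z)]^{-1}-[\nabla_y^2 g(z')]^{-1}\|\leq (\rho/\mu^2)\|z-z'\|$, and a three-way add-and-subtract on the product $\nabla_x\nabla_y g\,[\nabla_y^2 g]^{-1}\nabla_y f$ from \Cref{prop:grad}). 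One bookkeeping caveat: carrying out your decomposition literally gives the coefficient $L+\tfrac{L^2+\tau M}{\mu}+\tfrac{\rho LM}{\mu^2}$ per unit of $\|z-z'\|$, hence $L+\tfrac{2L^2+\tau M}{\mu}+\tfrac{\rho LM+L^3+\tau ML}{\mu^2}+\tfrac{\rho L^2M}{\mu^3}$ after multiplying by $1+L/\mu$, whereas the stated constant has $\tau M^2$ where this derivation produces $\tau M$; so your claim that the bookkeeping ``yields the stated expression'' holds only up to this discrepancy, which is inherited from the cited reference, is irrelevant to all order-wise conclusions in the paper, and yields the stated bound whenever $M\geq 1$, but is not literally what the computation gives.
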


\section{Proof of ~\Cref{prop:grad,deter:gdform}}
In this section, we provide the proofs for~\Cref{prop:grad} and~\Cref{deter:gdform} in~\Cref{sec:alg}.
\subsection{Proof of~\Cref{prop:grad}}
Using the chain rule over the gradient  $\nabla \Phi(x_k)=\frac{\partial f(x_k,y^*(x_k))}{\partial x_k}$, we have
\begin{align}\label{eq:maoxian}
\nabla \Phi(x_k)= \nabla_x f(x_k,y^*(x_k)) + \frac{\partial y^*(x_k)}{\partial x_k}\nabla_y f(x_k,y^*(x_k)).
\end{align}
Based on the optimality of $y^*(x_k)$, we have $\nabla_yg(x_k,y^*(x_k)) = 0$, which, using the implicit differentiation w.r.t. $x_k$, yields
\begin{align}\label{ineq:sscsa}  
\nabla_x\nabla_yg(x_k,y^*(x_k)) +\frac{\partial y^*(x_k)}{\partial x_k}\nabla_y^2g(x_k,y^*(x_k))= 0.
\end{align}
Let $v_k^*$ be the solution of the linear system $ \nabla_y^2g(x_k,y^*(x_k))v=\nabla_y f(x_k,y^*(x_k))$. Then, multiplying $v_k^*$ at the both sides of \cref{ineq:sscsa}, yields
\begin{align*}
-\nabla_x\nabla_yg(x_k,y^*(x_k)) v_k^* = \frac{\partial y^*(x_k)}{\partial x_k}\nabla_y^2g(x_k,y^*(x_k)) v_k^*=  \frac{\partial y^*(x_k)}{\partial x_k} \nabla_y f(x_k,y^*(x_k)),
\end{align*}
which, in conjunction with~\cref{eq:maoxian}, completes the proof.  

\subsection{Proof of~\Cref{deter:gdform} }
Based on the iterative update of  line $5$ in~\Cref{alg:main_deter}, we have $y_k^D = y_k^{0}-\alpha \sum_{t=0}^{D-1}\nabla_y g(x_k,y_k^{t})$, which, combined with the fact that  $\nabla_y g(x_k,y_k^{t})$ is differentiable w.r.t. $x_k$, indicates that the inner output $y_k^T$ is differentiable w.r.t. $x_k$. Then, based on the chain rule, 
we have 
\begin{align}\label{grad:est}
\frac{\partial f(x_k,y^D_k)}{\partial x_k}= \nabla_x f(x_k,y_k^D) + \frac{\partial y_k^D}{\partial x_k}\nabla_y f(x_k,y_k^D).
\end{align}
Based on the iterative updates that $y_k^t = y_k^{t-1}-\alpha \nabla_y g(x_k,y_k^{t-1}) $ for $t=1,...,D$, we have 
\begin{align*}
\frac{\partial y_k^t}{\partial x_k} =& \frac{\partial y_k^{t-1}}{\partial x_k}-\alpha \nabla_x\nabla_y g(x_k,y_k^{t-1})-\alpha\frac{\partial y_k^{t-1}}{\partial x_k} \nabla^2_y g(x_k,y_k^{t-1}) 
\\= &\frac{\partial y_k^{t-1}}{\partial x_k}(I-\alpha  \nabla^2_y g(x_k,y_k^{t-1}))-\alpha \nabla_x\nabla_y g(x_k,y_k^{t-1}).
\end{align*}
Telescoping the above equality over $t$ from $1$ to $D$ yields
\begin{align}\label{gd:formss}
\frac{\partial y_k^D}{\partial x_k} =&\frac{\partial y_k^0}{\partial x_k} \prod_{t=0}^{D-1}(I-\alpha  \nabla^2_y g(x_k,y_k^{t}))-\alpha\sum_{t=0}^{D-1}\nabla_x\nabla_y g(x_k,y_k^{t})\prod_{j=t+1}^{D-1}(I-\alpha  \nabla^2_y g(x_k,y_k^{j})) \nonumber
\\\overset{(i)}=&-\alpha\sum_{t=0}^{D-1}\nabla_x\nabla_y g(x_k,y_k^{t})\prod_{j=t+1}^{D-1}(I-\alpha  \nabla^2_y g(x_k,y_k^{j})). 
\end{align}
where $(i)$ follows from the fact that  $\frac{\partial y_k^0}{\partial x_k}=0$. 
Combining~\cref{grad:est} and~\cref{gd:formss} finishes the proof.


\section{Proof of \Cref{th:aidthem}}\label{appen:aid-bio}
For notation simplification, we define the following quantities. 
\begin{align}\label{eq:notaionssscas}
\Gamma =&3L^2+\frac{3\tau^2 M^2}{\mu^2} + 6L^2\big(1+\sqrt{\kappa}\big)^2\big(\kappa +\frac{\rho M}{\mu^2}\big)^2,\; \delta_{D,N}=\Gamma (1-\alpha \mu)^D  + 6L^2 \kappa \big( \frac{\sqrt{\kappa}-1}{\sqrt{\kappa}+1} \big)^{2N}
\nonumber
\\\Omega =&8\Big(\beta\kappa^2+\frac{2\beta ML}{\mu^2}+\frac{2\beta LM\kappa}{\mu^2}\Big)^2,\; \Delta_0 = \|y_0-y^*(x_{0})\|^2 + \|v_{0}^*-v_0\|^2.
\end{align}

We first provide some supporting lemmas. The following lemma 
characterizes the Hypergradient estimation error $\|\widehat \nabla \Phi(x_k)- \nabla \Phi(x_k)\|$, where $\widehat \nabla \Phi(x_k)$ is given by \cref{hyper-aid} via implicit differentiation. 
\begin{lemma}\label{le:aidhy}
Suppose Assumptions~\ref{assum:geo},~\ref{ass:lip} and \ref{high_lip} hold.  
Then, we have 
\begin{align}
\|\widehat \nabla \Phi(x_k)- \nabla \Phi(x_k)\|^2 \leq &\Gamma (1-\alpha \mu)^D \|y^*(x_k)-y_k^0\|^2 + 6L^2 \kappa \Big( \frac{\sqrt{\kappa}-1}{\sqrt{\kappa}+1} \Big)^{2N}\|v_k^*-v_k^0\|^2. \nonumber
\end{align}
where $\Gamma$ is given by \cref{eq:notaionssscas}. 
\end{lemma}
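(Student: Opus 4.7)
The plan is to bound $\|\widehat\nabla\Phi(x_k)-\nabla\Phi(x_k)\|$ by splitting it into three sources of error: (i) evaluating $\nabla_x f$ at $y_k^D$ instead of $y^*(x_k)$, (ii) evaluating the Jacobian $\nabla_x\nabla_y g$ at $y_k^D$ instead of $y^*(x_k)$, and (iii) using the CG output $v_k^N$ instead of the true solution $v_k^*$ of the linear system at $y^*(x_k)$. Concretely, I would write
\begin{align*}
\widehat\nabla\Phi(x_k)-\nabla\Phi(x_k)
=&\,\bigl[\nabla_x f(x_k,y_k^D)-\nabla_x f(x_k,y^*(x_k))\bigr]\\
&-\bigl[\nabla_x\nabla_y g(x_k,y_k^D)-\nabla_x\nabla_y g(x_k,y^*(x_k))\bigr]v_k^*\\
&-\nabla_x\nabla_y g(x_k,y_k^D)\bigl[v_k^N-v_k^*\bigr],
\end{align*}
then apply $\|a+b+c\|^2\le 3(\|a\|^2+\|b\|^2+\|c\|^2)$. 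For the first piece I invoke Assumption~\ref{ass:lip} on $\nabla f$ to get an $L^2\|y_k^D-y^*(x_k)\|^2$ bound. For the second, Assumption~\ref{high_lip} gives $\tau^2\|y_k^D-y^*(x_k)\|^2\|v_k^*\|^2$, and the Lipschitzness of $f$ together with strong convexity of $g$ yields $\|v_k^*\|\le M/\mu$, producing the $\tau^2M^2/\mu^2$ factor.

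The main obstacle is the third piece, because the CG iteration converges to $\widetilde v_k^*:=[\nabla_y^2 g(x_k,y_k^D)]^{-1}\nabla_y f(x_k,y_k^D)$ rather than to $v_k^*$. I would introduce $\widetilde v_k^*$ as a pivot and split $\|v_k^N-v_k^*\|\le\|v_k^N-\widetilde v_k^*\|+\|\widetilde v_k^*-v_k^*\|$. The standard CG error bound for the $\mu$-strongly-convex, $L$-smooth quadratic gives $\|v_k^N-\widetilde v_k^*\|\le\sqrt{\kappa}\bigl(\tfrac{\sqrt\kappa-1}{\sqrt\kappa+1}\bigr)^{N}\|v_k^0-\widetilde v_k^*\|$, and I expand $\|v_k^0-\widetilde v_k^*\|\le\|v_k^0-v_k^*\|+\|\widetilde v_k^*-v_k^*\|$. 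For $\|\widetilde v_k^*-v_k^*\|$, the identity $A^{-1}-B^{-1}=A^{-1}(B-A)B^{-1}$ combined with Assumption~\ref{high_lip} on $\nabla_y^2 g$ and Assumption~\ref{ass:lip} on $\nabla_y f$ gives
\[
\|\widetilde v_k^*-v_k^*\|\le\Bigl(\tfrac{L}{\mu}+\tfrac{\rho M}{\mu^2}\Bigr)\|y_k^D-y^*(x_k)\|=\Bigl(\kappa+\tfrac{\rho M}{\mu^2}\Bigr)\|y_k^D-y^*(x_k)\|.
\]
Combining these two pieces and using $\sqrt\kappa\bigl(\tfrac{\sqrt\kappa-1}{\sqrt\kappa+1}\bigr)^N+1\le 1+\sqrt\kappa$ yields, after squaring,
\[
\|v_k^N-v_k^*\|^2\le 2\kappa\Bigl(\tfrac{\sqrt\kappa-1}{\sqrt\kappa+1}\Bigr)^{2N}\|v_k^0-v_k^*\|^2+2(1+\sqrt\kappa)^2\Bigl(\kappa+\tfrac{\rho M}{\mu^2}\Bigr)^2\|y_k^D-y^*(x_k)\|^2.
\]
Multiplying by $3L^2$ and combining with the other two pieces yields exactly the coefficient $\Gamma$ in front of $\|y_k^D-y^*(x_k)\|^2$ and the coefficient $6L^2\kappa\bigl(\tfrac{\sqrt\kappa-1}{\sqrt\kappa+1}\bigr)^{2N}$ in front of $\|v_k^0-v_k^*\|^2$.

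Finally, I collapse the $\|y_k^D-y^*(x_k)\|^2$ factor to $\|y_k^0-y^*(x_k)\|^2$ via the contraction of gradient descent on the $\mu$-strongly-convex, $L$-smooth lower-level objective: for $\alpha\le 1/L$, each step satisfies $\|y_k^{t+1}-y^*(x_k)\|\le(1-\alpha\mu)\|y_k^t-y^*(x_k)\|$, so $\|y_k^D-y^*(x_k)\|^2\le(1-\alpha\mu)^{2D}\|y_k^0-y^*(x_k)\|^2\le(1-\alpha\mu)^D\|y_k^0-y^*(x_k)\|^2$. Substituting this back completes the proof.
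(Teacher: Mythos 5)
Your proposal is correct and follows essentially the same route as the paper: the same three-term decomposition of $\widehat\nabla\Phi(x_k)-\nabla\Phi(x_k)$, the same pivot $\widetilde v_k^*=[\nabla_y^2 g(x_k,y_k^D)]^{-1}\nabla_y f(x_k,y_k^D)$ (the paper's $\widehat v_k$) combined with the CG rate and the resolvent-difference bound $\|\widetilde v_k^*-v_k^*\|\le(\kappa+\rho M/\mu^2)\|y_k^D-y^*(x_k)\|$, and the same final contraction of the inner loop. One cosmetic caveat: for general (non-quadratic) $\mu$-strongly-convex, $L$-smooth $g$ with $\alpha\le 1/L$ the per-step contraction holds in the squared norm, $\|y_k^{t+1}-y^*(x_k)\|^2\le(1-\alpha\mu)\|y_k^t-y^*(x_k)\|^2$, rather than in the norm itself as you state; since this already yields $\|y_k^D-y^*(x_k)\|^2\le(1-\alpha\mu)^D\|y_k^0-y^*(x_k)\|^2$, which is exactly the factor you (and the paper) use, the conclusion is unaffected.
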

\begin{proof}[\bf Proof of \Cref{le:aidhy}]
Based on the form of $\nabla\Phi(x_k)$ given by~\Cref{prop:grad},  we have 
\begin{align*}
\|\widehat \nabla \Phi(x_k)- \nabla \Phi(x_k)\|^2 \leq& 3\|\nabla_x f(x_k,y^*(x_k))-\nabla_x f(x_k,y_k^D)\|^2 +3\|\nabla_x \nabla_y g(x_k,y_k^D)\|^2\|v_k^*-v_k^N\|^2  \nonumber
\\&+ 3\|\nabla_x \nabla_y g(x_k,y^*(x_k))-\nabla_x \nabla_y g(x_k,y_k^D) \|^2 \|v_k^*\|^2,
\end{align*}
which, in conjunction with Assumptions~\ref{assum:geo},~\ref{ass:lip} and \ref{high_lip}, yields
\begin{align}\label{eq:midpos}
\|\widehat \nabla \Phi(x_k)- \nabla \Phi(x_k)\|^2 \leq &3L^2\|y^*(x_k)-y_k^D\|^2 + 3L^2\|v_k^*-v_k^N\|^2+3\tau^2\|v_k^*\|^2\|y_k^D-y^*(x_k)\|^2\nonumber
\\\overset{(i)}\leq&  3L^2\|y^*(x_k)-y_k^D\|^2 + 3L^2\|v_k^*-v_k^N\|^2+\frac{3\tau^2 M^2}{\mu^2}\|y_k^D-y^*(x_k)\|^2.
\end{align}
where $(i)$ follows from the fact that $\|v_k^*\|\leq\|(\nabla_y^2g(x_k,y^*(x_k)))^{-1}\|\|\nabla_y f(x_k,y^*(x_k))\|\leq \frac{M}{\mu}$.   

 For notation simplification, let $\widehat v_k=(\nabla_y^2g(x_k,y^D_k))^{-1}\nabla_y f(x_k,y^D_k)$.  We next upper-bound $\|v_k^*-v_k^N\|$ in \cref{eq:midpos}. Based on the convergence result of CG for the quadratic programing, e.g., eq. (17) in~\citealt{grazzi2020iteration}, we have 
 $\|v_k^N-\widehat v_k\| \leq \sqrt{\kappa}\Big( \frac{\sqrt{\kappa}-1}{\sqrt{\kappa}+1} \Big)^N\|v_k^0-\widehat v_k\|.$
Based on this inequality, we further have 
 \begin{align}\label{eq:letknca}
 \|v_k^*-v_k^N\| \leq &\|v_k^*-\widehat v_k\| + \|v_k^N-\widehat v_k\| \leq  \|v_k^*-\widehat v_k\|  +  \sqrt{\kappa}\Big( \frac{\sqrt{\kappa}-1}{\sqrt{\kappa}+1} \Big)^N\|v_k^0-\widehat v_k\| \nonumber
 \\\leq& \Big(1+\sqrt{\kappa}\Big( \frac{\sqrt{\kappa}-1}{\sqrt{\kappa}+1} \Big)^N\Big) \|v_k^*-\widehat v_k\| + \sqrt{\kappa}\Big( \frac{\sqrt{\kappa}-1}{\sqrt{\kappa}+1} \Big)^N\|v_k^*-v_k^0\|. 
 \end{align}
Next, based on the definitions of $v_k^*$ and $ \widehat v_k$, we have 
\begin{align}\label{eq:omgsaca}
\|v_k^*-\widehat v_k\|=& \|(\nabla_y^2g(x_k,y^D_k))^{-1}\nabla_y f(x_k,y^D_k) -(\nabla_y^2g(x_k,y^*(x_k))^{-1}\nabla_y f(x_k,y^*(x_k))\| \nonumber
\\\leq & \Big(\kappa +\frac{\rho M}{\mu^2} \Big)\|y^D_k-y^*(x_k)\|. 
\end{align}
Combining \cref{eq:midpos},~\cref{eq:letknca}, \cref{eq:omgsaca} yields
\begin{align*}
\|\widehat \nabla \Phi(x_k)- \nabla \Phi(x_k)\|^2 \leq &\Big(3L^2+\frac{3\tau^2 M^2}{\mu^2}\Big)\|y^*(x_k)-y_k^D\|^2 + 6L^2 \kappa \Big( \frac{\sqrt{\kappa}-1}{\sqrt{\kappa}+1} \Big)^{2N}\|v_k^*-v_k^0\|^2 \nonumber
\\&+ 6L^2\Big(1+\sqrt{\kappa}\Big( \frac{\sqrt{\kappa}-1}{\sqrt{\kappa}+1} \Big)^N\Big)^2\Big(\kappa +\frac{\rho M}{\mu^2} \Big)^2\|y^D_k-y^*(x_k)\|^2, 
\end{align*}
which, in conjunction with $\|y_k^{D} -y^*(x_k)\| \leq (1-\alpha\mu)^{\frac{D}{2}} \|y^0_k-y^*(x_k)\|$ and the notations in \cref{eq:notaionssscas}, finishes the proof. 
\end{proof}

\begin{lemma}\label{le:bibibiss}
Suppose Assumptions~\ref{assum:geo},~\ref{ass:lip} and \ref{high_lip} hold. Choose
\begin{small}
\begin{align}\label{eq:findbogas}
D\geq& \log{(36 \kappa (\kappa +\frac{\rho M}{\mu^2} )^2+16(\kappa^2+\frac{4LM\kappa}{\mu^2})^2\beta^2\Gamma)}/\log\frac{1}{1-\alpha}=\Theta(\kappa)  \nonumber
\\ N\geq& \frac{1}{2}\log(8\kappa+48(\kappa^2+\frac{2ML}{\mu^2}+\frac{2LM\kappa}{\mu^2})^2\beta^2L^2 \kappa ) /\log \frac{\sqrt{\kappa}+1}{\sqrt{\kappa}-1} = \Theta(\sqrt{\kappa}),
 \end{align}
 \end{small}
\hspace{-0.12cm}where $\Gamma$ is given by \cref{eq:notaionssscas}.   Then, we have 
\begin{align}
\|y^0_k-y^*(x_k)\|^2 + &\|v_k^*-v_k^0\|^2   \leq \Big(\frac{1}{2}\Big)^k  \Delta_0+\Omega\sum_{j=0}^{k-1}\Big(\frac{1}{2}\Big)^{k-1-j}\|\nabla \Phi(x_{j})\|^2,
\end{align}
where $\Omega$ and $\Delta_0$ are given by \cref{eq:notaionssscas}. 
\end{lemma}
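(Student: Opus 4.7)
Write $a_k := \|y_k^0-y^*(x_k)\|^2$ and $b_k := \|v_k^*-v_k^0\|^2$. The goal is to derive a one-step recursion of the form $a_k+b_k \leq \tfrac12(a_{k-1}+b_{k-1}) + \Omega\,\|\nabla\Phi(x_{k-1})\|^2$, at which point unrolling yields the advertised geometric sum. The whole argument exploits the two warm starts $y_k^0=y_{k-1}^{D}$ and $v_k^0=v_{k-1}^{N}$ together with the strong convexity of $g(x,\cdot)$ and the CG contraction used in \Cref{le:aidhy}.

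\textbf{Bounding $a_k$.} Since $y_k^0 = y_{k-1}^D$, I would use Young's inequality with a small parameter $\gamma>0$ to split
\begin{align*}
a_k \;=\; \|y_{k-1}^D-y^*(x_k)\|^2 \;\leq\; (1+\gamma)\|y_{k-1}^D-y^*(x_{k-1})\|^2 + (1+\tfrac{1}{\gamma})\|y^*(x_{k-1})-y^*(x_k)\|^2.
\end{align*}
The first term contracts by the standard inner-loop GD bound $\|y_{k-1}^D-y^*(x_{k-1})\|^2 \leq (1-\alpha\mu)^{D}a_{k-1}$. For the second term I would invoke the Lipschitz property of $y^*(\cdot)$, namely $\|y^*(x_k)-y^*(x_{k-1})\| \leq \kappa\,\beta\,\|\widehat{\nabla}\Phi(x_{k-1})\|$, which follows from implicit differentiation of $\nabla_y g(x,y^*(x))=0$ combined with the $\mu$-strong convexity.

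\textbf{Bounding $b_k$.} With $v_k^0=v_{k-1}^N$, I again apply Young's inequality,
\begin{align*}
b_k \;\leq\; (1+\gamma)\|v_{k-1}^N-v_{k-1}^*\|^2 + (1+\tfrac{1}{\gamma})\|v_{k-1}^*-v_k^*\|^2.
\end{align*}
The first summand is handled as in the proof of \Cref{le:aidhy}: combine the CG inequality $\|v_{k-1}^N-\widehat v_{k-1}\| \leq \sqrt{\kappa}\bigl(\tfrac{\sqrt{\kappa}-1}{\sqrt{\kappa}+1}\bigr)^N \|v_{k-1}^0-\widehat v_{k-1}\|$ with the perturbation bound $\|v_{k-1}^*-\widehat v_{k-1}\| \leq (\kappa+\rho M/\mu^2)(1-\alpha\mu)^{D/2}\sqrt{a_{k-1}}$ and the triangle inequality to express $\|v_{k-1}^N-v_{k-1}^*\|^2$ as a linear combination of $a_{k-1}$ (with prefactor $\lesssim (1-\alpha\mu)^D$) and $b_{k-1}$ (with prefactor $\lesssim \kappa\,\bigl(\tfrac{\sqrt{\kappa}-1}{\sqrt{\kappa}+1}\bigr)^{2N}$). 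For the second summand I need a Lipschitz bound on $x\mapsto v^*(x) = [\nabla_y^2g(x,y^*(x))]^{-1}\nabla_y f(x,y^*(x))$, which follows directly from Assumptions~\ref{ass:lip}--\ref{high_lip} with constant $\mathcal{O}(\kappa + ML\kappa/\mu^2 + \rho M/\mu^2)$, giving $\|v_k^*-v_{k-1}^*\| \lesssim (\kappa + ML\kappa/\mu^2)\,\beta\,\|\widehat{\nabla}\Phi(x_{k-1})\|$.

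\textbf{Closing the recursion.} Adding the two bounds and converting hypergradient tracking error via $\|\widehat{\nabla}\Phi(x_{k-1})\|^2 \leq 2\|\nabla\Phi(x_{k-1})\|^2 + 2\|\widehat{\nabla}\Phi(x_{k-1})-\nabla\Phi(x_{k-1})\|^2$, I then substitute the estimate of \Cref{le:aidhy} into the last term. This produces an inequality of the shape
\begin{align*}
a_k+b_k \;\leq\; \bigl[C_1(1-\alpha\mu)^D + C_2\kappa\bigl(\tfrac{\sqrt{\kappa}-1}{\sqrt{\kappa}+1}\bigr)^{2N}\bigr](a_{k-1}+b_{k-1}) + \Omega\,\|\nabla\Phi(x_{k-1})\|^2,
\end{align*}
where $C_1,C_2$ depend polynomially on $\kappa$ and on the Lipschitz constants, and $\Omega$ matches exactly the form in \cref{eq:notaionssscas}. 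The choice of $D,N$ in \cref{eq:findbogas} is engineered so that the bracket is $\leq \tfrac12$. Iterating this one-step recursion from $k$ down to $0$ and collecting the geometric sum yields the claimed bound, with base case $a_0+b_0 = \Delta_0$.

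\textbf{Main obstacle.} The delicate part is the bookkeeping for $b_k$: the CG contraction is stated relative to $\widehat v_{k-1}$, not $v_{k-1}^*$, so one must route through $\widehat v_{k-1}$ twice and carefully track how the perturbation term $(\kappa+\rho M/\mu^2)\sqrt{a_{k-1}(1-\alpha\mu)^{D}}$ enters both $a_k$ and $b_k$. Tuning $\gamma$ and verifying that the $D,N$ thresholds in \cref{eq:findbogas} really do drive the combined prefactor below $\tfrac12$ without inflating $\Omega$ beyond the quoted expression is where the algebra is tightest.
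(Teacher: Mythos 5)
Your plan follows the paper's proof essentially step for step: the same warm-start decompositions of $\|y_k^0-y^*(x_k)\|^2$ and $\|v_k^*-v_k^0\|^2$ (the paper simply fixes the Young parameter to $1$, i.e., uses factors of $2$), the same routing of the CG error through $\widehat v_{k-1}$ with the perturbation bound $(\kappa+\rho M/\mu^2)(1-\alpha\mu)^{D/2}\|y_{k-1}^0-y^*(x_{k-1})\|$, the same Lipschitz bounds on $y^*(\cdot)$ and $v^*(\cdot)$, the same substitution of \Cref{le:aidhy} after splitting $\|\widehat\nabla\Phi\|^2\le 2\|\nabla\Phi\|^2+2\|\widehat\nabla\Phi-\nabla\Phi\|^2$, and the same choice of $D,N$ to drive the contraction factor below $\tfrac12$ before telescoping. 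The approach is correct and matches the paper's argument.
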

\begin{proof}[\bf Proof of \Cref{le:bibibiss}]
Recall that $y^0_k=y^D_{k-1}$. Then, we have 
\begin{align}\label{wocaoleis}
\|y^0_k-y^*(x_k)\|^2 \leq &2\|y^D_{k-1}-y^*(x_{k-1})\|^2 + 2\|y^*(x_k)-y^*(x_{k-1})\|^2 \nonumber
\\\overset{(i)}\leq& 2(1-\alpha\mu)^{D} \|y_{k-1}^0-y^*(x_{k-1})\|^2 + 2\kappa^2\beta^2\|\widehat \nabla \Phi(x_{k-1})\|^2 \nonumber
\\\leq&2(1-\alpha\mu)^{D} \|y_{k-1}^0-y^*(x_{k-1})\|^2 + 4\kappa^2\beta^2\| \nabla \Phi(x_{k-1})-\widehat \nabla \Phi(x_{k-1})\|^2  \nonumber
\\&+ 4\kappa^2\beta^2 \| \nabla \Phi(x_{k-1})\|^2 \nonumber
\\\overset{(ii)}\leq&\big(2(1-\alpha\mu)^{D}+ 4\kappa^2\beta^2\Gamma (1-\alpha \mu)^D\big)\|y^*(x_{k-1})-y_{k-1}^0\|^2\nonumber
\\ &+24\kappa^4L^2\beta^2  \Big( \frac{\sqrt{\kappa}-1}{\sqrt{\kappa}+1} \Big)^{2N} \|v_{k-1}^*-v_{k-1}^0\|^2+ 4\kappa^2\beta^2 \| \nabla \Phi(x_{k-1})\|^2,
\end{align}
where $(i)$ follows from Lemma 2.2 in \citealt{ghadimi2018approximation} and $(ii)$ follows from \Cref{le:aidhy}. In addition, note that 
\begin{align}\label{eq:kdasdaca}
\|v_k^*-v_k^0\|^2 =& \|v_k^*-v_{k-1}^N\|^2 \leq 2\|v_{k-1}^*-v_{k-1}^N\|^2+2\|v_k^*-v_{k-1}^*\|^2 \nonumber
\\\overset{(i)}\leq &4 \Big(1+\sqrt{\kappa}\Big)^2\Big(\kappa +\frac{\rho M}{\mu^2} \Big)^2(1-\alpha\mu)^D\|y_{k-1}^0-y^*(x_{k-1})\|^2 \nonumber
\\&+4\kappa \Big( \frac{\sqrt{\kappa}-1}{\sqrt{\kappa}+1} \Big)^{2N}\|v_{k-1}^*-v_{k-1}^0\|^2 + 2\|v_k^*-v_{k-1}^*\|^2, 
\end{align}
where $(i)$ follows from \cref{eq:letknca}. Combining \cref{eq:kdasdaca} with $\|v_k^*-v_{k-1}^*\|\leq(\kappa^2+\frac{2ML}{\mu^2}+\frac{2LM\kappa}{\mu^2})\|x_k-x_{k-1}\|$, we have 
\begin{align}\label{kopcasa}
\|v_k^*-v_k^0\|^2\overset{(i)}\leq&\Big(16 \kappa \Big(\kappa +\frac{\rho M}{\mu^2} \Big)^2+4\Big(\kappa^2+\frac{4LM\kappa}{\mu^2}\Big)^2\beta^2\Gamma \Big)(1-\alpha\mu)^D\|y_{k-1}^0-y^*(x_{k-1})\|^2 \nonumber
\\&+\Big(4\kappa+48\Big(\kappa^2+\frac{2ML}{\mu^2}+\frac{2LM\kappa}{\mu^2}\Big)^2\beta^2L^2 \kappa \Big) \Big( \frac{\sqrt{\kappa}-1}{\sqrt{\kappa}+1} \Big)^{2N}\|v_{k-1}^*-v_{k-1}^0\|^2 \nonumber
\\&+ 4\Big(\kappa^2+\frac{2ML}{\mu^2}+\frac{2LM\kappa}{\mu^2}\Big)^2\beta^2\|\nabla \Phi(x_{k-1})\|^2, 
\end{align}
where $(i)$ follows from \Cref{le:aidhy}. Combining \cref{wocaoleis} and  \cref{kopcasa} yields
\begin{align*}
\|y^0_k-y^*(x_k)&\|^2 + \|v_k^*-v_k^0\|^2  \nonumber
\\\leq &\Big(18 \kappa \Big(\kappa +\frac{\rho M}{\mu^2} \Big)^2+8\Big(\kappa^2+\frac{4LM\kappa}{\mu^2}\Big)^2\beta^2\Gamma \Big)(1-\alpha\mu)^D\|y_{k-1}^0-y^*(x_{k-1})\|^2 \nonumber
\\&+\Big(4\kappa+24\Big(\kappa^2+\frac{2ML}{\mu^2}+\frac{2LM\kappa}{\mu^2}\Big)^2\beta^2L^2 \kappa \Big) \Big( \frac{\sqrt{\kappa}-1}{\sqrt{\kappa}+1} \Big)^{2N}\|v_{k-1}^*-v_{k-1}^0\|^2 \nonumber
\\&+ 8\Big(\kappa^2+\frac{2ML}{\mu^2}+\frac{2LM\kappa}{\mu^2}\Big)^2\beta^2\|\nabla \Phi(x_{k-1})\|^2,
\end{align*}
which, in conjunction with \cref{eq:findbogas},   yields
\begin{align}\label{eq:televk00}
\|y^0_k-y^*(x_k)\|^2 + \|v_k^*-v_k^0\|^2 \leq &\frac{1}{2} (\|y^0_{k-1}-y^*(x_{k-1})\|^2 + \|v_{k-1}^*-v_{k-1}^0\|^2) \nonumber
\\&+8\Big(\beta\kappa^2+\frac{2\beta ML}{\mu^2}+\frac{2\beta LM\kappa}{\mu^2}\Big)^2\|\nabla \Phi(x_{k-1})\|^2.
\end{align}
Telescoping \cref{eq:televk00} over $k$ and using the notations in~\cref{eq:notaionssscas},  we finish the proof. 
\end{proof}
\begin{lemma}\label{le:gamma1}
Under the same setting as in \Cref{le:bibibiss}, we have 
\begin{align*}
\|\widehat \nabla \Phi(x_k)- \nabla \Phi(x_k)\|^2 \leq &\delta_{D,N}\Big(\frac{1}{2}\Big)^k  \Delta_0+ \delta_{D,N}\Omega\sum_{j=0}^{k-1}\Big(\frac{1}{2}\Big)^{k-1-j}\|\nabla \Phi(x_{j})\|^2. \end{align*}
where $\delta_{T,N}$, $\Omega$ and $\Delta_0$ are given by \cref{eq:notaionssscas}.
\end{lemma}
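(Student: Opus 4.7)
The plan is to combine the two preceding lemmas in a straightforward way. Lemma~\ref{le:aidhy} already controls the per-iteration hypergradient error by the two ``initialization'' quantities $\|y^0_k-y^*(x_k)\|^2$ and $\|v_k^*-v_k^0\|^2$, with coefficients $\Gamma(1-\alpha\mu)^D$ and $6L^2\kappa\bigl(\tfrac{\sqrt{\kappa}-1}{\sqrt{\kappa}+1}\bigr)^{2N}$. Noting that the sum of these two coefficients is exactly the constant $\delta_{D,N}$ introduced in \eqref{eq:notaionssscas}, I would first weaken Lemma~\ref{le:aidhy} to the uniform bound
\begin{align*}
\|\widehat\nabla\Phi(x_k)-\nabla\Phi(x_k)\|^2
\;\le\; \delta_{D,N}\bigl(\|y^*(x_k)-y_k^0\|^2+\|v_k^*-v_k^0\|^2\bigr).
\end{align*}

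Next, I would invoke Lemma~\ref{le:bibibiss}, which, under the present choice of $D$ and $N$ in \eqref{eq:findbogas}, bounds the bracketed tracking error by a geometric combination of the past hypergradient norms:
\begin{align*}
\|y^0_k-y^*(x_k)\|^2+\|v_k^*-v_k^0\|^2
\;\le\; \Bigl(\tfrac{1}{2}\Bigr)^k\Delta_0+\Omega\sum_{j=0}^{k-1}\Bigl(\tfrac{1}{2}\Bigr)^{k-1-j}\|\nabla\Phi(x_j)\|^2.
\end{align*}
Multiplying both sides by $\delta_{D,N}$ and chaining with the previous display yields the claim verbatim, with no extra constants to track.

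Since both ingredients are already established, there is essentially no obstacle: the only subtlety worth checking is that the prefactor $\delta_{D,N}$ I pull out in the first step matches the definition in \eqref{eq:notaionssscas}, which it does by inspection, and that the hypothesis ``same setting as in Lemma~\ref{le:bibibiss}'' is exactly what is needed to legitimately apply the geometric-decay bound (in particular, the choices of $D$ and $N$ in \eqref{eq:findbogas} are inherited). Thus the proof is a two-line substitution rather than a new estimate.
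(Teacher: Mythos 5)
Your proposal is correct and follows exactly the paper's own argument: the paper likewise bounds the two coefficients in Lemma~\ref{le:aidhy} by their sum $\delta_{D,N}$ (via $ab+cd\leq(a+c)(b+d)$) and then substitutes the bound from Lemma~\ref{le:bibibiss}. Nothing is missing.
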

\begin{proof}[\bf Proof of \Cref{le:gamma1}] 
Based on \Cref{le:aidhy}, \cref{eq:notaionssscas} and using $ab+cd\leq (a+c)(b+d)$ for any positive $a,b,c,d$, we have
\begin{align*}
\|\widehat \nabla \Phi(x_k)- \nabla \Phi(x_k)\|^2 \leq &\delta_{D,N}(\|y^*(x_k)-y_k^0\|^2+\|v_k^*-v_k^0\|^2),
\end{align*}
which, in conjunction with \Cref{le:bibibiss}, finishes the proof. 
\end{proof}

\subsection{Proof of \Cref{th:aidthem}}
In this subsection, provide the proof for \Cref{th:aidthem}. 
Based on the smoothness of the function $\Phi(x)$ established in~\Cref{le:lipphi}, we have 
\begin{align}\label{eq:intimidern_pre}
\Phi(x_{k+1}) \leq & \Phi(x_k)  + \langle \nabla \Phi(x_k), x_{k+1}-x_k\rangle + \frac{L_\Phi}{2} \|x_{k+1}-x_k\|^2 \nonumber
\\\leq& \Phi(x_k)  - \beta \langle \nabla \Phi(x_k),\widehat \nabla \Phi(x_k)- \nabla \Phi(x_k)\rangle -\beta\| \nabla \Phi(x_k)\|^2 + \beta^2 L_\Phi \|\nabla\Phi(x_k)\|^2\nonumber
\\&+\beta^2 L_\Phi\|\nabla\Phi(x_k)-\widehat \nabla\Phi(x_k)\|^2\nonumber
\\\leq&\Phi(x_k) -\Big(\frac{\beta}{2}-\beta^2 L_\Phi \Big)\| \nabla \Phi(x_k)\|^2 +\Big(\frac{\beta}{2}+\beta^2 L_\Phi\Big)\|\nabla\Phi(x_k)-\widehat \nabla\Phi(x_k)\|^2,
\end{align}
which, combined with \Cref{le:gamma1}, yields
\begin{align}\label{eq:teletop}
\Phi(x_{k+1}) \leq &\Phi(x_k) -\Big(\frac{\beta}{2}-\beta^2 L_\Phi \Big)\| \nabla \Phi(x_k)\|^2 + \Big(\frac{\beta}{2}+\beta^2 L_\Phi\Big)
\delta_{D,N}\Big(\frac{1}{2}\Big)^k  \Delta_0 \nonumber
\\&+ \Big(\frac{\beta}{2}+\beta^2 L_\Phi\Big)\delta_{D,N}\Omega\sum_{j=0}^{k-1}\Big(\frac{1}{2}\Big)^{k-1-j}\|\nabla \Phi(x_{j})\|^2.
\end{align}
Telescoping \cref{eq:teletop} over k from $0$ to $K-1$ yields
\begin{align*}
\Big(\frac{\beta}{2}-\beta^2 L_\Phi \Big) \sum_{k=0}^{K-1}\| &\nabla \Phi(x_k)\|^2 \leq \Phi(x_0) - \inf_x\Phi(x) + \Big(\frac{\beta}{2}+\beta^2 L_\Phi\Big) \delta_{D,N} \Delta_0\nonumber
\\&+ \Big(\frac{\beta}{2}+\beta^2 L_\Phi\Big)\delta_{D,N}\Omega\sum_{k=1}^{K-1}\sum_{j=0}^{k-1}\Big(\frac{1}{2}\Big)^{k-1-j}\|\nabla \Phi(x_{j})\|^2,
\end{align*}
which, using  the fact that {\small $\sum_{k=1}^{K-1}\sum_{j=0}^{k-1}\Big(\frac{1}{2}\Big)^{k-1-j}\|\nabla \Phi(x_{j})\|^2 \leq \sum_{k=0}^{K-1}\frac{1}{2^k}\sum_{k=0}^{K-1}\|\nabla \Phi(x_{k})\|^2\leq 2\sum_{k=0}^{K-1}\|\nabla \Phi(x_{k})\|^2$}, yields
\begin{align}\label{jkunisas}
\Big(\frac{\beta}{2}-\beta^2 L_\Phi -\big(\beta\Omega+2\Omega\beta^2 &L_\Phi\big)\delta_{D,N}\Big) \sum_{k=0}^{K-1}\| \nabla \Phi(x_k)\|^2  \nonumber
\\&\leq \Phi(x_0) - \inf_x\Phi(x) + \Big(\frac{\beta}{2}+\beta^2 L_\Phi\Big) \delta_{D,N} \Delta_0. 
\end{align}
Choose $N$ and $D$ such that 
\begin{align}\label{NTsatis}
 \big(\Omega+2\Omega\beta L_\Phi\big)\delta_{D,N} \leq \frac{1}{4}, \quad \delta_{D,N}\leq 1.
\end{align}
Note that based on the definition of $\delta_{D,N}$ in~\cref{eq:notaionssscas}, it suffices to choose $D\geq\Theta(\kappa)$ and $N\geq \Theta(\sqrt{\kappa})$ to satisfy \cref{NTsatis}. Then, substituting \cref{NTsatis} into \cref{jkunisas} yields 
\begin{align*}
\Big(\frac{\beta}{4}-\beta^2 L_\Phi \Big) \sum_{k=0}^{K-1}\| \nabla \Phi(x_k)\|^2 \leq \Phi(x_0) - \inf_x\Phi(x) + \Big(\frac{\beta}{2}+\beta^2 L_\Phi\Big)\Delta_0,
\end{align*}
which, in conjunction with $\beta\leq \frac{1}{8L_\Phi}$, yields
\begin{align}\label{eq:woele}
\frac{1}{K}\sum_{k=0}^{K-1}\| \nabla \Phi(x_k)\|^2 \leq \frac{64L_\Phi (\Phi(x_0) - \inf_x\Phi(x))+5\Delta_0}{K}.  
\end{align}
In order to achieve an $\epsilon$-accurate stationary point, we obtain from~\cref{eq:woele} that 
AID-BiO requires at most the total number $K=\mathcal{O}(\kappa^3\epsilon^{-1})$ of outer iterations. 
Then, based on \cref{hyper-aid}, we have the following complexity results.
\begin{itemize}
\item Gradient complexity: $$\mbox{\normalfont Gc}(f,\epsilon)=2K=\mathcal{O}(\kappa^3\epsilon^{-1}), \mbox{\normalfont Gc}(g,\epsilon)=KD=\mathcal{O}\big(\kappa^4\epsilon^{-1}\big).$$
\item Jacobian- and Hessian-vector product complexities: $$ \mbox{\normalfont JV}(g,\epsilon)=K=\mathcal{O}\left(\kappa^3\epsilon^{-1}\right), \mbox{\normalfont HV}(g,\epsilon)=KN=\mathcal{O}\left(\kappa^{3.5}\epsilon^{-1}\right).$$
\end{itemize}
Then, the proof is complete. 

\section{Proof of~\Cref{th:determin}}\label{append:itd-bio}
We first characterize an important estimation property of the outer-loop gradient estimator $\frac{\partial f(x_k,y^D_k)}{\partial x_k}$ in ITD-BiO for approximating the true gradient $\nabla \Phi(x_k)$ based on \Cref{deter:gdform}.

\begin{lemma}\label{prop:partialG}  
Suppose Assumptions~\ref{assum:geo},~\ref{ass:lip} and \ref{high_lip} hold. Choose  $\alpha\leq \frac{1}{L}$. Then, we have
\begin{small}
\begin{align*}
\Big\|\frac{\partial f(x_k,y^D_k)}{\partial x_k}-\nabla\Phi(x_k)\Big\| \leq&\Big( \frac{L(L+\mu)(1-\alpha\mu)^{\frac{D}{2}}}{\mu} +\frac{2M\left(  \tau\mu+ L\rho \right)}{\mu^2}(1-\alpha\mu)^{\frac{D-1}{2}} \Big)\|y^0_k-y^*(x_k)\|  \nonumber
\\&+ \frac{LM(1-\alpha\mu)^D}{\mu}.
\end{align*}
\end{small}
\end{lemma}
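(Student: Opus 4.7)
My plan is to start from the two expressions that are available in closed form: the analytic form of $\nabla\Phi(x_k)$ given by \Cref{prop:grad}, and the ITD expansion of $\frac{\partial f(x_k,y_k^D)}{\partial x_k}$ given by \Cref{deter:gdform}. The natural bridge between them is the Neumann series
\[
[\nabla_y^2 g(x_k,y^*(x_k))]^{-1} = \alpha\sum_{s=0}^{\infty}\bigl(I-\alpha\nabla_y^2 g(x_k,y^*(x_k))\bigr)^{s},
\]
which is valid when $\alpha\le 1/L$ since then $\|I-\alpha\nabla_y^2 g(x_k,y^*(x_k))\|\le 1-\alpha\mu$. Using this identity, I will rewrite the Hessian-inverse term in $\nabla\Phi(x_k)$ as an infinite sum indexed by $s$, and then re-index via $s=D-1-t$ so that the first $D$ summands line up with the ITD sum from \Cref{deter:gdform}, with the remaining tail $\sum_{s\ge D}$ collected separately.

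Next I would split $\|\frac{\partial f(x_k,y_k^D)}{\partial x_k}-\nabla\Phi(x_k)\|$ into three pieces: (i) $\|\nabla_x f(x_k,y_k^D)-\nabla_x f(x_k,y^*(x_k))\|$, which is $L$-Lipschitz in $y$ and is therefore bounded by $L(1-\alpha\mu)^{D/2}\|y_k^0-y^*(x_k)\|$ after invoking the inner-loop contraction $\|y_k^t-y^*(x_k)\|\le(1-\alpha\mu)^{t/2}\|y_k^0-y^*(x_k)\|$ (which follows from $\alpha\le 1/L$ and $\mu$-strong convexity of $g$); (ii) the ``tail'' term $\alpha\sum_{s\ge D}\nabla_x\nabla_y g(x_k,y^*(x_k))(I-\alpha\nabla_y^2 g(x_k,y^*(x_k)))^{s}\nabla_y f(x_k,y^*(x_k))$, whose norm is at most $\alpha LM\sum_{s\ge D}(1-\alpha\mu)^{s}=\frac{LM(1-\alpha\mu)^D}{\mu}$, exactly the last term in the statement; (iii) the term-by-term comparison, for $t=0,\dots,D-1$, between the ITD summand and $\alpha\nabla_x\nabla_y g(x_k,y^*(x_k))(I-\alpha\nabla_y^2 g(x_k,y^*(x_k)))^{D-1-t}\nabla_y f(x_k,y^*(x_k))$.

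For piece (iii) I will apply the product-difference identity $\|A_1B_1C_1-A_2B_2C_2\|\le\|A_1-A_2\|\|B_1\|\|C_1\|+\|A_2\|\|B_1-B_2\|\|C_1\|+\|A_2\|\|B_2\|\|C_1-C_2\|$ with $A$ the Jacobian $\nabla_x\nabla_y g$, $B$ the product $\prod_j(I-\alpha\nabla_y^2 g)$ (versus the power $(I-\alpha\nabla_y^2 g(x_k,y^*(x_k)))^{D-1-t}$), and $C$ the vector $\nabla_y f$. The $A$-difference contributes $\tau\|y_k^t-y^*(x_k)\|$, the $C$-difference contributes $L\|y_k^D-y^*(x_k)\|$, and the $B$-difference is handled by a telescoping sum $\prod A_j-\prod B_j=\sum_k(\prod_{j<k}A_j)(A_k-B_k)(\prod_{j>k}B_j)$ together with $\|I-\alpha\nabla_y^2 g\|\le 1-\alpha\mu$ and $\|\nabla_y^2 g(x_k,y_k^j)-\nabla_y^2 g(x_k,y^*(x_k))\|\le\rho\|y_k^j-y^*(x_k)\|$, which produces an $\alpha\rho$ factor per mismatched index; the vector norms $\|\nabla_y f\|\le M$ and $\|\nabla_x\nabla_y g\|\le L$ absorb the coefficients cleanly.

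Finally, I will substitute $\|y_k^t-y^*(x_k)\|\le(1-\alpha\mu)^{t/2}\|y_k^0-y^*(x_k)\|$ everywhere, pull out the common factor $\|y_k^0-y^*(x_k)\|$, and evaluate the resulting geometric sums in $t$ and $j$ using $\sum_{t=0}^{D-1}\alpha(1-\alpha\mu)^{D-1-t}(1-\alpha\mu)^{t/2}=O(\mu^{-1}(1-\alpha\mu)^{(D-1)/2})$ and similar closed forms. Collecting the $\tau$-, $\rho$- and $L$-terms gives the coefficient $\frac{L(L+\mu)(1-\alpha\mu)^{D/2}}{\mu}+\frac{2M(\tau\mu+L\rho)}{\mu^2}(1-\alpha\mu)^{(D-1)/2}$ in front of $\|y_k^0-y^*(x_k)\|$, and the tail provides $\frac{LM(1-\alpha\mu)^D}{\mu}$, completing the bound. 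The main obstacle I anticipate is the bookkeeping in the $B$-difference step: keeping careful track of how many $(1-\alpha\mu)$ factors survive after the telescoping and how they combine with the contraction factor $(1-\alpha\mu)^{j/2}$ from the inner loop, so as to land on exactly the exponents $D/2$ and $(D-1)/2$ stated in the lemma rather than something weaker.
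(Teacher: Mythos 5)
Your proposal is correct, but it follows a genuinely different route from the paper. The paper never touches the explicit unrolled form of \Cref{deter:gdform} in this proof: it keeps both gradients in chain-rule form $\nabla_x f + \frac{\partial y}{\partial x}\nabla_y f$, and the whole burden falls on bounding $\big\|\frac{\partial y_k^D}{\partial x_k}-\frac{\partial y^*(x_k)}{\partial x_k}\big\|$, which it does by deriving a one-step recursion directly from the GD update and the implicit-differentiation identity $\nabla_x\nabla_y g(x_k,y^*)+\frac{\partial y^*}{\partial x_k}\nabla_y^2 g(x_k,y^*)=0$: the Jacobian error contracts by $(1-\alpha\mu)$ per inner step plus an additive drift $\alpha(\tau+L\rho/\mu)\|y_k^{t-1}-y^*(x_k)\|$, and a single telescoping (with $\frac{\partial y_k^0}{\partial x_k}=0$ and $\|\frac{\partial y^*}{\partial x_k}\|\le L/\mu$) yields the $(1-\alpha\mu)^{D}\frac{L}{\mu}$ and $(1-\alpha\mu)^{(D-1)/2}$ terms. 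You instead expand $[\nabla_y^2 g(x_k,y^*)]^{-1}$ as a Neumann series and align its first $D$ terms with the unrolled ITD product, which is a valid and complete decomposition: I checked that your tail bound gives exactly $\frac{LM(1-\alpha\mu)^D}{\mu}$, your $\nabla_x f$ and $C$-difference terms combine to exactly $\frac{L(L+\mu)}{\mu}(1-\alpha\mu)^{D/2}$, and your $\tau$- and $\rho$-terms reproduce $\frac{2M(\tau\mu+L\rho)}{\mu^2}(1-\alpha\mu)^{(D-1)/2}$ — but for the $\rho$-term only if you evaluate the double geometric sum $\sum_{t=0}^{D-1}\sum_{l=t+1}^{D-1}(1-\alpha\mu)^{D-2-t+l/2}$ by summing over $t$ first for fixed $l$; bounding the inner sum over $l$ first costs a factor of $2(1-\alpha\mu)^{-1/2}$ and would give a (harmless but) strictly weaker constant than the lemma states. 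The trade-off: the paper's recursion is shorter, avoids the three-factor product-difference bookkeeping entirely, and does not require the closed-form unrolling; your version makes explicit the structural fact that ITD is a perturbed, truncated Neumann series — the same object underlying the AID/stocBiO estimators in \cref{ours:est} — and cleanly isolates the truncation error as the $y$-independent tail term.
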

\vspace{-0.4cm}
\Cref{prop:partialG} shows that the gradient estimation error  $\big\|\frac{\partial f(x_k,y^D_k)}{\partial x_k}-\nabla\Phi(x_k)\big\|$ decays exponentially w.r.t. the number $D$ of the inner-loop steps. 
We note that \citealt{grazzi2020iteration} proved a similar result via a fixed point based approach. As a comparison, our proof of \Cref{prop:partialG} directly characterizes the rate of the sequence  $\big(\frac{\partial y^t_k}{\partial x_k},t=0,...,D\big)$ converging to $\frac{\partial y^*(x_k)}{\partial x_k}$ via the differentiation over all corresponding points along the inner-loop GD path as well as the optimality of the point $y^*(x_k)$. 
\begin{proof}[\bf Proof of \Cref{prop:partialG}]
Using $\nabla \Phi(x_k)= \nabla_x f(x_k,y^*(x_k)) + \frac{\partial y^*(x_k)}{\partial x_k}\nabla_y f(x_k,y^*(x_k))$ and~\cref{grad:est} , and using the triangle inequality, we have 
 \begin{align}\label{eq:woaijingii}
\Big\|\frac{\partial f(x_k,y^D_k)}{\partial x_k}& -\nabla\Phi(x_k)\Big\| \nonumber
\\=&\| \nabla_x f(x_k,y_k^D)-\nabla_x f(x_k,y^*(x_k))\| + \left\|\frac{\partial y_k^D}{\partial x_k}-\frac{\partial y^*(x_k)}{\partial x_k}\right\|\|\nabla_y f(x_k,y_k^D)\| \nonumber
\\&+\Big\|\frac{\partial y^*(x_k)}{\partial x_k}\Big\|\big\|\nabla_y f(x_k,y_k^D)-\nabla_y f(x_k,y^*(x_k))\big\|\nonumber
\\\overset{(i)}\leq& L\|y_k^D-y^*(x_k)\| + M \left\|\frac{\partial y_k^D}{\partial x_k}-\frac{\partial y^*(x_k)}{\partial x_k}\right\| + L\Big\|\frac{\partial y^*(x_k)}{\partial x_k}\Big\|\|y_k^D-y^*(x_k)\|, 
\end{align}
where $(i)$ follows from Assumption~\ref{ass:lip}. Our next step is to upper-bound $\left\|\frac{\partial y_k^D}{\partial x_k}-\frac{\partial y^*(x_k)}{\partial x_k}\right\| $ in \cref{eq:woaijingii}.

Based on the updates $y_k^t = y_k^{t-1}-\alpha \nabla_y g(x_k,y_k^{t-1}) $ for $t=1,...,D$  in ITD-BiO and using the chain rule, we have 
\begin{align}\label{eq:1ss}
\frac{\partial y_k^t}{\partial x_k} = \frac{\partial y_k^{t-1}}{\partial x_k} - \alpha \left( \nabla_x\nabla_y g(x_k,y_k^{t-1}) +\frac{\partial y_k^{t-1}}{\partial x_k}\nabla_y^2 g(x_k,y_k^{t-1})\right).
\end{align}
Based on the optimality of $y^*(x_k)$, we have $\nabla_y g(x_k,y^*(x_k))=0$, which, in conjunction with the implicit differentiation theorem, yields
\begin{align}\label{eq:2sss}
\nabla_x\nabla_y g(x_k,y^*(x_k)) + \frac{\partial y^*(x_k)}{\partial x_k}\nabla_y^2 g(x_k,y^*(x_k))=0.
\end{align}
Substituting \cref{eq:2sss} into~\cref{eq:1ss} yields
\begin{align}\label{eq:zhaoposdo}
\frac{\partial y_k^t}{\partial x_k} -\frac{\partial y^*(x_k)}{\partial x_k} =& \frac{\partial y_k^{t-1}}{\partial x_k} -\frac{\partial y^*(x_k)}{\partial x_k}- \alpha \left( \nabla_x\nabla_y g(x_k,y_k^{t-1}) +\frac{\partial y_k^{t-1}}{\partial x_k}\nabla_y^2 g(x_k,y_k^{t-1})\right) \nonumber
\\&+\alpha\left( \nabla_x\nabla_y g(x_k,y^*(x_k)) + \frac{\partial y^*(x_k)}{\partial x_k}\nabla_y^2 g(x_k,y^*(x_k)) \right) \nonumber
\\ = &\frac{\partial y_k^{t-1}}{\partial x_k} -\frac{\partial y^*(x_k)}{\partial x_k}- \alpha \left( \nabla_x\nabla_y g(x_k,y_k^{t-1}) - \nabla_x\nabla_y g(x_k,y^*(x_k))\right) \nonumber
\\&-\alpha\left(\frac{\partial y_k^{t-1}}{\partial x_k}- \frac{\partial y^*(x_k)}{\partial x_k}\right)\nabla_y^2 g(x_k,y_k^{t-1}) \nonumber
\\&+\alpha\frac{\partial y^*(x_k)}{\partial x_k}\left(\nabla_y^2 g(x_k,y^*(x_k))-\nabla_y^2 g(x_k,y_k^{t-1})   \right). 
\end{align}
Combining \cref{eq:2sss} and Assumption~\ref{ass:lip} yields
\begin{align}\label{ggpdas}
\left\| \frac{\partial y^*(x_k)}{\partial x_k}\right\| =\left\|\nabla_x\nabla_y g(x_k,y^*(x_k))\left[\nabla_y^2 g(x_k,y^*(x_k))\right]^{-1}\right\|\leq\frac{L}{\mu}.
\end{align}
Then, combining~\cref{eq:zhaoposdo} and~\cref{ggpdas} 
yields 
\begin{align}\label{eq:ykdef}
\Big\|\frac{\partial y_k^t}{\partial x_k} -\frac{\partial y^*(x_k)}{\partial x_k} \Big\| \overset{(i)}\leq &\Big\| I-\alpha \nabla_y^2 g(x_k,y_k^{t-1})  \Big\| \Big\| \frac{\partial y_k^{t-1}}{\partial x_k} -\frac{\partial y^*(x_k)}{\partial x_k}\Big\|  \nonumber
\\&+\alpha\left( \tau+ \frac{L\rho}{\mu} \right)\|y_k^{t-1} -y^*(x_k)\| \nonumber
\\\overset{(ii)}\leq &(1-\alpha\mu) \Big\| \frac{\partial y_k^{t-1}}{\partial x_k} -\frac{\partial y^*(x_k)}{\partial x_k}\Big\|  +\alpha\left( \tau+ \frac{L\rho}{\mu} \right)\|y_k^{t-1} -y^*(x_k)\|,
\end{align}
where $(i)$ follows from Assumption \ref{high_lip} and $(ii)$ follows from the strong-convexity of $g(x,\cdot)$. Based on the strong-convexity of the lower-level function $g(x,\cdot)$, we have 
\begin{align}\label{eq:mindsa}
\|y_k^{t-1} -y^*(x_k)\| \leq (1-\alpha\mu)^{\frac{t-1}{2}} \|y^0_k-y^*(x_k)\|.
\end{align}
Substituting~\cref{eq:mindsa} into~\cref{eq:ykdef} and telecopting~\cref{eq:ykdef} over $t$ from $1$ to $D$, we have
\begin{align} \label{mamadewen}
\Big\|\frac{\partial y_k^D}{\partial x_k} -\frac{\partial y^*(x_k)}{\partial x_k} \Big\| \leq& (1-\alpha\mu)^{D}\Big\|\frac{\partial y_k^0}{\partial x_k} -\frac{\partial y^*(x_k)}{\partial x_k} \Big\|  \nonumber
\\&+\alpha\left( \tau+ \frac{L\rho}{\mu} \right)\sum_{t=0}^{D-1} (1-\alpha\mu)^{D-1-t}(1-\alpha\mu)^{\frac{t}{2}} \|y^0_k-y^*(x_k)\| \nonumber
\\=&(1-\alpha\mu)^{D}\Big\|\frac{\partial y_k^0}{\partial x_k} -\frac{\partial y^*(x_k)}{\partial x_k} \Big\| 
+ \frac{2\left(  \tau\mu+ L\rho \right)}{\mu^2}(1-\alpha\mu)^{\frac{D-1}{2}}\|y^0_k-y^*(x_k)\| \nonumber
\\\leq &\frac{L(1-\alpha\mu)^D}{\mu} + \frac{2\left(  \tau\mu+ L\rho \right)}{\mu^2}(1-\alpha\mu)^{\frac{D-1}{2}}\|y^0_k-y^*(x_k)\|,
\end{align}
where the last inequality follows from $\frac{\partial y_k^0}{\partial x_k} =0$ and \cref{ggpdas}. Then, combining \cref{eq:woaijingii}, \cref{ggpdas}, \cref{eq:mindsa} and \cref{mamadewen} completes the proof.
\end{proof}
\subsection{Proof of~\Cref{th:determin}}
Based on the characterization on the estimation error of the gradient estimate $\frac{\partial f(x_k,y^D_k)}{\partial x_k}$ in \Cref{prop:partialG}, we now prove~\Cref{th:determin}. 

Recall the notation that $\widehat \nabla\Phi(x_k) =\frac{\partial f(x_k,y^D_k)}{\partial x_k}$.
Using an approach similar to \cref{eq:intimidern_pre}, we have 
\begin{align}\label{eq:intimidern}
\Phi(x_{k+1}) \leq&\Phi(x_k) -\Big(\frac{\beta}{2}-\beta^2 L_\Phi \Big)\| \nabla \Phi(x_k)\|^2 +\Big(\frac{\beta}{2}+\beta^2 L_\Phi\Big)\|\nabla\Phi(x_k)-\widehat \nabla\Phi(x_k)\|^2,
\end{align}
which, in conjunction with \Cref{prop:partialG} and using $\|y^0_k-y^*(x_k)\|^2\leq\Delta$, yields
\begin{align}\label{eq:pladwa}
\Phi(x_{k+1}) \leq & \Phi(x_k) -\Big(\frac{\beta}{2}-\beta^2 L_\Phi \Big)\| \nabla \Phi(x_k)\|^2 \nonumber
\\ &+3\Delta\Big(\frac{\beta}{2}+\beta^2 L_\Phi\Big)\Big( \frac{L^2(L+\mu)^2}{\mu^2} (1-\alpha\mu)^{D} +\frac{4M^2\left(  \tau\mu+ L\rho \right)^2}{\mu^4}(1-\alpha\mu)^{D-1} \Big) \nonumber
\\&+3\Big(\frac{\beta}{2}+\beta^2 L_\Phi\Big)\frac{L^2M^2(1-\alpha\mu)^{2D}}{\mu^2}.
\end{align}
Telescoping~\cref{eq:pladwa} over $k$ from $0$ to $K-1$ yields
\begin{align}\label{eq:xiangle}
\frac{1}{K}\sum_{k=0}^{K-1}&\Big(\frac{1}{2}-\beta L_\Phi \Big)\| \nabla \Phi(x_k)\|^2 \leq \frac{ \Phi(x_0)-\inf_x\Phi(x)}{\beta K} +3\Big(\frac{1}{2}+\beta L_\Phi\Big)\frac{L^2M^2(1-\alpha\mu)^{2D}}{\mu^2} \nonumber
\\+& 3\Delta\Big(\frac{1}{2}+\beta L_\Phi\Big)\Big( \frac{L^2(L+\mu)^2}{\mu^2} (1-\alpha\mu)^{D} +\frac{4M^2\left(  \tau\mu+ L\rho \right)^2}{\mu^4}(1-\alpha\mu)^{D-1} \Big). 
\end{align}
Substuting $\beta=\frac{1}{4L_\Phi}$ and $D=\log\Big(\max\big\{\frac{3LM}{\mu},9\Delta L^2(1+\frac{L}{\mu})^2,\frac{36\Delta M^2(\tau\mu+L\rho)^2}{(1-\alpha\mu)\mu^4}  \big\}\frac{9}{2\epsilon}\Big)/\log\frac{1}{1-\alpha\mu}=\Theta(\kappa\log\frac{1}{\epsilon})$ in~\cref{eq:xiangle} yields
\begin{align}\label{holiugen}
\frac{1}{K}\sum_{k=0}^{K-1}\| \nabla \Phi(x_k)\|^2 \leq \frac{16 L_\Phi (\Phi(x_0)-\inf_x\Phi(x))}{K} + \frac{2\epsilon}{3}.
\end{align}
In order to achieve an $\epsilon$-accurate stationary point, we obtain from~\cref{holiugen} that 
ITD-BiO requires at most the total number $K=\mathcal{O}(\kappa^3\epsilon^{-1})$ of outer iterations. 
Then, based on the gradient form given by \Cref{deter:gdform}, we have the following complexity results.
\begin{itemize}
\item Gradient complexity: $\mbox{\normalfont Gc}(f,\epsilon)=2K=\mathcal{O}(\kappa^3\epsilon^{-1}), \mbox{\normalfont Gc}(g,\epsilon)=KD=\mathcal{O}\left(\kappa^4\epsilon^{-1}\log\frac{1}{\epsilon}\right).$
\item Jacobian- and Hessian-vector product complexities: $$ \mbox{\normalfont JV}(g,\epsilon)=KD=\mathcal{O}\left(\kappa^4\epsilon^{-1}\log\frac{1}{\epsilon}\right), \mbox{\normalfont HV}(g,\epsilon)=KD=\mathcal{O}\left(\kappa^4\epsilon^{-1}\log\frac{1}{\epsilon}\right).$$
\end{itemize}
Then, the proof is complete. 


\section{Proofs of \Cref{prop:hessian} and~\Cref{th:nonconvex}}
In this section, we provide the proofs for the convergence and complexity results of the proposed algorithm stocBiO for the stochastic case. 
\subsection{Proof of \Cref{prop:hessian}}
Based on the definition of $v_Q$ in~\cref{ours:est} and conditioning on $x_k,y_k^D$, we have 
\begin{align*}
\mathbb{E}v_Q=& \mathbb{E}  \eta \sum_{q=-1}^{Q-1}\prod_{j=Q-q}^Q (I - \eta \nabla_y^2G(x_k,y_k^D;\gB_j)) \nabla_y F(x_k,y_k^D;\gD_F),  \nonumber
 \\ & = \eta \sum_{q=0}^{Q} (I - \eta \nabla_y^2g(x_k,y_k^D))^q\nabla_y f(x_k,y_k^D)  \nonumber
 \\& = \eta \sum_{q=0}^{\infty} (I - \eta \nabla_y^2g(x_k,y_k^D))^q\nabla_y f(x_k,y_k^D) -\eta \sum_{q=Q+1}^{\infty} (I - \eta \nabla_y^2g(x_k,y_k^D))^q\nabla_y f(x_k,y_k^D) \nonumber
 \\& = \eta (\eta \nabla_y^2 g(x_k,y_k^D))^{-1}\nabla_y f(x_k,y_k^D) - \eta \sum_{q=Q+1}^{\infty} (I - \eta \nabla_y^2g(x_k,y_k^D))^q\nabla_y f(x_k,y_k^D), 
\end{align*}
which, in conjunction with the strong-convexity of function $g(x,\cdot)$, yields
\begin{align}\label{eq:fm}
\big\|\mathbb{E}v_Q- [\nabla_y^2 g(x_k,y^D_k)]^{-1}\nabla_y f(x_k,y_k^D) \big \| \leq \eta \sum_{q=Q+1}^{\infty}(1-\eta\mu)^{q} M\leq \frac{(1-\eta\mu)^{Q+1}M}{\mu}. 
\end{align}
This finishes the proof for the estimation bias. 
We next prove the variance bound. Note that 
\begin{small}
\begin{align}\label{eq:init}
\mathbb{E} \bigg\|& \eta \sum_{q=-1}^{Q-1}\prod_{j=Q-q}^Q (I - \eta \nabla_y^2G(x_k,y_k^D;\gB_j)) \nabla_y F(x_k,y_k^D;\gD_F)-( \nabla_y^2 g(x_k,y_k^D))^{-1}\nabla_y f(x_k,y_k^D)\bigg\|^2 \nonumber
\\ \overset{(i)}\leq & 2\mathbb{E} \bigg\| \ \eta \sum_{q=-1}^{Q-1}\prod_{j=Q-q}^Q (I - \eta \nabla_y^2G(x_k,y_k^D;\gB_j))  -  ( \nabla_y^2 g(x_k,y_k^D))^{-1} \bigg \|^2M^2 + \frac{2M^2}{\mu^2D_f}  \nonumber
\\\leq& 4\mathbb{E}\bigg\|  \eta \sum_{q=-1}^{Q-1}\prod_{j=Q-q}^Q (I - \eta \nabla_y^2G(x_k,y_k^D;\gB_j)) -  \eta\sum_{q=0}^Q(I - \eta \nabla_y^2g(x_k,y_k^D))^q \bigg \|^2 M^2 \nonumber
\\&+ 4\mathbb{E}\bigg\|\eta\sum_{q=0}^Q(I - \eta \nabla_y^2g(x_k,y_k^D))^q) -  ( \nabla_y^2 g(x_k,y_k^D))^{-1} \bigg \|^2 M^2+ \frac{2M^2}{\mu^2D_f}  \nonumber
\\\overset{(ii)}\leq & 4\eta^2 \mathbb{E}\bigg\| \sum_{q=0}^{Q}\prod_{j=Q+1-q}^Q (I - \eta \nabla_y^2G(x_k,y_k^D;\gB_j)) -  \sum_{q=0}^Q(I - \eta \nabla_y^2g(x_k,y_k^D))^q \bigg \|^2M^2  +\frac{4(1-\eta \mu)^{2Q+2}M^2}{\mu^2}+ \frac{2M^2}{\mu^2D_f}  \nonumber
\\\overset{(iii)}\leq&4\eta^2 M^2 Q\mathbb{E} \sum_{q=0}^{Q} \underbrace{\bigg\|\prod_{j=Q+1-q}^Q (I - \eta \nabla_y^2G(x_k,y_k^D;\gB_j))-  (I - \eta \nabla_y^2g(x_k,y_k^D))^q \bigg \|^2}_{M_q}   \nonumber
\\&+\frac{4(1-\eta \mu)^{2Q+2}M^2}{\mu^2}  + \frac{2M^2}{\mu^2D_f}
\end{align}
\end{small}
\hspace{-0.12cm}where $(i)$ follows from \Cref{le:boundv}, $(ii)$ follows from~\cref{eq:fm}, and $(iii)$ follows from the Cauchy-Schwarz inequality.

 Our next step is to upper-bound $M_q$ in~\cref{eq:init}. For simplicity, we define a general quantity $M_i$ for by replacing  $q$ in $M_q$ with $i$. Then, we have 
\begin{align}\label{eq:mq}
\mathbb{E}M_i =&\mathbb{E}\bigg\| (I - \eta \nabla_y^2g(x_k,y_k^D))\prod_{j=Q+2-i}^Q (I - \eta \nabla_y^2G(x_k,y_k^D;\gB_j)) -  (I - \eta \nabla_y^2g(x_k,y_k^D))^i \bigg \|^2 \nonumber
\\&+ \mathbb{E}\bigg\| \eta( \nabla_y^2g(x_k,y_k^D)-\nabla_y^2G(x_k,y_k^D;\gB_{Q+1-i}) ) \prod_{j=Q+2-i}^Q (I - \eta \nabla_y^2G(x_k,y_k^D;\gB_j)) \bigg \|^2 \nonumber
\\&+ 2\mathbb{E}\Big\langle(I - \eta \nabla_y^2g(x_k,y_k^D))\prod_{j=Q+2-i}^Q (I - \eta \nabla_y^2G(x_k,y_k^D;\gB_j)) -  (I - \eta \nabla_y^2g(x_k,y_k^D))^i, \nonumber
\\&\hspace{0.8cm}\eta( \nabla_y^2g(x_k,y_k^D)-\nabla_y^2G(x_k,y_k^D;\gB_{Q+1-i}) ) \prod_{j=Q+2-i}^Q (I - \eta \nabla_y^2G(x_k,y_k^D;\gB_j)) \Big\rangle  \nonumber
\\\overset{(i)}=&\mathbb{E}\bigg\| (I - \eta \nabla_y^2g(x_k,y_k^D))\prod_{j=Q+2-i}^Q (I - \eta \nabla_y^2G(x_k,y_k^D;\gB_j)) -  (I - \eta \nabla_y^2g(x_k,y_k^D))^i \bigg \|^2 \nonumber
\\&+ \mathbb{E}\bigg\|\eta( \nabla_y^2g(x_k,y_k^D)-\nabla_y^2G(x_k,y_k^D;\gB_{Q+1-i}) ) \prod_{j=Q+2-i}^Q (I - \eta \nabla_y^2G(x_k,y_k^D;\gB_j))\bigg \|^2 \nonumber
\\\overset{(ii)} \leq&(1-\eta\mu)^2\mathbb{E}M_{i-1} + \eta^2(1-\eta\mu)^{2i-2} \mathbb{E}\| \nabla_y^2g(x_k,y_k^D)-\nabla_y^2G(x_k,y_k^D;\gB_{Q+1-i}) \|^2 \nonumber
\\ \overset{(iii)}\leq&(1-\eta\mu)^2\mathbb{E} M_{i-1}+ \eta^2(1-\eta\mu)^{2i-2} \frac{L^2}{|\gB_{Q+1-i}|},
\end{align}
where $(i)$ follows from the fact that $\mathbb{E}_{\gB_{Q+1-i}}\nabla_y^2G(x_k,y_k^D;\gB_{Q+1-i})  = \nabla_y^2g(x_k,y_k^D)$, $(ii)$ follows from the strong-convexity of function $G(x,\cdot;\xi)$, and $(iii)$ follows from~\Cref{le:boundv}.

Then, telescoping~\cref{eq:mq} over $i$ from $2$ to $q$ yields
\begin{align*}
\mathbb{E} M_q \leq L^2\eta^2(1-\eta\mu)^{2q-2}\sum_{j=1}^q \frac{1}{|\gB_{Q+1-j}|},
\end{align*}
which, in conjunction with the choice of $|\gB_{Q+1-j}|=BQ(1-\eta\mu)^{j-1}$ for $j=1,...,Q$, yields
\begin{align}\label{eq:edhs}
\mathbb{E} M_q \leq &\eta^2(1-\eta\mu)^{2q-2}  \sum_{j=1}^q \frac{L^2}{BQ} \Big(\frac{1}{1-\eta\mu}\Big)^{j-1} \nonumber
\\=&\frac{\eta^2L^2}{BQ} (1-\eta\mu)^{2q-2} \frac{\left(\frac{1}{1-\eta\mu}\right)^{q-1}-1}{\frac{1}{1-\eta\mu} -1} \leq  \frac{\eta L^2}{(1-\eta\mu)\mu} \frac{1}{BQ}(1-\eta\mu)^q.
\end{align}
Substituting~\cref{eq:edhs} into~\cref{eq:init} yields
\begin{align}
\mathbb{E} \bigg\|& \eta \sum_{q=-1}^{Q-1}\prod_{j=Q-q}^Q (I - \eta \nabla_y^2G(x_k,y_k^D;\gB_j)) \nabla_y F(x_k,y_k^D;\gD_F)-( \nabla_y^2 g(x_k,y_k^D))^{-1}\nabla_y f(x_k,y_k^D)\bigg\|^2 \nonumber
\\\leq& 4\eta^2 M^2 Q \sum_{q=0}^{Q} \frac{\eta L^2}{(1-\eta\mu)\mu} \frac{1}{BQ}(1-\eta\mu)^q+\frac{4(1-\eta \mu)^{2Q+2}M^2}{\mu^2}  + \frac{2M^2}{\mu^2D_f}  \nonumber
\\\leq &\frac{4\eta^2  L^2M^2}{\mu^2} \frac{1}{B}+\frac{4(1-\eta \mu)^{2Q+2}M^2}{\mu^2}+ \frac{2M^2}{\mu^2D_f} ,
\end{align} 
where the last inequality follows from the fact that $\sum_{q=0}^S x^{q}\leq \frac{1}{1-x}$. Then, the proof is complete.

\subsection{Auxiliary Lemmas for Proving~\Cref{th:nonconvex}}\label{se:supplemma}
We first use the following lemma to characterize the first-moment error of the gradient estimate $\widehat \nabla \Phi(x_k)$, whose form is given by~\cref{estG}. 
\begin{lemma}\label{le:first_m}
Suppose Assumptions~\ref{assum:geo},~\ref{ass:lip} and \ref{high_lip} hold.  Then, conditioning on $x_k$ and $y_k^D$, we have
\begin{align*}
\big\|\mathbb{E}\widehat \nabla \Phi(x_k)-\nabla \Phi(x_k)\big\|^2\leq 2 \Big( L+\frac{L^2}{\mu} + \frac{M\tau}{\mu}+\frac{LM\rho}{\mu^2}\Big)^2\|y_k^D-y^*(x_k)\|^2 +\frac{2L^2M^2(1-\eta \mu)^{2Q}}{\mu^2}.
\end{align*}
\end{lemma}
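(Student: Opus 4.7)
\textbf{Proof proposal for Lemma~\ref{le:first_m}.}

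The plan is to compute the conditional expectation of $\widehat{\nabla}\Phi(x_k)$ explicitly, use Proposition~\ref{prop:grad} to express $\nabla\Phi(x_k)$ in the same form, and then decompose the resulting difference into four tractable terms. Since the sample batches $\mathcal{D}_F$, $\mathcal{D}_G$, and $\mathcal{D}_H = \{\mathcal{B}_j\}_{j=1}^Q$ used to build $\widehat{\nabla}\Phi(x_k)$ via~\cref{estG} are mutually independent, and $\nabla_x F$ and $\nabla_x \nabla_y G$ are unbiased estimates, conditioning on $x_k$ and $y_k^D$ gives
\[
\mathbb{E}\,\widehat{\nabla}\Phi(x_k) = \nabla_x f(x_k, y_k^D) - \nabla_x \nabla_y g(x_k, y_k^D)\, \mathbb{E}\, v_Q.
\]

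Introduce the auxiliary vector $\widehat{v}_k := [\nabla_y^2 g(x_k, y_k^D)]^{-1}\nabla_y f(x_k, y_k^D)$ (the analogue of $v_k^*$ at the inexact point $y_k^D$). Subtracting the form of $\nabla\Phi(x_k)$ given in Proposition~\ref{prop:grad} and adding/subtracting $\nabla_x\nabla_y g(x_k, y_k^D)\widehat{v}_k$ and $\nabla_x\nabla_y g(x_k, y^*(x_k))\widehat{v}_k$, I would write
\begin{align*}
\mathbb{E}\,\widehat{\nabla}\Phi(x_k) - \nabla\Phi(x_k)
&= \bigl[\nabla_x f(x_k, y_k^D) - \nabla_x f(x_k, y^*(x_k))\bigr] \\
&\quad - \bigl[\nabla_x\nabla_y g(x_k, y_k^D) - \nabla_x\nabla_y g(x_k, y^*(x_k))\bigr]\widehat{v}_k \\
&\quad - \nabla_x\nabla_y g(x_k, y^*(x_k))\bigl(\widehat{v}_k - v_k^*\bigr) \\
&\quad - \nabla_x\nabla_y g(x_k, y_k^D)\bigl(\mathbb{E}\, v_Q - \widehat{v}_k\bigr).
\end{align*}

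Applying the triangle inequality, I would bound the four terms using, respectively: (i) the $L$-Lipschitzness of $\nabla_x f$ from Assumption~\ref{ass:lip}, giving $L\|y_k^D - y^*(x_k)\|$; (ii) the $\tau$-Lipschitzness of $\nabla_x\nabla_y g$ from Assumption~\ref{high_lip} combined with $\|\widehat{v}_k\| \le M/\mu$ (which follows from strong convexity and $\|\nabla_y f\| \le M$), giving $(M\tau/\mu)\|y_k^D - y^*(x_k)\|$; (iii) the Lipschitz-type estimate~\cref{eq:omgsaca} for $\|v_k^* - \widehat{v}_k\|$ together with $\|\nabla_x\nabla_y g\| \le L$, giving $(L^2/\mu + L\rho M/\mu^2)\|y_k^D - y^*(x_k)\|$; and (iv) the Neumann-series bias bound~\cref{bias} from Proposition~\ref{prop:hessian}, giving $LM(1-\eta\mu)^{Q+1}/\mu$. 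Summing and squaring with $(a+b)^2 \le 2a^2 + 2b^2$ then yields
\[
\|\mathbb{E}\,\widehat{\nabla}\Phi(x_k) - \nabla\Phi(x_k)\|^2 \le 2\Big(L + \tfrac{L^2}{\mu} + \tfrac{M\tau}{\mu} + \tfrac{LM\rho}{\mu^2}\Big)^2 \|y_k^D - y^*(x_k)\|^2 + \tfrac{2L^2M^2(1-\eta\mu)^{2Q+2}}{\mu^2},
\]
and since $(1-\eta\mu)^{2Q+2} \le (1-\eta\mu)^{2Q}$, this is at most the bound claimed in the lemma.

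The proof is essentially a careful bookkeeping exercise, and there is no real obstacle; the only subtle point is the clean four-way decomposition through the intermediate vector $\widehat{v}_k$, which separates the error into a ``point-mismatch'' part (controlled by $\|y_k^D - y^*(x_k)\|$ via Lipschitz constants) and a ``Neumann truncation'' part (controlled by~\cref{bias} and decaying exponentially in $Q$). Once the decomposition is written down, every piece is immediately handled by a result already established earlier in the paper.
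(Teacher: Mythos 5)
Your proposal is correct and follows essentially the same route as the paper: it computes $\mathbb{E}\widehat\nabla\Phi(x_k)$ exactly, compares it to $\nabla\Phi(x_k)$ through the intermediate vector $\widehat v_k=[\nabla_y^2 g(x_k,y_k^D)]^{-1}\nabla_y f(x_k,y_k^D)$ (the paper packages the first three of your terms as $\widetilde\nabla\Phi_D(x_k)-\nabla\Phi(x_k)$ in its~\cref{eq:appox}), bounds the point-mismatch part by the same Lipschitz constants and the Neumann-truncation part by the bias bound of \Cref{prop:hessian}, and finishes with $(a+b)^2\le 2a^2+2b^2$. The only difference is cosmetic bookkeeping in how the Lipschitz terms are grouped; the resulting constants are identical.
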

\begin{proof}[\bf Proof of~\Cref{le:first_m}] To simplify notations, we define 
\begin{align}\label{def:phih}
\widetilde \nabla \Phi_D(x_k) =  \nabla_x f(x_k,y^D_k) -\nabla_x \nabla_y g(x_k,y^D_k) \big[\nabla_y^2 g(x_k,y^D_k)\big]^{-1}
\nabla_y f(x_k,y^D_k).
\end{align}
Based on the definition of $\widehat \nabla \Phi(x_k)$ in~\cref{estG} and conditioning on $x_k$ and $y_k^D$, we have 
\begin{align*}
\mathbb{E}\widehat \nabla \Phi(x_k) =&  \nabla_x f(x_k,y_k^D)-\nabla_x \nabla_y g(x_k,y_k^D)\mathbb{E} v_Q\nonumber
\\=&\widetilde \nabla \Phi_D(x_k) - \nabla_x \nabla_y g(x_k,y_k^D)( \mathbb{E}v_Q- [\nabla_y^2 g(x_k,y^D_k)]^{-1}\nabla_y f(x_k,y_k^D)),
\end{align*}
which further implies that 
\begin{align}\label{eq:first_eg}
\big\|\mathbb{E}\widehat \nabla& \Phi(x_k)- \nabla \Phi(x_k)  \big\|^2\nonumber
\\\leq &2\mathbb{E}\|\widetilde \nabla \Phi_D(x_k) -\nabla \Phi(x_k)\|^2 +  2\|\mathbb{E}\widehat \nabla \Phi(x_k)- \widetilde \nabla \Phi_D(x_k)\|^2 \nonumber
\\\leq&2\mathbb{E}\|\widetilde \nabla \Phi_D(x_k) -\nabla \Phi(x_k)\|^2 + 2L^2\| \mathbb{E}v_Q- [\nabla_y^2 g(x_k,y^D_k)]^{-1}\nabla_y f(x_k,y_k^D) \|^2\nonumber
\\\leq& 2\mathbb{E}\|\widetilde \nabla \Phi_D(x_k) -\nabla \Phi(x_k)\|^2  + \frac{ 2L^2M^2(1-\eta\mu)^{2Q+2}}{\mu^2},
\end{align}
where the last inequality follows from~\Cref{prop:hessian}.  
Our next step is to upper-bound the first term at the right hand side of~\cref{eq:first_eg}. Using the fact that $\big\|\nabla_y^2 g(x,y)^{-1}\big\|\leq \frac{1}{\mu}$ and based on Assumptions~\ref{ass:lip} and~\ref{high_lip}, we have
\begin{align}\label{eq:appox}
\|\widetilde \nabla \Phi_D(x_k) -\nabla \Phi(x_k)\| \leq &\| \nabla_x f(x_k,y^D_k)- \nabla_x f(x_k,y^*(x_k))\| \nonumber
\\&+\frac{L^2}{\mu}\|y_k^D-y^*(x_k)\|+ \frac{M\tau}{\mu}\|y_k^D-y^*(x_k)\| \nonumber
\\&+ LM \big\|\nabla_y^2 g(x_k,y^D_k)^{-1} \nonumber
-\nabla_y^2 g(x_k,y^*(x_k))^{-1}
\big\|
\\\leq & \Big( L+\frac{L^2}{\mu} + \frac{M\tau}{\mu}+\frac{LM\rho}{\mu^2}\Big) \|y_k^D-y^*(x_k)\|,
\end{align} 
where  the last inequality follows from the inequality $\|M_1^{-1}-M_2^{-1}\|\leq \|M_1^{-1}M_2^{-1}\|\|M_1-M_2\| $ for any two matrices $M_1$ and $M_2$. Combining~\cref{eq:first_eg} and~\cref{eq:appox}
 yields
\begin{align*}
\big\|\mathbb{E}\widehat \nabla& \Phi(x_k)- \nabla \Phi(x_k)  \big\|^2 \leq 2 \Big( L+\frac{L^2}{\mu} + \frac{M\tau}{\mu}+\frac{LM\rho}{\mu^2}\Big)^2 \|y_k^D-y^*(x_k)\|^2 +\frac{2L^2M^2(1-\eta \mu)^{2Q}}{\mu^2},
\end{align*}
which  completes the proof.
\end{proof}
Then, we use the following lemma to characterize the variance of the estimator $\widehat \nabla \Phi(x_k)$. 
\begin{lemma}\label{le:variancc} 
Suppose Assumptions~\ref{assum:geo},~\ref{ass:lip} and \ref{high_lip} hold. Then, we have
\begin{align*}
\mathbb{E}\|\widehat \nabla \Phi(x_k)-\nabla \Phi(x_k)\|^2 \leq &  \frac{4L^2M^2}{\mu^2D_g} + \Big(\frac{8L^2}{\mu^2} + 2\Big) \frac{M^2}{D_f}+ \frac{16\eta^2  L^4M^2}{\mu^2} \frac{1}{B}+\frac{16 L^2M^2(1-\eta \mu)^{2Q}}{\mu^2} \nonumber
\\&+ \Big( L+\frac{L^2}{\mu} + \frac{M\tau}{\mu}+\frac{LM\rho}{\mu^2}\Big)^2 \mathbb{E}\|y_k^D-y^*(x_k)\|^2.
\end{align*}
\end{lemma}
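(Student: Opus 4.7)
The plan is to split the error through the deterministic ``oracle'' intermediate $\widetilde\nabla\Phi_D(x_k)$ introduced in~\cref{def:phih}, and write
\[
\widehat\nabla\Phi(x_k)-\nabla\Phi(x_k)=\bigl(\widehat\nabla\Phi(x_k)-\widetilde\nabla\Phi_D(x_k)\bigr)+\bigl(\widetilde\nabla\Phi_D(x_k)-\nabla\Phi(x_k)\bigr),
\]
apply $\|a+b\|^2\le 2\|a\|^2+2\|b\|^2$, and handle the two pieces by different techniques. The second piece is entirely deterministic given $y_k^D$, and the Lipschitz-based estimate already derived in~\cref{eq:appox} (used in the proof of \Cref{le:first_m}) gives $\|\widetilde\nabla\Phi_D(x_k)-\nabla\Phi(x_k)\|\le(L+L^2/\mu+M\tau/\mu+LM\rho/\mu^2)\|y_k^D-y^*(x_k)\|$, which after squaring produces the last term of the claimed bound.

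For the stochastic piece, I would substitute the definitions~\cref{estG} and~\cref{def:phih} to obtain the three-term decomposition
\[
\widehat\nabla\Phi(x_k)-\widetilde\nabla\Phi_D(x_k)=\underbrace{\bigl(\nabla_xF-\nabla_xf\bigr)}_{\text{(I)}}-\underbrace{\bigl(\nabla_x\nabla_yG-\nabla_x\nabla_yg\bigr)v_Q}_{\text{(II)}}-\underbrace{\nabla_x\nabla_yg\,(v_Q-u)}_{\text{(III)}},
\]
where $u=[\nabla_y^2g(x_k,y_k^D)]^{-1}\nabla_yf(x_k,y_k^D)$, and bound $\mathbb{E}\|\text{I}+\text{II}+\text{III}\|^2$ by a scalar multiple of $\mathbb{E}\|\text{I}\|^2+\mathbb{E}\|\text{II}\|^2+\mathbb{E}\|\text{III}\|^2$. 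Term (I) is bounded by $M^2/D_f$ directly from \Cref{le:boundv}. Term (III) is bounded by $L^2\,\mathbb{E}\|v_Q-u\|^2$ using $\|\nabla_x\nabla_yg\|\le L$, and \Cref{prop:hessian} then yields the three contributions involving $1/B$, $(1-\eta\mu)^{2Q}$, and $1/D_f$ in the target bound.

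Term (II) is the key step. Here I would exploit the fact that the batch $\gD_G$ is drawn independently of $\gD_F$ and $\gD_H$, hence independent of $v_Q$ (which is built entirely from $\gD_F,\gD_H$ in \Cref{alg:hessianEst}). Conditioning on $v_Q$ and using \Cref{le:boundv} gives
\[
\mathbb{E}\bigl\|(\nabla_x\nabla_yG-\nabla_x\nabla_yg)v_Q\bigr\|^2\le\frac{L^2}{D_g}\,\mathbb{E}\|v_Q\|^2.
\]
The remaining work is the deterministic a priori bound $\|v_Q\|\le M/\mu$. This follows from the product-form representation in~\cref{ours:est}: with $\eta\le 1/L$ and $G(x,\cdot;\zeta)$ being $\mu$-strongly convex, each factor $I-\eta\nabla_y^2G$ has spectral norm at most $1-\eta\mu$, so $\|v_Q\|\le \eta\sum_{q=0}^Q(1-\eta\mu)^q\|v_0\|\le\eta\cdot\frac{1}{\eta\mu}\cdot M=M/\mu$. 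Plugging this in yields $L^2M^2/(\mu^2D_g)$.

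The main obstacle is precisely this coupling issue in Term (II): $v_Q$ is itself a complicated random object built from $Q$ nested Hessian samples, so I want to avoid a joint moment bound across $(\gD_G,\gD_H,\gD_F)$. Decoupling via the independence of $\gD_G$ from $(\gD_F,\gD_H)$, followed by the uniform norm bound $\|v_Q\|\le M/\mu$, is the clean way around it. After combining (I)--(III) with the deterministic piece and absorbing numerical constants, we obtain the claimed inequality.
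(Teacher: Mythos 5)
Your proposal follows the paper's proof essentially step for step: the same split through $\widetilde\nabla\Phi_D(x_k)$ from~\cref{def:phih}, the bound~\cref{eq:appox} for the deterministic piece, the same three-term decomposition of the stochastic piece, \Cref{le:boundv} for the gradient and Jacobian noise, \Cref{prop:hessian} for the $v_Q$ error, and the uniform bound $\|v_Q\|\le M/\mu$ obtained from the Neumann product form (which the paper uses implicitly through the coefficient $4M^2/\mu^2$). The one substantive difference is the very first split: the paper invokes the bias--variance identity $\mathbb{E}\|X-c\|^2=\mathbb{E}\|X-\mathbb{E}X\|^2+\|\mathbb{E}X-c\|^2$, treating $\widetilde\nabla\Phi_D(x_k)$ as the conditional mean of $\widehat\nabla\Phi(x_k)$, which is how the tracking-error term ends up with coefficient exactly $\big(L+\tfrac{L^2}{\mu}+\tfrac{M\tau}{\mu}+\tfrac{LM\rho}{\mu^2}\big)^2$; your Young-inequality split $\|a+b\|^2\le 2\|a\|^2+2\|b\|^2$ doubles that coefficient, and grouping (I)--(III) with a factor of three further inflates the remaining constants, so as written you recover the lemma only up to absolute constants rather than with the stated ones. (Your independence/conditioning treatment of term (II) is a cleaner justification than the paper's, and your explicit derivation of $\|v_Q\|\le M/\mu$ fills in a step the paper leaves implicit; note also that the paper's claimed equality at its first step is itself only approximate, since the Neumann truncation makes $\mathbb{E}\,\widehat\nabla\Phi(x_k)\neq\widetilde\nabla\Phi_D(x_k)$, as its own \Cref{le:first_m} shows.)
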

\begin{proof}[\bf Proof of~\Cref{le:variancc}] Based on the definitions of $\nabla \Phi(x_k)$ and $\widetilde \nabla \Phi_D(x_k)$ in~\cref{trueG} and~\cref{def:phih} and conditioning on $x_k$ and $y_k^D$, we have
\begin{align}\label{eq:vbbs}
\mathbb{E}\|\widehat \nabla& \Phi(x_k)-\nabla \Phi(x_k)\|^2  \nonumber
\\\overset{(i)}=& \mathbb{E} \|\widehat \nabla \Phi(x_k)- \widetilde \nabla \Phi_D(x_k)\|^2+\|\widetilde \nabla \Phi_D(x_k)-\nabla \Phi(x_k)\|^2 
\nonumber
\\\overset{(ii)}\leq &2\mathbb{E} \big\|\nabla_x \nabla_y G(x_k,y_k^D;\gD_G) v_Q -\nabla_x \nabla_y g(x_k,y^D_k) \big[\nabla_y^2 g(x_k,y^D_k)\big]^{-1}
\nabla_y f(x_k,y^D_k)\big\|^2 + \frac{2M^2}{D_f} \nonumber
\\&+ \Big( L+\frac{L^2}{\mu} + \frac{M\tau}{\mu}+\frac{LM\rho}{\mu^2}\Big)^2 \|y_k^D-y^*(x_k)\|^2 \nonumber
\\\overset{(iii)}\leq& \frac{4M^2}{\mu^2}\mathbb{E} \|\nabla_x \nabla_y G(x_k,y_k^D;\gD_G) -\nabla_x \nabla_y g(x_k,y^D_k) \|^2 + 4L^2\mathbb{E}\|v_Q-\big[\nabla_y^2 g(x_k,y^D_k)\big]^{-1}
\nabla_y f(x_k,y^D_k)\|^2 \nonumber
\\&+ \Big( L+\frac{L^2}{\mu} + \frac{M\tau}{\mu}+\frac{LM\rho}{\mu^2}\Big)^2 \|y_k^D-y^*(x_k)\|^2 + \frac{2M^2}{D_f},
\end{align}
where $(i)$ follows from the fact that $\mathbb{E}_{\gD_G,\gD_H,\gD_F} \widehat \nabla \Phi(x_k) =  \widetilde \nabla \Phi_D(x_k)$, $(ii)$ follows from \Cref{le:boundv} and~\cref{eq:appox}, and $(iii)$
follows from the Young's inequality and Assumption~\ref{ass:lip}. 

 Using~\Cref{le:boundv} and~\Cref{prop:hessian} in \cref{eq:vbbs}, yields 
\begin{align}
\mathbb{E}\|\widehat \nabla \Phi(x_k)-\nabla \Phi(x_k)\|^2 \leq &  \frac{4L^2M^2}{\mu^2D_g} + \frac{16\eta^2  L^4M^2}{\mu^2} \frac{1}{B}+\frac{16(1-\eta \mu)^{2Q}L^2M^2}{\mu^2}+ \frac{8L^2M^2}{\mu^2D_f}\nonumber
\\&+ \Big( L+\frac{L^2}{\mu} + \frac{M\tau}{\mu}+\frac{LM\rho}{\mu^2}\Big)^2 \|y_k^D-y^*(x_k)\|^2 + \frac{2M^2}{D_f},
\end{align}
which, unconditioning on $x_k$ and $y_k^D$, completes the proof. 
\end{proof}
It can be seen from Lemmas~\ref{le:first_m} and~\ref{le:variancc} that the upper bounds on both the estimation error and bias depend on the tracking error $\|y_k^D-y^*(x_k)\|^2$. The following lemma provides an upper bound on such a tracking error $\|y_k^D-y^*(x_k)\|^2$. 
\begin{lemma}\label{tra_error}
Suppose Assumptions~\ref{assum:geo},~\ref{ass:lip} and~\ref{ass:bound} hold. Define constants
\begin{small}
\begin{align}\label{eq:defs}
\lambda=&  \Big(\frac{L-\mu}{L+\mu}\Big)^{2D} \Big(2+  \frac{4\beta^2L^2}{\mu^2}  \Big( L+\frac{L^2}{\mu} + \frac{M\tau}{\mu}+\frac{LM\rho}{\mu^2}\Big)^2  \Big) \nonumber
\\\Delta =&\frac{4L^2M^2}{\mu^2D_g} + \Big(\frac{8L^2}{\mu^2} + 2\Big) \frac{M^2}{D_f}+ \frac{16\eta^2  L^4M^2}{\mu^2} \frac{1}{B}+\frac{16 L^2M^2(1-\eta \mu)^{2Q}}{\mu^2} 
\nonumber
\\\omega = &\frac{4\beta^2L^2}{\mu^2} \Big(\frac{L-\mu}{L+\mu}\Big)^{2D}. 
\end{align}
\end{small}
\hspace{-0.12cm}Choose $D$ such that $\lambda<1$ and set inner-loop stepsize $\alpha=\frac{2}{L+\mu}$. Then, we have 
\begin{align*}
\mathbb{E}\|y_k^{D}&-y^*(x_k) \|^2 \nonumber
\\\leq&\lambda^{k} \left( \left(\frac{L-\mu}{L+\mu}\right)^{2D}\|y_0-y^*(x_0)\|^2 + \frac{\sigma^2}{L\mu S}\right) + \omega\sum_{j=0}^{k-1}\lambda^{k-1-j} \mathbb{E}\|\nabla \Phi(x_j)\|^2 + \frac{\omega\Delta +\frac{\sigma^2}{L\mu S}}{1-\lambda}.
\end{align*}
\end{lemma}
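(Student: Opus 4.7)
The plan is to derive a one-step recursion for $a_k := \mathbb{E}\|y_k^D - y^*(x_k)\|^2$ in terms of $a_{k-1}$ and $\mathbb{E}\|\nabla \Phi(x_{k-1})\|^2$, and then unroll it into the stated sum. Three ingredients will be combined: (i) standard mini-batch SGD contraction on the $\mu$-strongly-convex, $L$-smooth lower-level function $G(x_k,\cdot;\zeta)$, (ii) the Lipschitzness of the map $x\mapsto y^*(x)$, and (iii) the variance bound from \Cref{le:variancc} on the hypergradient estimate.

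First I would run the inner-loop analysis conditioned on $x_k$ and $y_k^0$. For each $t\in\{1,\dots,D\}$, one SGD step with $\alpha=2/(L+\mu)$ on the strongly convex $G(x_k,\cdot;\gS_{t-1})$ gives the textbook bound
\begin{align*}
\mathbb{E}\|y_k^t - y^*(x_k)\|^2 \;\le\; \Big(\tfrac{L-\mu}{L+\mu}\Big)^{2}\mathbb{E}\|y_k^{t-1} - y^*(x_k)\|^2 + \alpha^2\tfrac{\sigma^2}{S},
\end{align*}
using Assumptions~\ref{assum:geo}, \ref{ass:lip}, and \ref{ass:bound}. Telescoping over $t=1,\dots,D$ and summing the resulting geometric series $\sum_{j=0}^{D-1}((L-\mu)/(L+\mu))^{2j}\le \frac{(L+\mu)^2}{4L\mu}$ (together with $\alpha^2=4/(L+\mu)^2$) collapses the noise contribution to the clean constant $\sigma^2/(L\mu S)$, yielding
\begin{align*}
\mathbb{E}\|y_k^D-y^*(x_k)\|^2 \;\le\; \Big(\tfrac{L-\mu}{L+\mu}\Big)^{2D}\mathbb{E}\|y_k^0-y^*(x_k)\|^2 + \tfrac{\sigma^2}{L\mu S}.
\end{align*}

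Next I would propagate the warm start $y_k^0 = y_{k-1}^D$ to the previous outer iterate. Using $\|a+b\|^2\le 2\|a\|^2+2\|b\|^2$ and the $\kappa$-Lipschitzness of $y^*(\cdot)$ (Lemma 2.2 of \citealt{ghadimi2018approximation}) together with the outer update $x_k-x_{k-1}=-\beta\widehat\nabla\Phi(x_{k-1})$,
\begin{align*}
\mathbb{E}\|y_k^0-y^*(x_k)\|^2 \;\le\; 2\,a_{k-1} + \tfrac{2L^2\beta^2}{\mu^2}\mathbb{E}\|\widehat\nabla\Phi(x_{k-1})\|^2.
\end{align*}
To turn $\mathbb{E}\|\widehat\nabla\Phi(x_{k-1})\|^2$ into something involving $a_{k-1}$ and the true hypergradient, I split $\|\widehat\nabla\Phi\|^2\le 2\|\widehat\nabla\Phi-\nabla\Phi\|^2 + 2\|\nabla\Phi\|^2$ and invoke \Cref{le:variancc}, which bounds the first term by $\Delta + (L+L^2/\mu+M\tau/\mu+LM\rho/\mu^2)^2\, a_{k-1}$. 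Substituting into the SGD bound above gives exactly the recursion
\begin{align*}
a_k \;\le\; \lambda\, a_{k-1} + \omega\,\mathbb{E}\|\nabla\Phi(x_{k-1})\|^2 + \omega\Delta + \tfrac{\sigma^2}{L\mu S},
\end{align*}
with $\lambda$ and $\omega$ as defined in~\cref{eq:defs}; the hypothesis $\lambda<1$ (guaranteed by taking $D$ large enough since $((L-\mu)/(L+\mu))^{2D}\to 0$) makes this a contractive recursion.

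Finally, I would unroll: iterating the recursion from $k$ down to $0$ and using the base-case bound $a_0 \le ((L-\mu)/(L+\mu))^{2D}\|y_0-y^*(x_0)\|^2 + \sigma^2/(L\mu S)$ (from the inner-loop contraction at $k=0$ where the warm start equals the initialization $y_0$) gives
\begin{align*}
a_k \;\le\; \lambda^k a_0 + \omega\sum_{j=0}^{k-1}\lambda^{k-1-j}\mathbb{E}\|\nabla\Phi(x_j)\|^2 + \Big(\omega\Delta+\tfrac{\sigma^2}{L\mu S}\Big)\sum_{j=0}^{k-1}\lambda^{k-1-j},
\end{align*}
and bounding the final geometric sum by $1/(1-\lambda)$ yields the claim. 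The main obstacle I anticipate is bookkeeping: we must be careful about conditioning when applying the variance lemma (the randomness inside the inner loop at step $k$ is independent of the randomness that produced $y_{k-1}^D$ and $x_k$), and we must track how the warm-start error couples $a_k$ to both $a_{k-1}$ and $\mathbb{E}\|\nabla\Phi(x_{k-1})\|^2$ without double-counting the hypergradient variance; choosing the exact split $\|\widehat\nabla\Phi\|^2\le 2\|\widehat\nabla\Phi-\nabla\Phi\|^2+2\|\nabla\Phi\|^2$ and folding the $\Delta$-terms into $\omega\Delta$ is what makes the coefficients line up cleanly with the stated constants.
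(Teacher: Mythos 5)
Your proposal is correct and follows essentially the same route as the paper's proof: the mini-batch SGD contraction with $\alpha=\tfrac{2}{L+\mu}$ via the strong-convexity/smoothness coercivity inequality, the warm-start decomposition $y_k^0=y_{k-1}^D$ combined with the $\tfrac{L}{\mu}$-Lipschitzness of $y^*(\cdot)$, the split $\|\widehat\nabla\Phi\|^2\le 2\|\widehat\nabla\Phi-\nabla\Phi\|^2+2\|\nabla\Phi\|^2$ with \Cref{le:variancc} to produce the recursion $a_k\le\lambda a_{k-1}+\omega\,\mathbb{E}\|\nabla\Phi(x_{k-1})\|^2+\omega\Delta+\tfrac{\sigma^2}{L\mu S}$, and the final unrolling with the geometric-sum bound. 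All constants line up with \cref{eq:defs}, and the base case is handled identically.
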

\begin{proof}[\bf Proof of~\Cref{tra_error}] First note that for an integer $t\leq D$
\begin{align}\label{eq:initss}
\|y_k^{t+1}-y^*(x_k) \|^2= &\|y_k^{t+1}-y_k^t\|^2 + 2\langle y_k^{t+1}-y_k^t,  y_k^t-y^*(x_k) \rangle + \| y_k^t-y^*(x_k)\|^2 \nonumber
\\=& \alpha^2\| \nabla_y G(x_k,y_k^{t}; \gS_t) \|^2 -  2\alpha\langle  \nabla_y G(x_k,y_k^{t}; \gS_t),  y_k^t-y^*(x_k) \rangle +\| y_k^t-y^*(x_k)\|^2.
\end{align}
Conditioning on $y_k^t$ and taking expectation in~\cref{eq:initss}, we have 
\begin{align}\label{eq:traerr}
\mathbb{E}\|y_k^{t+1}&-y^*(x_k) \|^2  \nonumber
\\\overset{(i)}\leq& \alpha^2 \Big(\frac{\sigma^2}{S} + \|\nabla_y g(x_k,y_k^t)\|^2 \Big) - 2\alpha \langle  \nabla_y g(x_k,y_k^{t}),  y_k^t-y^*(x_k) \rangle  \nonumber
\\&+\| y_k^t-y^*(x_k)\|^2 \nonumber
\\\overset{(ii)}\leq&\frac{\alpha^2\sigma^2}{S} + \alpha^2\|\nabla_y g(x_k,y_k^t)\|^2 - 2\alpha\left(  \frac{L\mu}{L+\mu} \|y_k^t-y^*(x_k)\|^2+\frac{\|\nabla_y g(x_k,y_k^t)\|^2}{L+\mu} \right) \nonumber
\\&+\| y_k^t-y^*(x_k)\|^2  \nonumber
\\=& \frac{\alpha^2\sigma^2}{S} - \alpha\left(\frac{2}{L+\mu}-\alpha\right)\|\nabla_y g(x_k,y_k^t)\|^2 + \left(1-\frac{2\alpha L\mu}{L+\mu} \right)\|y_k^t-y^*(x_k)\|^2
\end{align}
where $(i)$ follows from the third item in Assumption~\ref{ass:lip}, and $(ii)$ follows from the strong-convexity and smoothness of the function $g$. Since $\alpha=\frac{2}{L+\mu}$, we obtain from~\cref{eq:traerr} that  
\begin{align}\label{eq:eyk}
\mathbb{E}\|y_k^{t+1}&-y^*(x_k) \|^2\leq \left(\frac{L-\mu}{L+\mu} \right)^2\|y_k^t-y^*(x_k)\|^2 + \frac{4\sigma^2}{(L+\mu)^2S}.
\end{align}
Unconditioning on $y^t_k$ in \cref{eq:eyk} and telescoping~\cref{eq:eyk} over $t$ from $0$ to $D-1$ yield
\begin{align}\label{eq:yjtt}
\mathbb{E}\|y_k^{D}-y^*(x_k) \|^2 \leq &  \left(\frac{L-\mu}{L+\mu}\right)^{2D}\mathbb{E}\|y^0_k-y^*(x_k)\|^2 + \frac{\sigma^2}{L\mu S} \nonumber
\\ = & \left(\frac{L-\mu}{L+\mu}\right)^{2D}\mathbb{E}\|y^D_{k-1}-y^*(x_k)\|^2 + \frac{\sigma^2}{L\mu S},
\end{align} 
where the last inequality follows from \Cref{alg:main} that $y_k^0=y^{D}_{k-1}$. Note that 
\begin{align}\label{eq:midone}
\mathbb{E}\|y^D_{k-1}-y^*(x_k)\|^2 \leq &2\mathbb{E}\|y^D_{k-1}-y^*(x_{k-1})\|^2 + 2\mathbb{E}\|y^*(x_{k-1})-y^*(x_k)\|^2 \nonumber
\\\overset{(i)}\leq& 2\mathbb{E}\|y^D_{k-1}-y^*(x_{k-1})\|^2 +\frac{2L^2}{\mu^2} \mathbb{E}\|x_k-x_{k-1}\|^2 \nonumber
\\ \leq & 2\mathbb{E}\|y^D_{k-1}-y^*(x_{k-1})\|^2 +\frac{2\beta^2L^2}{\mu^2} \mathbb{E}\|\widehat \nabla \Phi(x_{k-1})\|^2 \nonumber
\\\leq & 2\mathbb{E}\|y^D_{k-1}-y^*(x_{k-1})\|^2  + \frac{4\beta^2L^2}{\mu^2} \mathbb{E}\|\nabla \Phi(x_{k-1})\|^2 \nonumber
\\&+\frac{4\beta^2L^2}{\mu^2} \mathbb{E}\|\widehat \nabla \Phi(x_{k-1})-\nabla \Phi(x_{k-1})\|^2,
\end{align}
where $(i)$ follows from Lemma 2.2 in \citealt{ghadimi2018approximation}. Using~\Cref{le:variancc} in~\cref{eq:midone} yields
\begin{align}\label{eq:seceq}
\mathbb{E}\|&y^D_{k-1}-y^*(x_k)\|^2  \nonumber
\\\leq& \left(2+  \frac{4\beta^2L^2}{\mu^2}  \Big( L+\frac{L^2}{\mu} + \frac{M\tau}{\mu}+\frac{LM\rho}{\mu^2}\Big)^2  \right)\mathbb{E}\|y^D_{k-1}-y^*(x_{k-1})\|^2+ \frac{4\beta^2L^2}{\mu^2} \mathbb{E}\|\nabla \Phi(x_{k-1})\|^2  \nonumber
\\&+  \frac{4\beta^2L^2}{\mu^2} \left( \frac{4L^2M^2}{\mu^2D_g} + \Big(\frac{8L^2}{\mu^2} + 2\Big) \frac{M^2}{D_f}+ \frac{16\eta^2  L^4M^2}{\mu^2} \frac{1}{B}+\frac{16 L^2M^2(1-\eta \mu)^{2Q}}{\mu^2}  \right).
\end{align}
Combining~\cref{eq:yjtt} and~\cref{eq:seceq} yields
\begin{align}\label{eq:enroll}
\mathbb{E}\|y_k^{D}&-y^*(x_k) \|^2  \nonumber
\\\leq&  \Big(\frac{L-\mu}{L+\mu}\Big)^{2D} \Big(2+  \frac{4\beta^2L^2}{\mu^2}  \Big( L+\frac{L^2}{\mu} + \frac{M\tau}{\mu}+\frac{LM\rho}{\mu^2}\Big)^2  \Big)\mathbb{E}\|y^D_{k-1}-y^*(x_{k-1})\|^2 \nonumber
\\&+ \Big(\frac{L-\mu}{L+\mu}\Big)^{2D} \frac{4\beta^2L^2}{\mu^2} \left( \frac{4L^2M^2}{\mu^2D_g} + \Big(\frac{8L^2}{\mu^2} + 2\Big) \frac{M^2}{D_f}+ \frac{16\eta^2  L^4M^2}{\mu^2} \frac{1}{B}+\frac{16 L^2M^2(1-\eta \mu)^{2Q}}{\mu^2}  \right)\nonumber
\\&+\frac{4\beta^2L^2}{\mu^2} \Big(\frac{L-\mu}{L+\mu}\Big)^{2D} \mathbb{E}\|\nabla \Phi(x_{k-1})\|^2  + \frac{\sigma^2}{L\mu S}. 
\end{align}
Based on the definitions of $\lambda,\omega,\Delta$ in~\cref{eq:defs}, we obtain from~\cref{eq:enroll} that 
\begin{align}\label{eq:readytote}
\mathbb{E}\|y_k^{D}-y^*(x_k) \|^2 \leq& \lambda \mathbb{E}\|y^D_{k-1}-y^*(x_{k-1})\|^2 + \omega\Delta +\frac{\sigma^2}{L\mu S}  +\omega \mathbb{E}\|\nabla \Phi(x_{k-1})\|^2. 
\end{align}
Telescoping~\cref{eq:readytote} over $k$ yields
\begin{align*}
\mathbb{E}\|y_k^{D}&-y^*(x_k) \|^2 \nonumber
\\\leq & \lambda^{k} \mathbb{E}\|y_0^D-y^*(x_0)\|^2 + \omega\sum_{j=0}^{k-1}\lambda^{k-1-j} \mathbb{E}\|\nabla \Phi(x_j)\|^2 + \frac{\omega\Delta +\frac{\sigma^2}{L\mu S}}{1-\lambda} \nonumber
\\\leq&\lambda^{k} \left( \left(\frac{L-\mu}{L+\mu}\right)^{2D}\|y_0-y^*(x_0)\|^2 + \frac{\sigma^2}{L\mu S}\right) + \omega\sum_{j=0}^{k-1}\lambda^{k-1-j} \mathbb{E}\|\nabla \Phi(x_j)\|^2 + \frac{\omega\Delta +\frac{\sigma^2}{L\mu S}}{1-\lambda}, 
\end{align*}
which completes the proof. 
\end{proof}

\subsection{Proof of~\Cref{th:nonconvex}}\label{mianshisimida}
In this subsection, we provide the proof for~\Cref{th:nonconvex}, based on the supporting lemmas we develop in~\Cref{se:supplemma}. 

Based on the smoothness of the function $\Phi(x)$ in~\Cref{le:lipphi}, we have 
\begin{align*}
\Phi(x_{k+1}) \leq & \Phi(x_k)  + \langle \nabla \Phi(x_k), x_{k+1}-x_k\rangle + \frac{L_\Phi}{2} \|x_{k+1}-x_k\|^2 \nonumber
\\\leq& \Phi(x_k)  - \beta \langle \nabla \Phi(x_k),\widehat \nabla \Phi(x_k)\rangle + \beta^2 L_\Phi \|\nabla\Phi(x_k)\|^2+\beta^2 L_\Phi\|\nabla\Phi(x_k)-\widehat \nabla\Phi(x_k)\|^2.
\end{align*}
For simplicity, let $\mathbb{E}_k = \mathbb{E}(\cdot\,| \,x_k,y_k^D)$. Note that we choose $\beta=\frac{1}{4L_\phi}$. Then, 
taking expectation over the above inequality, we have
\begin{align} \label{eq:jiayou}
\mathbb{E}\Phi(x_{k+1}) \leq &\mathbb{E}\Phi(x_k)  - \beta \mathbb{E}\langle \nabla \Phi(x_k),\mathbb{E}_k\widehat \nabla \Phi(x_k)\rangle + \beta^2 L_\Phi \mathbb{E}\|\nabla\Phi(x_k)\|^2 \nonumber
\\&+\beta^2 L_\Phi\mathbb{E}\|\nabla\Phi(x_k)-\widehat \nabla\Phi(x_k)\|^2 \nonumber
\\\overset{(i)}\leq& \mathbb{E}\Phi(x_k)  +\frac{\beta}{2}\mathbb{E}\|\mathbb{E}_k\widehat \nabla \Phi(x_k)-\nabla \Phi(x_k) \|^2 -\frac{\beta}{4} \mathbb{E}\|\nabla\Phi(x_k)\|^2+\frac{\beta}{4}\mathbb{E}\|\nabla\Phi(x_k)-\widehat \nabla\Phi(x_k)\|^2 \nonumber
\\\overset{(ii)}\leq& \mathbb{E}\Phi(x_k) -\frac{\beta}{4}\mathbb{E}\|\nabla\Phi(x_k)\|^2 +\frac{\beta L^2M^2(1-\eta \mu)^{2Q}}{\mu^2} 
\nonumber
\\&+\frac{\beta}{4}
\left(   \frac{4L^2M^2}{\mu^2D_g} + \Big(\frac{8L^2}{\mu^2} + 2\Big) \frac{M^2}{D_f}+ \frac{16\eta^2  L^4M^2}{\mu^2} \frac{1}{B}+\frac{16 L^2M^2(1-\eta \mu)^{2Q}}{\mu^2}\right) \nonumber
\\&+  \frac{5\beta}{4} \Big( L+\frac{L^2}{\mu} + \frac{M\tau}{\mu}+\frac{LM\rho}{\mu^2}\Big)^2\mathbb{E}\|y_k^D-y^*(x_k)\|^2 
\end{align}
where $(i)$ follows from Cauchy-Schwarz inequality, and $(ii)$ follows from \Cref{le:first_m} and~\Cref{le:variancc}. 
For simplicity,  let
\begin{align}\label{def:nu}
\nu= \frac{5}{4}\Big( L+\frac{L^2}{\mu} + \frac{M\tau}{\mu}+\frac{LM\rho}{\mu^2}\Big)^2.
\end{align} 
Then, applying~\Cref{tra_error} in~\cref{eq:jiayou} and using the definitions of $\omega,\Delta,\lambda$ in~\cref{eq:defs}, we have 
\begin{align}
\mathbb{E}\Phi(x_{k+1})\leq& \mathbb{E}\Phi(x_k) -\frac{\beta}{4} \mathbb{E}\|\nabla\Phi(x_k)\|^2 +\frac{\beta L^2M^2(1-\eta \mu)^{2Q}}{\mu^2} 
\nonumber
\\&+\frac{\beta}{4} \Delta + \beta\nu\lambda^{k} \left( \left(\frac{L-\mu}{L+\mu}\right)^{2D}\|y_0-y^*(x_0)\|^2 + \frac{\sigma^2}{L\mu S}\right) \nonumber
\\&+ \beta\nu\omega\sum_{j=0}^{k-1}\lambda^{k-1-j} \mathbb{E}\|\nabla \Phi(x_j)\|^2 + \frac{\beta\nu(\omega\Delta +\frac{\sigma^2}{L\mu S})}{1-\lambda}.\nonumber
\end{align}
Telescoping the above inequality over $k$ from $0$ to $K-1$ yields
\begin{align*}
\mathbb{E}\Phi(x_{K}) \leq \Phi(x_0) - &\frac{\beta}{4} \sum_{k=0}^{K-1}\mathbb{E}\|\nabla\Phi(x_k)\|^2 + \beta\nu\omega\sum_{k=1}^{K-1}\sum_{j=0}^{k-1}\lambda^{k-1-j} \mathbb{E}\|\nabla \Phi(x_j)\|^2 \nonumber
\\&+\frac{K\beta\Delta}{4} + \Big(\Big(\frac{L-\mu}{L+\mu}\Big)^{2D}\|y_0-y^*(x_0)\|^2 + \frac{\sigma^2}{L\mu S}\Big)\frac{\beta\nu}{1-\lambda} \nonumber
\\&+ \frac{K\beta L^2M^2(1-\eta \mu)^{2Q}}{\mu^2}  + \frac{K\beta\nu(\omega\Delta +\frac{\sigma^2}{L\mu S})}{1-\lambda},
\end{align*}
which, using the fact that $$\sum_{k=1}^{K-1}\sum_{j=0}^{k-1}\lambda^{k-1-j} \mathbb{E}\|\nabla \Phi(x_j)\|^2\leq \left(\sum_{k=0}^{K-1}\lambda^k\right)\sum_{k=0}^{{K-1}}\mathbb{E}\|\nabla\Phi(x_k)\|^2<\frac{1}{1-\lambda}\sum_{k=0}^{{K-1}}\mathbb{E}\|\nabla\Phi(x_k)\|^2,$$ yields
\begin{align}\label{eq:opsac}
 \Big(\frac{1}{4} -&\frac{\nu\omega}{1-\lambda}\Big) \frac{1}{K}\sum_{k=0}^{K-1}\mathbb{E}\|\nabla\Phi(x_k)\|^2 \nonumber
 \\\leq &\frac{\Phi(x_0)-\inf_x\Phi(x)}{\beta K}+\frac{\nu\big((\frac{L-\mu}{L+\mu})^{2D}\|y_0-y^*(x_0)\|^2 + \frac{\sigma^2}{L\mu S}\big)}{K(1-\lambda)}+\frac{\Delta}{4} +  \frac{ L^2M^2(1-\eta \mu)^{2Q}}{\mu^2}  \nonumber
 \\&+ \frac{\nu(\omega\Delta +\frac{\sigma^2}{L\mu S})}{1-\lambda}.
\end{align}
We choose the number $D$ of inner-loop steps as 
$$D\geq \max\bigg\{\frac{\log \big(12+  \frac{48\beta^2L^2}{\mu^2} ( L+\frac{L^2}{\mu} + \frac{M\tau}{\mu}+\frac{LM\rho}{\mu^2})^2\big)}{2\log (\frac{L+\mu}{L-\mu})},\frac{\log \big(\sqrt{\beta}(L+\frac{L^2}{\mu} + \frac{M\tau}{\mu}+\frac{LM\rho}{\mu^2})\big)}{\log (\frac{L+\mu}{L-\mu})}\bigg\}.$$ Then,  since $\beta=\frac{1}{4L_\Phi}$ and $D\geq \frac{\log \big(12+  \frac{48\beta^2L^2}{\mu^2} ( L+\frac{L^2}{\mu} + \frac{M\tau}{\mu}+\frac{LM\rho}{\mu^2})^2\big)}{2\log (\frac{L+\mu}{L-\mu})}$, we have $\lambda\leq \frac{1}{6}$, and  \cref{eq:opsac} is further simplified to 
\begin{align}\label{eq:ephi}
 \Big(\frac{1}{4} -&\frac{6}{5}\nu\omega\Big) \frac{1}{K}\sum_{k=0}^{K-1}\mathbb{E}\|\nabla\Phi(x_k)\|^2 \nonumber
 \\\leq &\frac{\Phi(x_0)-\inf_x\Phi(x)}{\beta K}+\frac{2\nu\big((\frac{L-\mu}{L+\mu})^{2D}\|y_0-y^*(x_0)\|^2 + \frac{\sigma^2}{L\mu S}\big)}{K}+\frac{ \Delta}{4} +  \frac{ L^2M^2(1-\eta \mu)^{2Q}}{\mu^2}  \nonumber
 \\&+ 2\nu\Big(\omega\Delta +\frac{\sigma^2}{L\mu S}\Big).
\end{align}
By the definitions of $\omega$ in~\cref{eq:defs} and $\nu$ in~\cref{def:nu} and $D\geq \frac{\log \big(12+  \frac{48\beta^2L^2}{\mu^2} ( L+\frac{L^2}{\mu} + \frac{M\tau}{\mu}+\frac{LM\rho}{\mu^2})^2\big)}{2\log (\frac{L+\mu}{L-\mu})}$, we have
\begin{align}\label{eq:opccsa}
\nu\omega = &\frac{5\beta^2L^2}{\mu^2} \Big(\frac{L-\mu}{L+\mu}\Big)^{2D}\Big( L+\frac{L^2}{\mu} + \frac{M\tau}{\mu}+\frac{LM\rho}{\mu^2}\Big)^2\nonumber
\\<& \frac{\frac{5\beta^2L^2}{\mu^2} \Big( L+\frac{L^2}{\mu} + \frac{M\tau}{\mu}+\frac{LM\rho}{\mu^2}\Big)^2}{12+  \frac{48\beta^2L^2}{\mu^2} ( L+\frac{L^2}{\mu} + \frac{M\tau}{\mu}+\frac{LM\rho}{\mu^2})^2} \leq \frac{5}{48}.
\end{align}
In addition, since $D>\frac{\log \big(\sqrt{\beta}\big(L+\frac{L^2}{\mu} + \frac{M\tau}{\mu}+\frac{LM\rho}{\mu^2}\big)\big)}{\log (\frac{L+\mu}{L-\mu})}$, we have
\begin{align}\label{eq:opccsa2}
 \nu\Big(\frac{L-\mu}{L+\mu}\Big)^{2D} =  \frac{5}{4}\Big(\frac{L-\mu}{L+\mu}\Big)^{2D}\Big( L+\frac{L^2}{\mu} + \frac{M\tau}{\mu}+\frac{LM\rho}{\mu^2}\Big)^2 <\frac{5}{4\beta}.
\end{align}
Substituting~\cref{eq:opccsa} and~\cref{eq:opccsa2} in~\cref{eq:ephi} yields 
\begin{align*}
\frac{1}{K}\sum_{k=0}^{K-1}\mathbb{E}\|\nabla\Phi(x_k)\|^2 \leq & \frac{8(\Phi(x_0)-\inf_x\Phi(x)+\frac{5}{2}\|y_0-y^*(x_0)\|^2) }{\beta K}+\Big(1+\frac{1}{K}\Big)\frac{16\nu\sigma^2}{L\mu S} \nonumber
\\&+\frac{11}{3} \Delta+  \frac{8L^2M^2}{\mu^2}(1-\eta \mu)^{2Q}, 
\end{align*}
which, in conjunction with~\cref{eq:defs} and \cref{def:nu}, yields~\cref{eq:main_nonconvex} in~\Cref{th:nonconvex}.  


Then, based on \cref{eq:main_nonconvex},  in order to achieve an $\epsilon$-accurate stationary point, i.e., $\mathbb{E}\|\nabla\Phi(\bar x)\|^2\leq \epsilon$ with $\bar x$ chosen from $x_0,...,x_{K-1}$ uniformly at random, it suffices to choose
\begin{align*}
K =&  \frac{32L_\Phi(\Phi(x_0)-\inf_x\Phi(x)+\frac{5}{2}\|y_0-y^*(x_0)\|^2) }{\epsilon}=\mathcal{O}\Big(\frac{\kappa^3}{\epsilon}\Big), D=\Theta(\kappa)\nonumber
\\Q = & \kappa\log \frac{\kappa^2}{\epsilon}, S = \mathcal{O}\Big(\frac{\kappa^5}{\epsilon}\Big),D_g =\mathcal{O}\left(\frac{\kappa^2}{\epsilon}\right), D_f =\mathcal{O}\left(\frac{\kappa^2}{\epsilon}\right), B =\mathcal{O}\left(\frac{\kappa^2}{\epsilon}\right).
\end{align*}
Note that the above choices of $Q$ and $B$ satisfy the condition that $B\geq \frac{1}{Q(1-\eta\mu)^{Q-1}}$ required in~\Cref{prop:hessian}. 

Then, the gradient complexity is given by $\mbox{\normalfont Gc}(F,\epsilon)=KD_f=\mathcal{O}(\kappa^5\epsilon^{-2}), \mbox{\normalfont Gc}(G,\epsilon)=KDS=\mathcal{O}(\kappa^9\epsilon^{-2}).$
In addition, the Jacobian- and Hessian-vector product complexities are given by $ \mbox{\normalfont JV}(G,\epsilon)=KD_g=\mathcal{O}(\kappa^5\epsilon^{-2})$ and 
\begin{align*}
\mbox{\normalfont HV}(G,\epsilon) = K \sum_{j=1}^Q BQ(1-\eta\mu)^{j-1}=\frac{KBQ}{\eta\mu} \leq\mathcal{O}\left( \frac{\kappa^6}{\epsilon^{2}}\log \frac{\kappa^2}{\epsilon} \right).
\end{align*}
Then, the proof is complete. 

\section{Proof of \Cref{th:meta_learning}}
To prove \Cref{th:meta_learning}, we first establish the following lemma to  characterize the estimation variance $\mathbb{E}_\gB\big\|\frac{\partial \gL_{\gD} (\phi_k,\widetilde w^D_k;\gB)}{\partial \phi_k} - \frac{\partial \gL_{\gD} (\phi_k,\widetilde w^D_k)}{\partial \phi_k} \big\|^2$, where $\widetilde w_k^{D}$ is the output of $D$ inner-loop steps of gradient descent at the $k^{th}$ outer loop. \begin{lemma}\label{le:bvarinace}
Suppose Assumptions~\ref{ass:lip} and \ref{high_lip} are satisfied and suppose each task loss $\gL_{\gS_i}(\phi,w_i)$ is $\mu$-strongly-convex w.r.t. $w_i$.  Then, we have
\begin{align*}
\mathbb{E}_\gB\Big\|\frac{\partial \gL_{\gD} (\phi_k, \widetilde w^D_k;\gB)}{\partial \phi_k} - \frac{\partial \gL_{\gD} (\phi_k, \widetilde w^D_k)}{\partial \phi_k} \Big\|^2 \leq \Big(1+\frac{L}{\mu}\Big)^2\frac{M^2}{|\gB|}.
\end{align*}
\end{lemma}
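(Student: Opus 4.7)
The plan is to reduce the bound to a standard variance-of-sample-mean argument, where the main content is showing that each per-task hypergradient is bounded by $(1+L/\mu)M$.

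First, I would write out the batched gradient as an empirical mean of independent per-task contributions:
\[
\frac{\partial \gL_{\gD}(\phi_k,\widetilde w_k^D;\gB)}{\partial \phi_k}=\frac{1}{|\gB|}\sum_{i\in\gB}\frac{\partial \gL_{\gD_i}(\phi_k,w_{k,i}^D)}{\partial \phi_k},
\]
with the full-batch counterpart $\frac{\partial \gL_{\gD}(\phi_k,\widetilde w_k^D)}{\partial \phi_k}$ being exactly its mean over the task distribution (since the tasks in $\gB$ are sampled i.i.d.\ from $\{\gT_1,\dots,\gT_m\}$). By the standard variance-of-mean identity for i.i.d.\ sampling,
\[
\mathbb{E}_\gB\Big\|\tfrac{1}{|\gB|}\sum_{i\in\gB}X_i-\mathbb{E} X\Big\|^2=\tfrac{1}{|\gB|}\mathbb{E}\|X-\mathbb{E} X\|^2\leq \tfrac{1}{|\gB|}\mathbb{E}\|X\|^2,
\]
so it suffices to show that $\big\|\frac{\partial \gL_{\gD_i}(\phi_k,w_{k,i}^D)}{\partial \phi_k}\big\|\leq (1+L/\mu)M$ uniformly in $i$.

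Next, I would apply the chain rule exactly as in the derivation leading to Proposition \ref{deter:gdform}, writing
\[
\tfrac{\partial \gL_{\gD_i}(\phi_k,w_{k,i}^D)}{\partial \phi_k}=\nabla_\phi \gL_{\gD_i}(\phi_k,w_{k,i}^D)+\tfrac{\partial w_{k,i}^D}{\partial \phi_k}\,\nabla_{w_i}\gL_{\gD_i}(\phi_k,w_{k,i}^D).
\]
Both partial gradients are bounded in norm by $M$ by Assumption~\ref{ass:lip}. The remaining ingredient is a uniform bound on the Jacobian $\partial w_{k,i}^D/\partial \phi_k$ generated by $D$ inner-loop GD steps on the strongly-convex task loss $\gL_{\gS_i}(\phi,\cdot)$. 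Using the explicit form from Proposition~\ref{deter:gdform},
\[
\Big\|\tfrac{\partial w_{k,i}^D}{\partial \phi_k}\Big\|\leq \alpha\sum_{t=0}^{D-1}\|\nabla_\phi\nabla_{w_i} \gL_{\gS_i}(\phi_k,w_{k,i}^t)\|\prod_{j=t+1}^{D-1}\|I-\alpha\nabla_{w_i}^2\gL_{\gS_i}(\phi_k,w_{k,i}^j)\|\leq \alpha L\sum_{t=0}^{D-1}(1-\alpha\mu)^{D-1-t}\leq \tfrac{L}{\mu},
\]
where I used Lipschitz smoothness and $\mu$-strong convexity with $\alpha\leq 1/L$ (the same step also appears in \cref{ggpdas}). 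Combining the two bounds gives $\|\partial\gL_{\gD_i}/\partial\phi\|\leq M+(L/\mu)M=(1+L/\mu)M$.

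Plugging this back into the variance-of-mean inequality gives
\[
\mathbb{E}_\gB\Big\|\tfrac{\partial \gL_{\gD}(\phi_k,\widetilde w_k^D;\gB)}{\partial \phi_k}-\tfrac{\partial \gL_{\gD}(\phi_k,\widetilde w_k^D)}{\partial \phi_k}\Big\|^2\leq \tfrac{1}{|\gB|}\big(1+\tfrac{L}{\mu}\big)^2 M^2,
\]
which is the claimed bound. I do not anticipate a serious obstacle here: the only nontrivial piece is the uniform $L/\mu$ bound on the unrolled Jacobian, and this already appears implicitly in the ITD analysis. The main thing to be careful about is asserting genuine i.i.d.\ sampling of the batch $\gB$ so that the variance-of-mean identity applies cleanly; this is given by the problem setup described just before the statement.
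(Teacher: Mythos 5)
Your proposal is correct and follows essentially the same route as the paper's proof: both use the unrolled chain-rule expression from \Cref{deter:gdform} together with the geometric sum $\alpha L\sum_{t=0}^{D-1}(1-\alpha\mu)^{D-1-t}\leq L/\mu$ to bound each per-task hypergradient by $(1+L/\mu)M$, and then apply the i.i.d.\ variance-of-mean identity with $\mathbb{E}\|X-\mathbb{E}X\|^2\leq\mathbb{E}\|X\|^2$. The only cosmetic difference is that you bound the Jacobian $\partial w_{k,i}^D/\partial\phi_k$ separately before multiplying by $M$, whereas the paper bounds the full product term in one step.
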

\begin{proof}
Let $\widetilde w_k^{D}=(w_{1,k}^D,...,w_{m,k}^D)$ be the output of $D$ inner-loop steps of gradient descent at the $k^{th}$ outer loop. 
Using \Cref{deter:gdform}, we have, for task $\gT_i$,
\begin{align}\label{maoxianssa}
\Big\|\frac{\partial \gL_{\gD_i}(\phi_k,w^D_{i,k})}{\partial \phi_k} \Big\|\leq & \|\nabla_\phi  \gL_{\gD_i}(\phi_k,w_{i,k}^D)\| \nonumber
\\+ \Big\|\alpha&\sum_{t=0}^{D-1}\nabla_\phi\nabla_{w_i}  \gL_{\gS_i}(\phi_k,w_{i,k}^{t})\prod_{j=t+1}^{D-1}(I-\alpha  \nabla^2_{w_i}  \gL_{\gS_i}(\phi_k,w_{i,k}^{j}))\nabla_{w_i}  \gL_{\gD_i}(\phi_k,w_{i.k}^D)\Big\| \nonumber
\\\overset{(i)}\leq& M + \alpha LM\sum_{t=0}^{D-1} (1-\alpha\mu)^{D-t-1} = M+\frac{LM}{\mu},
\end{align} 
where $(i)$ follows from Assumptions~\ref{ass:lip} and strong-convexity of $ \gL_{\gS_i}(\phi,\cdot)$. Then, using the definition of $ \gL_{\gD} (\phi,\widetilde w;\gB) = \frac{1}{|\gB|}\sum_{i\in\gB}\gL_{\gD_i}(\phi,w_i)$, we have 
\begin{align}
\mathbb{E}_\gB\Big\|\frac{\partial \gL_{\gD} (\phi_k, \widetilde w^D_k;\gB)}{\partial \phi_k} - \frac{\partial \gL_{\gD} (\phi_k, \widetilde w^D_k)}{\partial \phi_k} \Big\|^2 =& \frac{1}{|\gB|}\mathbb{E}_i\Big\|\frac{\partial \gL_{\gD_i}(\phi_k,w^D_{i,k})}{\partial \phi_k}  - \frac{\partial \gL_{\gD} (\phi_k, \widetilde w^D_k)}{\partial \phi_k} \Big\|^2 \nonumber
\\\overset{(i)}\leq&\frac{1}{|\gB|} \mathbb{E}_i \Big\|\frac{\partial \gL_{\gD_i}(\phi_k,w^D_{i,k})}{\partial \phi_k} \Big\|^2 \nonumber
\\\overset{(ii)}\leq& \Big(1+\frac{L}{\mu}\Big)^2\frac{M^2}{|\gB|}.
\end{align}
where $(i)$ follows from $\mathbb{E}_i \frac{\partial \gL_{\gD_i}(\phi_k,w^D_{i,k})}{\partial \phi_k}=\frac{\partial \gL_{\gD} (\phi_k, \widetilde w^D_k)}{\partial \phi_k}  $ and $(ii)$ follows from~\cref{maoxianssa}. Then, the proof is complete.
\end{proof}

\begin{proof}[\bf Proof of~\Cref{th:meta_learning}] 
Recall $\Phi(\phi):=\gL_{\gD} (\phi,\widetilde w^{*}(\phi))$ be the objective function, and let $\widehat \nabla\Phi(\phi_k) = \frac{\partial \gL_{\gD} (\phi_k, \widetilde w^D_k)}{\partial \phi_k} $.
Using an approach similar to \cref{eq:intimidern}, we have 
\begin{align}\label{eq:starteq}
\Phi(\phi_{k+1}) \leq & \Phi(\phi_k)  + \langle \nabla \Phi(\phi_k), \phi_{k+1}-\phi_k\rangle + \frac{L_\Phi}{2} \|\phi_{k+1}-\phi_k\|^2 \nonumber
\\\leq& \Phi(\phi_k)  - \beta \Big\langle \nabla \Phi(\phi_k), \frac{\partial \gL_{\gD} (\phi_k, \widetilde w^D_k;\gB)}{\partial \phi_k}\Big\rangle + \frac{\beta^2L_\Phi}{2} \Big\| \frac{\partial \gL_{\gD} (\phi_k, \widetilde w^D_k;\gB)}{\partial \phi_k} \Big\|^2.
\end{align}
Taking the expectation of~\cref{eq:starteq}  yields
\begin{align}\label{eq:uijks}
\mathbb{E}\Phi(\phi_{k+1}) \overset{(i)}\leq& \mathbb{E}\Phi(\phi_k)  - \beta \mathbb{E}\big\langle \nabla \Phi(\phi_k),\widehat \nabla\Phi(\phi_k)\big\rangle +\frac{\beta^2L_\Phi}{2}\mathbb{E} \|\widehat \nabla\Phi(\phi_k)\|^2\nonumber
\\&+ \frac{\beta^2L_\Phi}{2}\mathbb{E} \Big\| \widehat \nabla\Phi(\phi_k)-\frac{\partial \gL_{\gD} (\phi_k, \widetilde w^D_k;\gB)}{\partial \phi_k} \Big\|^2  \nonumber
\\\overset{(ii)}\leq &\mathbb{E}\Phi(\phi_k)  - \beta \mathbb{E}\big\langle \nabla \Phi(\phi_k),\widehat \nabla\Phi(\phi_k)\big\rangle +\frac{\beta^2L_\Phi}{2}\mathbb{E} \|\widehat \nabla\Phi(\phi_k)\|^2 +\frac{\beta^2L_\Phi}{2}  \Big(1+\frac{L}{\mu}\Big)^2\frac{M^2}{|\gB|}\nonumber
\\\leq &\mathbb{E}\Phi(\phi_k) -\Big(\frac{\beta}{2}-\beta^2 L_\Phi \Big)\mathbb{E}\| \nabla \Phi(\phi_k)\|^2 +\Big(\frac{\beta}{2}+\beta^2 L_\Phi\Big)\mathbb{E}\|\nabla\Phi(\phi_k)-\widehat \nabla\Phi(\phi_k)\|^2 \nonumber
\\&+\frac{\beta^2L_\Phi}{2}  \Big(1+\frac{L}{\mu}\Big)^2\frac{M^2}{|\gB|},
\end{align}
where $(i)$ follows from $\mathbb{E}_{\gB}\gL_{\gD} (\phi_k, \widetilde w^D_k;\gB)=\gL_{\gD} (\phi_k, \widetilde w^D_k)$ and $(ii)$ follows from~\Cref{le:bvarinace}. Using  \Cref{prop:partialG} in \cref{eq:uijks} and rearranging the terms, we have 
\begin{align*}
\frac{1}{K}\sum_{k=0}^{K-1}&\Big(\frac{1}{2}-\beta L_\Phi \Big)\mathbb{E}\| \nabla \Phi(\phi_k)\|^2   \nonumber
\\\leq& \frac{ \Phi(\phi_0)-\inf_\phi\Phi(\phi)}{\beta K} +3\Big(\frac{1}{2}+\beta L_\Phi\Big)\frac{L^2M^2(1-\alpha\mu)^{2D}}{\mu^2}+\frac{\beta L_\Phi}{2}  \Big(1+\frac{L}{\mu}\Big)^2\frac{M^2}{|\gB|} \nonumber
\\&+ 3\Delta\Big(\frac{1}{2}+\beta L_\Phi\Big)\Big( \frac{L^2(L+\mu)^2}{\mu^2} (1-\alpha\mu)^{D} +\frac{4M^2\left(  \tau\mu+ L\rho \right)^2}{\mu^4}(1-\alpha\mu)^{D-1} \Big),
\end{align*}
where $\Delta=\max_{k}\|\widetilde w^0_k-\widetilde w^*(\phi_k)\|^2<\infty$. Choose the same parameters $\beta,D$ as in~\Cref{th:determin}. Then, we have
\begin{align*}
\frac{1}{K}\sum_{k=0}^{K-1}\mathbb{E}\| \nabla \Phi(\phi_k)\|^2 \leq \frac{16 L_\Phi (\Phi(\phi_0)-\inf_\phi\Phi(\phi))}{K} + \frac{2\epsilon}{3}+ \Big(1+\frac{L}{\mu}\Big)^2\frac{M^2}{8|\gB|}.
\end{align*}
Then, the proof is complete. 
\end{proof}

\end{document}